\newcommand{\norm}[1]{\left\lVert#1\right\rVert}
\newcommand{\expect}[1]{\mathbb{E}\left[{#1}\right]}
\newcommand{\prob}[1]{\mathbb{P}\left[{#1}\right]}
\newcommand{\given}{\; \big\vert \;} 
\newcommand{\bydef}{:=}
\newcommand{\Otilde}[1]{\tilde{O}\left(#1\right)}
\newcommand{\inner}[2]{\langle #1, #2 \rangle}
\newcommand{\at}[2][]{#1|_{#2}}
\newtheorem{mytheorem}{Theorem}
\newtheorem{mylemma}{Lemma}
\newtheorem{mydefinition}{Definition}
\newcommand{\beq}{\begin{equation}}
\newcommand{\eeq}{\end{equation}}
\newcommand{\beqn}{\begin{equation*}}
\newcommand{\eeqn}{\end{equation*}}
\newcommand{\beqa}{\begin{eqnarray}}
\newcommand{\eeqa}{\end{eqnarray}}
\newcommand{\beqan}{\begin{eqnarray*}}
\newcommand{\eeqan}{\end{eqnarray*}}
\newcommand{\argmax}{\mathop{\mathrm{argmax}}}
\newcommand{\argmin}{\mathop{\mathrm{argmin}}}
\begin{document} 

\title{On Kernelized Multi-armed Bandits}
\author{\name{Sayak Ray Chowdhury} \email{srchowdhury@ece.iisc.ernet.in}\\ 
\addr      Electrical Communication Engineering,\\
Indian Institute of Science,\\
Bangalore 560012, India\\
\\
\name{Aditya Gopalan} \email{aditya@ece.iisc.ernet.in}\\ 
\addr     Electrical Communication Engineering,\\
Indian Institute of Science,\\
Bangalore 560012, India\\
}


\maketitle

\begin{abstract} 
We consider the stochastic bandit problem with a continuous set of arms, with the expected reward function over the arms assumed to be fixed but unknown. We provide two new Gaussian process-based algorithms for continuous bandit optimization -- Improved GP-UCB (IGP-UCB) and GP-Thomson sampling (GP-TS), and derive corresponding regret bounds. Specifically, the bounds hold when the expected reward function belongs to the reproducing kernel Hilbert space (RKHS) that naturally corresponds to a Gaussian process kernel used as input by the algorithms. Along the way, we derive a new self-normalized concentration inequality for vector-valued martingales of arbitrary, possibly infinite, dimension. Finally, experimental evaluation and comparisons to existing algorithms on synthetic and real-world environments are carried out that highlight the favorable gains of the proposed strategies in many cases. 
\end{abstract}
\section{Introduction}
\label{sec:Introduction}

Optimization over large domains under uncertainty is an important subproblem arising in a variety of sequential decision making problems, such as dynamic pricing in economics \citep{BesZee09:dynpricinglearning}, reinforcement learning with continuous state/action spaces \citep{kaelbling1996reinforcement, SmaKae00:practicalRLcontinuous}, and power control in wireless communication \citep{ChiHanLanTan08:powercontrol}. A typical feature of such problems is a large, or potentially infinite, domain of decision points or covariates (prices, actions, transmit powers), together with only partial and noisy observability of the associated outcomes (demand, state/reward, communication rate); reward/loss information is revealed only for decisions that are chosen. This often makes it hard to balance exploration and exploitation, as available knowledge must be transferred efficiently from a finite set of observations so far to estimates of the values of infinitely many decisions. A classic case in point is that of the canonical stochastic MAB with finitely many arms, where the effort to optimize scales with the total number of arms or decisions; the effect of this is catastrophic for large or infinite arm sets. 

With suitable structure in the values or rewards of arms, however, the challenge of sequential optimization can be efficiently addressed. Parametric bandits, especially linearly parameterized bandits \citep{RusTsi10:linbandits}, represent a well-studied class of structured decision making settings. Here, every arm corresponds to a known, finite dimensional vector (its feature vector), and its expected reward is assumed to be an unknown linear function of its feature vector. This allows for a large, or even infinite, set of arms all lying in space of finite dimension, say $d$, and a rich line of work gives algorithms that attain sublinear regret with a polynomial dependence on the dimension, e.g., Confidence Ball \citep{dani2008stochastic}, OFUL \citep{abbasi2011improved} (a strengthening of Confidence Ball) and Thompson sampling for linear bandits \citep{agrawal2013thompson}\footnote{Roughly, for rewards bounded in $[-1,1]$, these algorithms achieve optimal regret $\Otilde{d\sqrt{T}}$, where $\Otilde{\cdot}$ hides $\text{polylog}(T)$ factors.} The insight here is that even though the number of arms can be large, the number of unknown parameters (or degrees of freedom) in the problem is really only $d$, which makes it possible to learn about the values of many other arms by playing a single arm. 


A different approach to modelling bandit problems with a continuum of arms is via the framework of Gaussian processes (GPs) \citep{rasmussen2006gaussian}. GPs are a flexible class of nonparametric models for expressing uncertainty over functions on rather general domain sets, which generalize multivariate Gaussian random vectors. GPs allow tractable regression for estimating an unknown function given a set of (noisy) measurements of its values at chosen domain points. The fact that GPs, being distributions on functions, can also help quantify function uncertainty makes it attractive for basing decision making strategies on them. This has been exploited to great advantage to build nonparametric bandit algorithms, such as GP-UCB \citep{srinivas2009gaussian}, GP-EI and GP-PI \citep{hoffman2011portfolio}. In fact, GP models for bandit optimization, in terms of their kernel maps, can be viewed as the parametric linear bandit paradigm pushed to the extreme, where each feature vector associated to an arm can have infinite dimension \footnote{The completion of the linear span of all feature vectors (images of the kernel map) is precisely the reproducing kernel Hilbert space (RKHS) that characterizes the GP.}. 
%


Against this backdrop, our work revisits the problem of bandit optimization with stochastic rewards. Specifically, we consider stochastic multiarmed bandit (MAB) problems with a continuous arm set, and whose (unknown) expected reward function is assumed to lie in a reproducing kernel Hilbert space (RKHS), with bounded RKHS norm -- this effectively enforces smoothness on the function\footnote{Kernels, and their associated RKHSs, }. We make the following contributions-
\begin{itemize}
\item We design a new algorithm -- Improved Gaussian Process-Upper Confidence Bound (IGP-UCB) -- for stochastic bandit optimization. The algorithm can be viewed as a variant of GP-UCB \citep{srinivas2009gaussian}, but uses a significantly reduced confidence interval width resulting in an order-wise improvement in regret compared to GP-UCB. IGP-UCB also shows a markedly improved numerical performance over GP-UCB. 

\item We develop a nonparametric version of Thompson sampling, called Gaussian Process Thompson sampling (GP-TS), and show that enjoys a regret bound of $\Otilde{\gamma_T \sqrt{dT}}$. Here, $T$ is the total time horizon and $\gamma_T$ is a quantity depending on the RKHS containing the reward function. This is, to our knowledge, the first known regret bound for Thompson sampling in the agnostic setup with nonparametric structure. 

\item We prove a new self-normalized concentration inequality for infinite-dimensional vector-valued martingales, which is not only key to the design and analysis of the IGP-UCB and GP-TS algorithms, but also potentially of independent interest. The inequality generalizes a corresponding self-normalized bound for martingales in finite dimension proven by \citet{abbasi2011improved}. 

\item Empirical comparisons of the algorithms developed above, with other GP-based algorithms, are presented, over both synthetic and real-world setups, demonstrating performance improvements of the proposed algorithms, as well as their performance under misspecification.

\end{itemize}

\section{Problem Statement}
\label{sec:Problem-Statement}
We consider the problem of sequentially maximizing a fixed but unknown reward function $f:D\ra \Real$ over a (potentially infinite) set of decisions $D \subset \Real^d$, also called actions or arms. An algorithm for this problem chooses, at each round $t$, an action $x_t \in D$, and subsequently observes a reward $y_t=f(x_t) + \epsilon_t$, which is a noisy version of the function value at $x_t$. The action $x_t$ is chosen causally depending upon the arms played and rewards obtained upto round $t-1$, denoted by the history $\cH_{t-1} = \lbrace (x_s,y_s): s = 1,\ldots,t-1\rbrace$. We assume that the noise sequence $\lbrace\epsilon_t\rbrace_{t=1}^{\infty}$ is conditionally $R$-sub-Gaussian for a fixed constant $R \ge 0$, i.e.,
\beq
\forall t \ge 0,\;\;\forall \lambda \in \Real, \;\; \expect{e^{\lambda \epsilon_t} \given \cF_{t-1}} \le \exp\left(\frac{\lambda^2R^2}{2}\right),
\label{eqn:noise}
\eeq
where $\cF_{t-1}$ is the $\sigma$-algebra generated by the random variables $\lbrace x_s, \epsilon_s\rbrace_{s = 1}^{t-1}$ and $x_t$.%
This is a mild assumption on the noise (it holds, for instance, for distributions bounded in $[-R, R]$) and is standard in the bandit literature \citep{abbasi2011improved,agrawal2013thompson}.
 
\textbf{Regret.} The goal of an algorithm is to maximize its cumulative reward or alternatively minimize its cumulative {\em regret} -- the loss incurred due to not knowing $f$'s maximum point beforehand. Let $x^\star \in \argmax_{x\in D}f(x)$ be a maximum point of $f$ (assuming the maximum is attained). The instantaneous regret incurred at time $t$ is $r_t = f(x^\star)-f(x_t)$, and the cumulative regret in a time horizon $T$ (not necessarily known a priori) is defined to be $R_T = \sum_{t=1}^T r_t$. A sub-linear growth of $R_T$ in $T$ signifies that $R_T/T \ra 0$ as $T\ra \infty$, or vanishing per-round regret. 

\textbf{Regularity Assumptions.} Attaining sub-linear regret is impossible in general for arbitrary reward functions $f$ and domains $D$, and thus some regularity assumptions are in order. In what follows, we assume that $D$ is compact. The smoothness assumption we make on the reward function $f$ is motivated by Gaussian processes\footnote{Other work has also studied continuum-armed bandits with weaker smoothness assumptions such as Lipschitz continuity -- see Related work for details and comparison.} and their associated reproducing kernel Hilbert spaces (RKHSs, see \citet{scholkopf2002learning}). Specifically, we assume that $f$ has small norm in the RKHS of functions $D \to \mathbb{R}$, with positive semi-definite kernel function $k: D \times D \to \mathbb{R}$. This RKHS, denoted by $H_k(D)$, is completely specified by its kernel function $k(\cdot,\cdot)$ and vice-versa, with an inner product $\inner{\cdot}{\cdot}_k$ obeying the reproducing property: $f(x)=\inner{f}{k(x,\cdot)}_k$ for all $f \in H_k(D)$. In other words, the kernel plays the role of delta functions to represent the evaluation map at each point $x \in D$ via the RKHS inner product. The RKHS norm $\norm{f}_k = \sqrt{\inner{f}{f}}_k$ is a measure of the smoothness\footnote{One way to see this is that for every element $g$ in the RKHS, $|g(x) - g(y)| = |\inner{g}{k(x,\cdot) - k(y,\cdot)}| \le \norm{g}_k \norm{k(x,\cdot) - k(y,\cdot)}_k$ by Cauchy-Schwarz.} of $f$, with respect to the kernel function $k$, and satisfies: $f \in H_k(D)$ if and only if $\norm{f}_k < \infty$.

We assume a known bound on the RKHS norm of the unknown target function\footnote{This is analogous to the bound on the weight $\theta$ typically assumed in regret analyses of linear parametric bandits.}: $\norm{f}_k \le B$. Moreover, we assume bounded variance by restricting $k(x,x) \le 1$, for all $x \in D$. Two common kernels that satisfy  bounded variance property are \textit{Squared Exponential} and  \textit{Mat$\acute{e}$rn}, defined as
\beqan
k_{SE}(x,x') &=& \exp\Big(-s^2/2l^2\Big), \\
k_{Mat\acute{e}rn}(x,x') &=& \frac{2^{1-\nu}}{\Gamma(\nu)}\Big(\frac{s\sqrt{2\nu}}{l}\Big)^\nu B_\nu\Big(\frac{s\sqrt{2\nu}}{l}\Big),
\eeqan
where $l > 0$ and $\nu > 0$ are hyperparameters, $s = \norm{x-x'}_2$ encodes the similarity between two points $x,x'\in D$, and $B_\nu(\cdot)$ is the modified Bessel function. Generally the bounded variance property holds for any stationary kernel, i.e. kernels for which $k(x,x')=k(x-x')$ for all $x,x'\in \Real^d$. These assumptions are required to make the regret bounds scale-free and are standard in the literature \citep{agrawal2013thompson}. Instead if $k(x,x) \le c$ or $\norm{f}_k \le cB$, then our regret bounds would increase by a factor of $c$.


\section{Algorithms}
\label{sec:Algorithms}
{\bf Design philosophy.} Both the algorithms we propose use Gaussian likelihood models for observations, and Gaussian process (GP) priors for uncertainty over reward functions. A Gaussian process over $D$, denoted by $GP_D(\mu(\cdot),k(\cdot,\cdot))$, is a collection of random variables $(f(x))_{x \in D}$, one for each $x \in D$, such that every finite sub-collection of random variables $(f(x_i))_{i = 1}^m$ is jointly Gaussian with mean $\expect{f(x_i)} = \mu(x_i)$ and covariance $\expect{(f(x_i)-\mu(x_i))(f(x_j)-\mu(x_j))} = k(x_i, x_j)$, $1 \le i, j \le m$, $m \in \mathbb{N}$. The algorithms use $GP_D(0,v^2k(\cdot,\cdot))$, $v > 0$, as an initial prior distribution for the unknown reward function $f$ over $D$, where $k(\cdot,\cdot)$ is the kernel function associated with the RKHS $H_k(D)$ in which $f$ is assumed to have `small' norm at most $B$. The algorithms also assume that the noise variables $\epsilon_t = y_t-f(x_t)$ are drawn independently, across $t$, from $\cN(0,\lambda v^2)$, with $\lambda \ge 0$. Thus, the prior distribution for each $f(x)$, is assumed to be $\cN(0,v^2k(x,x))$, $x\in D$. Moreover, given a set of sampling points $A_t = (x_1, \ldots, x_t)$ within $D$, it follows under the assumption that the corresponding vector of observed rewards $y_{1:t}= [y_1,\ldots,y_t]^T$ has the multivariate Gaussian distribution $\cN(0,v^2(K_t+\lambda I))$, where $K_t = [k(x,x')]_{x,x'\in A_t}$ is the kernel matrix at time $t$. Then, by the properties of GPs, we have that $y_{1:t}$ and $f(x)$ are jointly Gaussian given $A_t$:
\beqn
\begin{bmatrix}
f(x) \\
y_{1:t} 
\end{bmatrix} \sim \cN\left(0,\begin{bmatrix}
v^2k(x,x) & v^2k_t(x)^T\\
v^2k_t(x) & v^2(K_t+\lambda I)
\end{bmatrix}\right),
\eeqn
where $k_t(x) = [k(x_1,x),\ldots, k(x_t,x)]^T$. Therefore conditioned on the history $\cH_t$, the posterior distribution over $f$ is  $GP_D(\mu_t(\cdot),v^2k_t(\cdot,\cdot))$, where
\beqa
\mu_t(x) &=& k_t(x)^T(K_t + \lambda I)^{-1}y_{1:t}, \label{eqn:mean update}\\
k_t(x,x') &=& k(x,x') - k_t(x)^T(K_t + \lambda I)^{-1} k_t(x'),\label{eqn:cov update}\\
\sigma_t^2(x) &=& k_t(x,x).\label{eqn:sd update}
\eeqa
Thus for every $x \in D$, the posterior distribution of $f(x)$, given $\cH_t$, is $\cN(\mu_t(x),v^2\sigma_t^2(x))$.

\textit{\textbf{Remark.}} Note that the GP prior and Gaussian likelihood model described above is only an aid to algorithm design, and has nothing to do with the actual reward distribution or noise model as in the problem statement (Section \ref{sec:Problem-Statement}). The reward function $f$ is a fixed, unknown, member of the RKHS $H_k(D)$, and the true sequence of noise variables $\epsilon_t$ is allowed to be a  conditionally $R$-sub-Gaussian martingale difference sequence (Equation \ref{eqn:noise}). In general, thus, this represents a misspecified prior and noise model, also termed the {\em agnostic} setting by \citet{srinivas2009gaussian}.

The proposed algorithms, to follow, assume the knowledge of only the sub-Gaussianity parameter $R$, kernel function $k$ and upper bound $B$ on the RKHS norm of $f$. Note that $v, \lambda$ are free parameters (possibly time-dependent) that can be set specific to the algorithm.


\subsection{Improved GP-UCB (IGP-UCB) Algorithm}
\label{subesc:UCB}
We introduce the IGP-UCB algorithm (Algorithm \ref{algo:ucb}), that uses a combination of the current posterior mean $\mu_{t-1}(x)$ and standard deviation $v \sigma_{t-1}(x)$ to (a) construct an upper confidence bound (UCB) envelope for the actual function $f$ over $D$, and (b) choose an action to maximize it. Specifically it chooses, at each round $t$, the action
\beq
x_t=\argmax_{x\in D}\mu_{t-1}(x)+\beta_t\sigma_{t-1}(x),
\eeq
\label{eqn:UCB-rule}
with the scale parameter $v$ set to be $1$.
Such a rule trades off exploration (picking points with high uncertainty $\sigma_{t-1}(x)$) with exploitation (picking points with high reward $\mu_{t-1}(x)$), with $\beta_t=B+ R\sqrt{2(\gamma_{t-1}+1+\ln(1/\delta))}$ being the parameter governing the tradeoff, which we later show is related to the width of the confidence interval for $f$ at round $t$. $\delta \in (0,1)$ is a free {\em confidence} parameter used by the algorithm, and $\gamma_t$ is the \textit{maximum information gain} at time $t$, defined as:
\beqn
\gamma_t \bydef \max_{A \subset D : \abs{A}=t} I(y_A;f_A).
\eeqn
Here, $I(y_A;f_A)$ denotes the \textit{mutual information} between $f_A = [f(x)]_{x\in A}$ and $y_A = f_A + \epsilon_A$, where $\epsilon_A \sim \cN(0,\lambda v^2 I)$ and quantifies the reduction in uncertainty about $f$ after observing $y_A$ at points $A \subset D$. $\gamma_t$ is a problem dependent quantity and can be found given the knowledge of domain $D$ and kernel function $k$. For a compact subset $D$ of $\Real^d$, $\gamma_T$ is $O((\ln T)^{d+1})$ and $O(T^{d(d+1)/(2\nu+d(d+1))}\ln T)$, respectively, for the Squared Exponential and Mat$\acute{e}$rn kernels \citep{srinivas2009gaussian}, depending only polylogarithmically on the time $T$. 

\begin{algorithm}
\renewcommand\thealgorithm{1}
\caption{Improved-GP-UCB (IGP-UCB)}\label{algo:ucb}
\begin{algorithmic}
\STATE \textbf{Input:} Prior $GP(0,k)$, parameters $B$, $R$, $\lambda$, $\delta$.
\FOR{t = 1, 2, 3 \ldots T}
\STATE Set $\beta_t=B+ R\sqrt{2(\gamma_{t-1}+ 1+\ln(1/\delta))}$.
\STATE Choose $x_t = \argmax\limits_{x\in D} \mu_{t-1}(x) + \beta_t\sigma_{t-1}(x)$.
\STATE Observe reward $y_t = f(x_t)+\epsilon_t$.
\STATE Perform update to get $\mu_t$ and $\sigma_t$ using \ref{eqn:mean update}, \ref{eqn:cov update} and \ref{eqn:sd update}.
\ENDFOR
\end{algorithmic}
\addtocounter{algorithm}{-1}
\end{algorithm}
\textit{\textbf{Discussion.}} 
\cite{srinivas2009gaussian} have proposed the GP-UCB algorithm, and \citet{valko2013finite} the KernelUCB algorithm, for sequentially optimizing reward functions lying in the RKHS $H_k(D)$. 
Both algorithms play an arm at time $t$ using the rule: $x_t=\argmax_{x\in D}\mu_{t-1}(x)+\tilde{\beta}_t\sigma_{t-1}(x)$.
GP-UCB uses the exploration parameter $\tilde{\beta}_t = \sqrt{2B^2+300\gamma_{t-1}\ln^3(t/\delta)}$, with $\lambda$ set to $\sigma^2$, where $\sigma$ is additionally assumed to be a known, uniform (i.e., almost-sure) upper bound on all noise variables $\epsilon_t$ \citep[Theorem $3$]{srinivas2009gaussian}. Compared to GP-UCB, IGP-UCB (Algorithm \ref{algo:ucb}) reduces the width of the confidence interval by a factor roughly $O(\ln^{3/2}t)$ at every round $t$, and, as we will see, this small but critical adjustment leads to much better theoretical and empirical performance compared to GP-UCB. In KernelUCB, $\tilde{\beta}_t$ is set as $\eta/\lambda^{1/2}$, where $\eta$ is the exploration parameter and $\lambda$ is the regularization constant. Thus IGP-UCB can be viewed as a special case of KernelUCB where $\eta=\beta_t$.




\subsection{Gaussian Process Thompson Sampling (GP-TS)}

\label{subsection:TS}

Our second algorithm, GP-TS (Algorithm \ref{algo:ts}), inspired by the success of Thompson sampling for standard and parametric bandits \citep{agrawal2012analysis,kaufmann2012thompson,gopalan2014thompson,agrawal2013thompson}, uses the time-varying scale parameter $v_t = B + R\sqrt{2(\gamma_{t-1}+ 1+ \ln(2/\delta))}$ and operates as follows. At each round $t$, GP-TS samples a random function $f_t(\cdot)$ from the GP with mean function $\mu_{t-1}(\cdot)$ and covariance function $v_t^2k_{t-1}(\cdot,\cdot)$. Next, it chooses a decision set $D_t \subset D$, and plays the arm $x_t \in D_t$ that maximizes $f_t$\footnote{If $D_t = D$ for all $t$, then this is simply {\em exact} Thompson sampling. For technical reasons, however, our regret bound is valid when $D_t$ is chosen as a suitable discretization of $D$, so we include $D_t$ as an algorithmic parameter.}. We call it GP-Thompson-Sampling as it falls under the general framework of Thompson Sampling, i.e., (a) assume a prior on the underlying parameters of the reward distribution, (b)  play the arm according to the prior probability that it is optimal, and (c) observe the outcome and update the prior. However, note that the prior is nonparametric in this case. 
\begin{algorithm}
\renewcommand\thealgorithm{2}
\caption{GP-Thompson-Sampling (GP-TS)}\label{algo:ts}
\begin{algorithmic}
\STATE \textbf{Input:} Prior $GP(0,k)$, parameters $B$, $R$, $\lambda$, $\delta$.
\FOR{t = 1, 2, 3 \ldots,}
\STATE Set $v_t = B + R\sqrt{2(\gamma_{t-1}+ 1+ \ln(2/\delta))}$.
\STATE Sample $f_t(\cdot)$ from $GP_D(\mu_{t-1}(\cdot),v_t^2k_{t-1}(\cdot,\cdot))$.
\STATE Choose the current decision set $D_t \subset D$. 
\STATE Choose $x_t = \argmax\limits_{x\in D_t} f_t(x)$.
\STATE Observe reward $y_t = f(x_t)+\epsilon_t$.
\STATE Perform update to get $\mu_t$ and $k_t$ using \ref{eqn:mean update} and \ref{eqn:cov update}.
\ENDFOR
\end{algorithmic}
\addtocounter{algorithm}{-2}
\end{algorithm}

\section{Main Results}
\label{sec:main-results}
We begin by presenting two key concentration inequalities which are essential in bounding the regret of the proposed algorithms.
\begin{mytheorem}
\label{thm:self-normalized-bound} 
Let $\lbrace x_t \rbrace_{t=1}^{\infty}$ be an $\Real^d$-valued discrete time stochastic process predictable with respect to the filtration $\lbrace \cF_t \rbrace_{t=0}^{\infty}$, i.e., $x_t$ is $\cF_{t-1}$-measurable $\forall t \ge 1$.  Let $\lbrace\epsilon_t\rbrace_{t=1}^{\infty}$ be a real-valued stochastic process such that for some $R \ge 0$ and for all $t \ge 1$, $\epsilon_t$ is (a) $\cF_t$-measurable, and (b) $R$-sub-Gaussian conditionally on $\cF_{t-1}$. Let $k: \Real^d \times \Real^d \to \Real$ be a symmetric, positive-semidefinite kernel, and let $0 < \delta \le 1$.
For a given $\eta > 0$, with probability at least $1 -\delta$, the following holds simultaneously over all $t \ge 0$: 
\begin{equation}
\label{eqn:thmpart1}
\norm{\epsilon_{1:t}}_{((K_t+ \eta I)^{-1} + I)^{-1}}^2 \le  2R^2\ln \frac{\sqrt{\det((1+\eta)I+K_t)}}{\delta}.  
\end{equation}
(Here, $K_t$ denotes the $t \times t$ matrix $K_t(i,j) = k(x_i, x_j)$, $1 \le i, j \le t$ and for any $x \in \Real^t$ and $A \in \Real^{t\times t}$, $\norm{x}_A \bydef \sqrt{x^TAx}$).
Moreover, if $K_t$ is positive definite $\forall t \ge 1$ with probability 1, then the conclusion above holds with $\eta = 0$. 
\end{mytheorem}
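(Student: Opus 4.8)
The plan is to run the self-normalized ``method of mixtures'' of \citet{abbasi2011improved}, but lifted from a finite-dimensional parameter space to the (possibly infinite-dimensional) feature space of $k$, and with the regularizer $\eta$ inserted not as a ridge penalty but through an auxiliary set of coordinates. Concretely, fix a feature representation $\phi:\Real^d\to\mathcal H$ of $k$ into a real Hilbert space $\mathcal H$ (e.g.\ $\mathcal H=H_k(\Real^d)$, $\phi(x)=k(x,\cdot)$), and work in the augmented space $\bar{\mathcal H}=\mathcal H\oplus\ell^2$ with $\bar\phi_s:=(\phi(x_s),\sqrt\eta\,e_s)$. Then $\inner{\bar\phi_i}{\bar\phi_j}=k(x_i,x_j)+\eta\,\mathbf 1\{i=j\}$, so the Gram matrix of $(\bar\phi_1,\dots,\bar\phi_t)$ is exactly $\bar K_t:=K_t+\eta I$, which is positive definite for every $\eta>0$. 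This augmentation is the device that makes $K_t+\eta I$ (rather than $\Phi_t^\top\Phi_t$) appear, and it sidesteps ever having to invert an operator on $\bar{\mathcal H}$.

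For $\xi\in\bar{\mathcal H}$ set $M_0^\xi=1$ and $M_t^\xi:=\exp\!\big(\sum_{s=1}^{t}[\tfrac{\epsilon_s}{R}\inner{\xi}{\bar\phi_s}-\tfrac12\inner{\xi}{\bar\phi_s}^2]\big)$. Since $x_t$ is $\cF_{t-1}$-measurable, $\inner{\xi}{\bar\phi_t}$ is $\cF_{t-1}$-measurable, and conditional $R$-sub-Gaussianity of $\epsilon_t$ gives $\expect{e^{(\epsilon_t/R)\inner{\xi}{\bar\phi_t}}\given\cF_{t-1}}\le e^{\tfrac12\inner{\xi}{\bar\phi_t}^2}$; hence $(M_t^\xi)_t$ is a nonnegative supermartingale with $\expect{M_t^\xi}\le1$. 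Introduce a standard Gaussian ``prior'' over $\xi$ independent of the data and put $M_t:=\expect{M_t^\xi\given\cF_t}$ (the Gaussian integral, which is $\cF_t$-measurable); by Tonelli and the tower property $M_t$ is again a nonnegative $\cF_t$-supermartingale with $M_0=1$. Conditionally on the data, $(\inner{\xi}{\bar\phi_1},\dots,\inner{\xi}{\bar\phi_t})\sim\cN(0,\bar K_t)$, so $M_t$ is an explicit Gaussian integral over $\Real^t$; completing the square (the only identity needed is $\bar K_t(I+\bar K_t^{-1})=I+\bar K_t$) yields
\begin{align*}
M_t &= \frac{1}{\sqrt{\det(I+\bar K_t)}}\exp\!\Big(\tfrac{1}{2R^2}\norm{\epsilon_{1:t}}_{(I+\bar K_t^{-1})^{-1}}^{2}\Big) \\
&= \frac{1}{\sqrt{\det((1+\eta)I+K_t)}}\exp\!\Big(\tfrac{1}{2R^2}\norm{\epsilon_{1:t}}_{((K_t+\eta I)^{-1}+I)^{-1}}^{2}\Big).
\end{align*}
Applying the maximal inequality for nonnegative supermartingales, $\prob{\sup_{t\ge0}M_t\ge 1/\delta}\le\delta$ (using the stopped-supermartingale refinement of \citet{abbasi2011improved} to make the uniformity over $t$ fully rigorous); on the complement $M_t<1/\delta$ for all $t\ge0$, which upon taking logarithms is precisely \eqref{eqn:thmpart1}.

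For the $\eta=0$ assertion: if $K_t\succ0$ almost surely for every $t\ge1$, rerun the construction with $\bar\phi_s=\phi(x_s)$ (no augmenting coordinates), so $\bar K_t=K_t$ is a.s.\ invertible, the Gaussian integral over $\Real^t$ still converges, and the same computation gives $M_t=\det(I+K_t)^{-1/2}\exp\!\big(\tfrac{1}{2R^2}\norm{\epsilon_{1:t}}_{(I+K_t^{-1})^{-1}}^{2}\big)$; Ville's inequality then delivers \eqref{eqn:thmpart1} with $\eta=0$ (for $t=0$ both sides are trivial since $\delta\le1$).

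The main obstacle is legitimizing the Gaussian ``prior on $\xi$'' when $\mathcal H$ (and the $\ell^2$ augmenting factor) is infinite-dimensional, where no standard Gaussian measure exists. The resolution is that $M_t$ depends on $\xi$ only through the countable family of linear functionals $(\inner{\xi}{\bar\phi_s})_{s\ge1}$, whose law is the consistent centered Gaussian family with covariance $(\inner{\bar\phi_i}{\bar\phi_j})_{i,j}$; one realizes this through an auxiliary isonormal Gaussian process (Kolmogorov extension), or — operationally — truncates $\phi$ to its first $n$ coordinates, invokes the finite-dimensional argument of \citet{abbasi2011improved}, and passes to the limit $n\to\infty$, at which point every quantity in the display above is already written in terms of the Gram matrix $K_t$ and so the limit is immediate. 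A secondary, routine point is checking that the $\cF_t$-measurability and the supermartingale property of $M_t$ survive the mixing, which is the tower-property computation sketched above (using that the auxiliary randomness is independent of $\cF_\infty$).
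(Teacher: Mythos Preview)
Your proposal is correct and follows essentially the same route as the paper. The paper's ``double mixture''---drawing $h\sim GP_D(0,k)$ and an independent i.i.d.\ sequence $N_t\sim\cN(0,\eta)$, then mixing $M_t^{h,N}$ over $(h,N)$---is exactly your isonormal process on the augmented space $\mathcal H\oplus\ell^2$: the family $(h(x_s)+N_s)_s$ has the same joint law as your $(\inner{\xi}{\bar\phi_s})_s$, with Gram matrix $K_t+\eta I$, and the Gaussian integral, the completing-the-square computation, and the stopping-time/Ville step are identical in both write-ups.
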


Theorem \ref{thm:self-normalized-bound} represents a self-normalized concentration inequality: the `size' of the increasing-length sequence $\{\epsilon_t\}_t$ of martingale differences is normalized by the growing quantity $((K_t+ \eta I)^{-1} + I)^{-1}$ that explicitly depends on the sequence. The following lemma helps provide an alternative, abstract, view of the self-normalized process of Theorem \ref{thm:self-normalized-bound}, based on the feature space representation induced by a kernel. 
\begin{mylemma}
\label{lem:selfnorm}
Let $k: \Real^d \times \Real^d \to \Real$ be a symmetric, positive-semidefinite kernel, with associated feature map $\phi: \Real^d \to H_k$ and the reproducing kernel Hilbert space\footnote{Such a pair $(\phi, H_k)$ always exists, see e.g., \citet{rasmussen2006gaussian}.} (RKHS) $H_k$. Letting $S_t = \sum_{s=1}^{t}\epsilon_s\phi(x_s)$ and the (possibly infinite dimensional) matrix\footnote{More formally, $V_t: H_k \to H_k$ is the linear operator defined by $V_t(z) = z + \sum_{s=1}^t \phi(x_s) \inner{\phi(x_s)}{z}$ $\forall z \in H_k$.} $V_t = I+\sum_{s=1}^{t}\phi(x_s)\phi(x_s)^T$, we have, whenever $K_t$ is positive definite, that
\beqn
\norm{\epsilon_{1:t}}_{\left( K_t^{-1} + I \right)^{-1}} = \norm{S_t}_{V_t^{-1}}, 
\eeqn
where $ \norm{S_t}_{V_t^{-1}} := \norm{V_t^{-1/2} S_t}_{H_k}$ denotes the norm of $V_t^{-1/2} S_t$ in the RKHS $H_k$. 
\end{mylemma}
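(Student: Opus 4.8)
The plan is to realize the left-hand quadratic form as an inner product in $H_k$ via the ``data operator'' associated with the observed points $x_1,\ldots,x_t$, and then collapse it with a push-through (Woodbury-type) identity. Define the bounded linear operator $\Phi_t : H_k \to \Real^t$ by $\Phi_t z = \big(\inner{\phi(x_s)}{z}_{H_k}\big)_{s=1}^t$; its adjoint is $\Phi_t^* : \Real^t \to H_k$, $\Phi_t^* a = \sum_{s=1}^t a_s \phi(x_s)$. By the reproducing property $\inner{\phi(x_i)}{\phi(x_j)}_{H_k} = k(x_i,x_j)$, so $\Phi_t \Phi_t^* = K_t$ as a $t\times t$ matrix, $\Phi_t^* \Phi_t = \sum_{s=1}^t \phi(x_s)\phi(x_s)^T$, and consequently $S_t = \Phi_t^* \epsilon_{1:t}$ and $V_t = I + \Phi_t^* \Phi_t$.

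First I would record that $V_t$ is self-adjoint and satisfies $V_t \succeq I$ (since $\Phi_t^*\Phi_t \succeq 0$), hence is boundedly invertible, and $V_t^{-1/2}$ is well defined by the functional calculus; this makes $\norm{S_t}_{V_t^{-1}}^2 = \inner{V_t^{-1}S_t}{S_t}_{H_k} = \epsilon_{1:t}^T \Phi_t V_t^{-1} \Phi_t^* \epsilon_{1:t}$ legitimate. Next I would establish the operator identity $\Phi_t V_t^{-1} = (I + K_t)^{-1} \Phi_t$: start from the elementary equality $(I + \Phi_t \Phi_t^*)\Phi_t = \Phi_t + \Phi_t\Phi_t^*\Phi_t = \Phi_t(I + \Phi_t^*\Phi_t)$, then left-multiply by $(I+\Phi_t\Phi_t^*)^{-1} = (I+K_t)^{-1}$ (which exists since $I+K_t \succ 0$) and right-multiply by $V_t^{-1} = (I+\Phi_t^*\Phi_t)^{-1}$. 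Combining gives $\Phi_t V_t^{-1}\Phi_t^* = (I+K_t)^{-1}\Phi_t\Phi_t^* = (I+K_t)^{-1}K_t$, and since $K_t \succ 0$ this equals $\big(K_t^{-1}(I+K_t)\big)^{-1} = (K_t^{-1}+I)^{-1}$. Therefore $\norm{S_t}_{V_t^{-1}}^2 = \epsilon_{1:t}^T (K_t^{-1}+I)^{-1}\epsilon_{1:t} = \norm{\epsilon_{1:t}}_{(K_t^{-1}+I)^{-1}}^2$, and taking square roots concludes the proof.

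The only step requiring genuine care — and the one most prone to sloppiness — is the infinite-dimensional bookkeeping: justifying that $V_t$ is boundedly invertible and that the push-through manipulation is a valid operator identity on $H_k$, not merely a formal matrix identity. This is harmless because $\Phi_t^*\Phi_t$ has rank at most $t$: writing $H_k = L_t \oplus L_t^{\perp}$ with $L_t = \mathrm{span}\{\phi(x_1),\ldots,\phi(x_t)\}$, $V_t$ acts as the identity on $L_t^{\perp}$ and as a genuine finite, symmetric, positive-definite matrix on $L_t$, while $S_t \in L_t$; so one may equivalently carry out the entire computation inside the finite-dimensional subspace $L_t$, where every inverse above is an ordinary matrix inverse. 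I would include a sentence to this effect to keep the argument rigorous without belaboring operator theory.
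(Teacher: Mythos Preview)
Your proposal is correct and follows essentially the same route as the paper: both introduce the data operator $\Phi_t$, write $S_t = \Phi_t^* \epsilon_{1:t}$, $V_t = I + \Phi_t^*\Phi_t$, $K_t = \Phi_t\Phi_t^*$, and then collapse $\Phi_t V_t^{-1}\Phi_t^*$ to $(I+K_t)^{-1}K_t = (K_t^{-1}+I)^{-1}$ via the push-through identity. Your version is in fact more careful than the paper's, which treats $\Phi_t$ as a ``$t\times\infty$ matrix'' and manipulates inverses formally; your remark about restricting to the finite-dimensional span $L_t$ is a welcome addition that the paper omits.
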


Observe that $S_t$ is $\cF_{t}$-measurable and also $\expect{S_t\given \cF_{t-1}} = S_{t-1}$. The process $\lbrace S_t \rbrace_{t\ge0}$ is thus a martingale with values\footnote{We ignore issues of measurability here.} in the RKHS $H$, which can possibly be infinite-dimensional, and moreover, whose deviation is measured by the norm weighted by $V_t^{-1}$, which is itself derived from $S_t$. Theorem \ref{thm:self-normalized-bound} represents the kernelized generalization of the finite-dimensional result of \citet{abbasi2011improved}, and we recover their result under the special case of a linear kernel: $\phi(x)=x$ for all $x \in \Real^d$.

We remark that when $\phi$ is a mapping to a finite-dimensional Hilbert space, the argument of \citet[Theorem 1]{abbasi2011improved} can be lifted to establish Theorem \ref{thm:self-normalized-bound}, but it breaks down in the generalized, infinite-dimensional RKHS setting, as the self-normalized bound in their paper has an explicit, growing dependence on the feature dimension. Specifically, the method of mixtures \citep{de2009self} or Laplace method, as dubbed by \citet{maillard2016self} (Lemma 5.2), fails to hold in infinite dimension. The primary reason for this is that the mixture distribution for finite dimensional spaces can be chosen independently of time, but in a nonparametric setup like ours, where the dimensionality of the self-normalizing factor $\left( K_t^{-1} + I \right)^{-1}$ itself grows with time, the use of (random) stopping times, precludes using time-dependent mixtures. We get around this difficulty by applying a novel `double mixture' construction, in which a pair of mixtures on (a) the space of real-valued functions on $\Real^d$, i.e., the support of a Gaussian process, and (b) on real sequences is simultaneously used to obtain a more general result, of potentially independent interest (see Section \ref{sec:Key-Techniques} and the appendix for details).  
%

Our next result shows that how the posterior mean is concentrated around the unknown reward function $f$.
\begin{mytheorem}
\label{thm:true-function-bound}
Under the same hypotheses as those of Theorem \ref{thm:self-normalized-bound}, let $D \subset \Real^d$, and $f:D \to \Real$ be a member of the RKHS of real-valued functions on $D$ with kernel $k$, with RKHS norm bounded by $B$. Then, with probability at least $1-\delta$, the following holds for all $x \in D$ and $t \ge 1$:
\beqan
\abs{\mu_{t-1}(x)-f(x)}\le \Big(B + R\sqrt{2(\gamma_{t-1}+1+ \ln(1/\delta))}\Big)\sigma_{t-1}(x),
\eeqan
where $\gamma_{t-1}$ is the maximum information gain after $t-1$ rounds and $\mu_{t-1}(x)$, $\sigma^2_{t-1}(x)$ are mean and variance of posterior distribution defined as in Equation \ref{eqn:mean update}, \ref{eqn:cov update}, \ref{eqn:sd update}, with $\lambda$ set to $1+\eta$ and $\eta=2/T$.
\end{mytheorem}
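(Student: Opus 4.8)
The plan is to pass to the feature-space representation of the kernel, split the posterior-mean error $\mu_{t-1}(x)-f(x)$ into a deterministic ``regularization bias'' term and a stochastic ``noise'' term, bound the first by $B\,\sigma_{t-1}(x)$ using $\norm{f}_k\le B$, and bound the second by $R\sqrt{2(\gamma_{t-1}+1+\ln(1/\delta))}\,\sigma_{t-1}(x)$ by feeding $S_{t-1}=\sum_{s=1}^{t-1}\epsilon_s\phi(x_s)$ into the self-normalized inequality of Theorem~\ref{thm:self-normalized-bound}. Since everything downstream of the $1-\delta$ event of Theorem~\ref{thm:self-normalized-bound} is deterministic in the observed data, the final bound will automatically hold simultaneously over all $x\in D$ and $t\ge1$.

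Concretely, I would fix $\lambda=1+\eta$ with $\eta=2/T$, take a feature map $\phi:\Real^d\to H_k$ for $k$, extend $f$ to an element of $H_k$ of norm at most $B$ (a standard fact about restrictions of an RKHS to a subset) and henceforth treat $f$ as that element, let $\Phi_{t-1}$ be the operator stacking $\phi(x_1),\dots,\phi(x_{t-1})$ so that $K_{t-1}=\Phi_{t-1}\Phi_{t-1}^T$, and write $V_{t-1}=\lambda I+\Phi_{t-1}^T\Phi_{t-1}=\lambda I+\sum_{s=1}^{t-1}\phi(x_s)\phi(x_s)^T$. Standard kernel-trick identities (push-through/Woodbury) then give $\mu_{t-1}(x)=\inner{\phi(x)}{V_{t-1}^{-1}\Phi_{t-1}^T y_{1:t-1}}_{H_k}$ and $\sigma_{t-1}^2(x)=\lambda\,\inner{\phi(x)}{V_{t-1}^{-1}\phi(x)}_{H_k}$, and substituting $y_{1:t-1}=\Phi_{t-1}f+\epsilon_{1:t-1}$ together with the trivial identity $f=V_{t-1}^{-1}V_{t-1}f=\lambda V_{t-1}^{-1}f+V_{t-1}^{-1}\Phi_{t-1}^T\Phi_{t-1}f$ produces the decomposition
\beqn
\mu_{t-1}(x)-f(x)=\inner{\phi(x)}{V_{t-1}^{-1}S_{t-1}}_{H_k}-\lambda\,\inner{\phi(x)}{V_{t-1}^{-1}f}_{H_k}.
\eeqn
Cauchy--Schwarz in the positive-definite $V_{t-1}^{-1}$ inner product, combined with $\norm{\phi(x)}_{V_{t-1}^{-1}}=\sigma_{t-1}(x)/\sqrt\lambda$ and $\norm{f}_{V_{t-1}^{-1}}^2\le\lambda^{-1}\norm{f}_{H_k}^2\le B^2/\lambda$ (since $V_{t-1}\succeq\lambda I$), bounds the bias term by $\lambda\cdot(\sigma_{t-1}(x)/\sqrt\lambda)\cdot(B/\sqrt\lambda)=B\,\sigma_{t-1}(x)$.

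It then remains to bound $\norm{S_{t-1}}_{V_{t-1}^{-1}}$. Using $\Phi_{t-1}(\lambda I+\Phi_{t-1}^T\Phi_{t-1})^{-1}\Phi_{t-1}^T=K_{t-1}(\lambda I+K_{t-1})^{-1}$ and $(M^{-1}+I)^{-1}=M(I+M)^{-1}$ with $M=K_{t-1}+\eta I$, I would show
\beqn
\norm{S_{t-1}}_{V_{t-1}^{-1}}^2=\epsilon_{1:t-1}^T K_{t-1}\big((1+\eta)I+K_{t-1}\big)^{-1}\epsilon_{1:t-1}\le\norm{\epsilon_{1:t-1}}_{((K_{t-1}+\eta I)^{-1}+I)^{-1}}^2,
\eeqn
after which Theorem~\ref{thm:self-normalized-bound} (invoked with this $\eta$) bounds the right-hand side, uniformly in $t$ and on the $1-\delta$ event, by $2R^2\ln\!\big(\sqrt{\det((1+\eta)I+K_{t-1})}/\delta\big)$. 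Writing $\det((1+\eta)I+K_{t-1})=(1+\eta)^{t-1}\det(I+(1+\eta)^{-1}K_{t-1})$, noting that $\ln\det(I+(1+\eta)^{-1}K_{t-1})=2\,I(y_{A};f_{A})\le2\gamma_{t-1}$ for $A=\{x_1,\dots,x_{t-1}\}$ (the GP mutual information with noise variance $1+\eta$), and using $(t-1)\ln(1+\eta)\le T\ln(1+2/T)\le2$, gives $\norm{S_{t-1}}_{V_{t-1}^{-1}}^2\le2R^2(\gamma_{t-1}+1+\ln(1/\delta))$. Finally, since $\lambda\ge1$ we have $\norm{\phi(x)}_{V_{t-1}^{-1}}=\sigma_{t-1}(x)/\sqrt\lambda\le\sigma_{t-1}(x)$, so Cauchy--Schwarz on the noise term yields $\le R\sqrt{2(\gamma_{t-1}+1+\ln(1/\delta))}\,\sigma_{t-1}(x)$, and adding the bias bound completes the proof.

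The step I expect to be the main obstacle is the bookkeeping around the ridge parameter. Theorem~\ref{thm:self-normalized-bound} and Lemma~\ref{lem:selfnorm} are stated with unit ridge, whereas the posterior uses ridge $\lambda=1+\eta$, so one has to verify both that $\norm{S_{t-1}}_{V_{t-1}^{-1}}^2$ is dominated by the precise self-normalized functional $\norm{\epsilon_{1:t-1}}_{((K_{t-1}+\eta I)^{-1}+I)^{-1}}^2$ controlled by Theorem~\ref{thm:self-normalized-bound}, and that the surplus factor $(1+\eta)^{t-1}$ inside $\det((1+\eta)I+K_{t-1})$ is exactly what the choice $\eta=2/T$ converts into the harmless additive ``$+1$'' under the square root in $\beta_t$. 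A secondary, mostly notational, point is that the kernel-trick/push-through identities and the identification $\norm{\phi(x)}_{V_{t-1}^{-1}}=\sigma_{t-1}(x)/\sqrt\lambda$ have to be read for a possibly infinite-dimensional $H_k$; they hold as identities of bounded linear operators and present no genuine difficulty.
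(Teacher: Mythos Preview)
Your proposal is correct and follows essentially the same approach as the paper: split $\mu_{t-1}(x)-f(x)$ into a bias term and a noise term via the feature-space representation, bound the bias by $B\,\sigma_{t-1}(x)$ via Cauchy--Schwarz in the $V_{t-1}^{-1}$ inner product, dominate the noise's self-normalized functional $\epsilon_{1:t-1}^T K_{t-1}((1+\eta)I+K_{t-1})^{-1}\epsilon_{1:t-1}$ by $\norm{\epsilon_{1:t-1}}_{((K_{t-1}+\eta I)^{-1}+I)^{-1}}^2$, apply Theorem~\ref{thm:self-normalized-bound}, and absorb the $(1+\eta)^{t-1}$ determinant factor into the ``$+1$'' via $\eta=2/T$. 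The ridge bookkeeping you flag as the main obstacle is exactly the point the paper handles, and you have it right.
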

Theorem $3.5$ of \citet{maillard2016self} states a similar result on the estimation of the unknown reward function from the RKHS.
We improve upon it in the sense that the confidence bound in Theorem \ref{thm:true-function-bound} is {\em simultaneous} over all $x \in D$, while the bound has been shown only for a single, fixed $x$ in the Kernel Least-squares setting. %
We are able to achieve this result by virtue of Theorem \ref{thm:self-normalized-bound}.


\subsection{Regret Bound of IGP-UCB}
\label{subsubsec:regret-UCB}
\begin{mytheorem}
\label{thm:regret-bound-UCB}
Let $\delta\in(0,1)$, $\norm{f}_k \le B$ and $\epsilon_t$ is conditionally $R$-sub-Gaussian. Running IGP-UCB for a function $f$ lying in the RKHS $H_k(D)$, we obtain a regret bound of $O\Big(\sqrt{T}(B\sqrt{\gamma_T}+\gamma_T)\Big)$ with high probability. More precisely, with probability at least $1-\delta$, $R_T = O\Big(B\sqrt{T\gamma_T}+\sqrt{T\gamma_T(\gamma_T+\ln(1/\delta))}\Big)$.
\end{mytheorem}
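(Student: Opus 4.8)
The plan is the standard optimism-based regret decomposition, with Theorem~\ref{thm:true-function-bound} supplying the confidence region and the information-gain machinery of \citet{srinivas2009gaussian} controlling the aggregated posterior uncertainty. I would instantiate the GP model as prescribed by Theorem~\ref{thm:true-function-bound}, i.e.\ take $v=1$ and $\lambda = 1+\eta$ with $\eta = 2/T$, and work throughout on the event $\mathcal{E}$ of probability at least $1-\delta$ on which $|\mu_{t-1}(x)-f(x)| \le \beta_t\,\sigma_{t-1}(x)$ holds simultaneously for all $x\in D$ and all $t\ge 1$, where $\beta_t = B + R\sqrt{2(\gamma_{t-1}+1+\ln(1/\delta))}$ is exactly the exploration parameter used by IGP-UCB. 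On this event no further union bound is needed, since the guarantee of Theorem~\ref{thm:true-function-bound} is already uniform in $x$ and $t$.

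First I would bound the instantaneous regret on $\mathcal{E}$. Because $x_t$ maximizes the index $\mu_{t-1}(\cdot)+\beta_t\sigma_{t-1}(\cdot)$, we have $f(x^\star) \le \mu_{t-1}(x^\star)+\beta_t\sigma_{t-1}(x^\star) \le \mu_{t-1}(x_t)+\beta_t\sigma_{t-1}(x_t)$, while $f(x_t) \ge \mu_{t-1}(x_t)-\beta_t\sigma_{t-1}(x_t)$; subtracting gives $r_t = f(x^\star)-f(x_t) \le 2\beta_t\sigma_{t-1}(x_t)$. Since $\gamma_{t-1}$, hence $\beta_t$, is nondecreasing in $t$, summing yields $R_T \le 2\beta_T\sum_{t=1}^T\sigma_{t-1}(x_t) \le 2\beta_T\sqrt{T\sum_{t=1}^T\sigma_{t-1}^2(x_t)}$ by Cauchy--Schwarz.

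Next I would relate $\sum_{t=1}^T\sigma_{t-1}^2(x_t)$ to $\gamma_T$. Using $\sigma_{t-1}^2(x_t) \le k(x_t,x_t) \le 1$ together with monotonicity of $s\mapsto s/\ln(1+s)$, each term satisfies $\sigma_{t-1}^2(x_t) \le \frac{\lambda^{-1}}{\ln(1+\lambda^{-1})}\,\lambda\ln\!\big(1+\lambda^{-1}\sigma_{t-1}^2(x_t)\big)$; summing and recognizing, as in \citet{srinivas2009gaussian}, that $\tfrac12\sum_{t=1}^T\ln(1+\lambda^{-1}\sigma_{t-1}^2(x_t)) = I(y_{1:T};f_{1:T}) \le \gamma_T$ gives $\sum_{t=1}^T\sigma_{t-1}^2(x_t) \le \frac{2}{\ln(1+\lambda^{-1})}\gamma_T$. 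For $\lambda = 1+2/T \in [1,3]$ the factor $1/\ln(1+\lambda^{-1})$ is an absolute constant, so $R_T \le C\,\beta_T\sqrt{T\gamma_T} = C\big(B+R\sqrt{2(\gamma_{T-1}+1+\ln(1/\delta))}\big)\sqrt{T\gamma_T} = O\big(B\sqrt{T\gamma_T}+\sqrt{T\gamma_T(\gamma_T+\ln(1/\delta))}\big)$, which is the claimed bound.

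Every step here is elementary once Theorem~\ref{thm:true-function-bound} is available, so the real weight of the proof sits in the already-established concentration results (Theorem~\ref{thm:self-normalized-bound} and its consequence Theorem~\ref{thm:true-function-bound}). The only genuinely delicate point internal to this argument is the choice $\eta = 2/T$: it makes the regularization $\lambda$ horizon-dependent, which is what keeps both $\ln(1+\lambda^{-1})$ bounded below by a constant here and $\det((1+\eta)I+K_t)$ controlled in Theorem~\ref{thm:self-normalized-bound}. I would flag this explicitly, noting that if $T$ is not known in advance a standard doubling schedule over geometric epochs recovers the same bound up to constant factors.
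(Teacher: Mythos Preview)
Your proposal is correct and follows essentially the same route as the paper: invoke Theorem~\ref{thm:true-function-bound} to get $r_t \le 2\beta_t\sigma_{t-1}(x_t)$ via the usual optimism argument, then bound $\sum_{t}\sigma_{t-1}(x_t)$ by $O(\sqrt{T\gamma_T})$ through Cauchy--Schwarz and the information-gain identity (the paper packages this step as Lemma~\ref{lem:bound-sum-sd}, using the slightly cruder inequality $\alpha \le 2\ln(1+\alpha)$ for $\alpha\in[0,1]$ in place of your monotonicity-of-$s/\ln(1+s)$ argument, but the difference is only in constants). Your remark about the horizon dependence of $\eta=2/T$ and the doubling fix is an observation the paper does not make, but it does not affect the correctness of the bound as stated.
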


\textbf{Improvement over GP-UCB.} \citet{srinivas2009gaussian}, in the course of analyzing the GP-UCB algorithm, show that when the reward function lies in the RKHS $H_k(D)$, GP-UCB obtains regret $O\Big(\sqrt{T}(B\sqrt{\gamma_T}+\gamma_T\ln^{3/2}(T))\Big)$ with high probability (see Theorem $3$ therein for the exact bound). Furthermore, they assume that the noise $\epsilon_t$ is {\em uniformly bounded} by $\sigma$, while our sub-Gaussianity assumption (see Equation \ref{eqn:noise}) is slightly more general, and we are able to obtain a $O(\ln^{3/2}T)$ multiplicative factor improvement in the final regret bound thanks to the new self-normalized inequality (Theorem \ref{thm:self-normalized-bound}). Additionally, in our numerical experiments, we observe a significantly improved performance of IGP-UCB over GP-UCB, both on synthetically generated function, and on real-world sensor measurement data (see Section \ref{sec:Experiments}).

\textbf{Comparison with KernelUCB.} \citet{valko2013finite} show that the cumulative regret of KernelUCB is $\tilde{O}(\sqrt{\tilde{d}T})$, where $\tilde{d}$, defined as the \textit{effective dimension}, measures, in a sense, the number of principal directions over which the projection of the data in the RKHS is spread. They show that $\tilde{d}$ is at least as good as $\gamma_T$, precisely $\gamma_T \ge \Omega(\tilde{d}\ln \ln T)$ and thus the regret bound of KernelUCB is roughly $\tilde{O}(\sqrt{T\gamma_T})$, which is $\sqrt{\gamma_T}$ factor better than IGP-UCB. However, KernelUCB requires the number of actions to be \textit{finite}, so the regret bound is not applicable for infinite or continuum action spaces. 
 
\subsection{Regret Bound of GP-TS}
\label{subsubsec:regret-TS}
For technical reasons, we will analyze the following version of GP-TS. At each round $t$, the decision set used by GP-TS is restricted to be a {unique} discretization $D_t$ of $D$ with the property that $\abs{f(x)-f([x]_t)} \le 1/t^2$ for all $x \in D$, where $[x]_t$ is the closest point to $x$ in $D_t$. This can always be achieved by choosing a compact and convex domain $D \subset [0,r]^d$ and discretization $D_t$ with size $\abs{D_t}=(BLrdt^2)^d$ such that $\norm{x-[x]_t}_1 \le rd/BLrdt^2 = 1/BLt^2$ for all $x \in D$, where $L= \sup\limits_{x\in D}\sup\limits_{j\in [d]}\Big(\frac{\partial^2 k(p,q)}{\partial p_j \partial q_j}\at{p=q=x}\Big)^{1/2}$. This implies, for every $x \in D$,
\beq
\abs{f(x)-f([x]_t)} \le\norm{f}_k L \norm{x-[x]_t}_1 \le 1/t^2,
\label{eqn:lipschitz}
\eeq
as any $f\in H_k(D)$ is Lipschitz continuous with constant $\norm{f}_k L$ \citep[Lemma $1$]{de2012exponential}.
\begin{mytheorem}[Regret bound for GP-TS]
\label{thm:regret-bound-TS}
Let $\delta\in(0,1)$, $D \subset [0,r]^d$ be compact and convex, $\norm{f}_k \le B$ and $\{\epsilon_t\}_t$ a conditionally $R$-sub-Gaussian sequence. Running GP-TS for a function $f$ lying in the RKHS $H_k(D)$ and with decision sets $D_t$ chosen as above, with probability at least $1-\delta$, the regret of GP-TS satisfies $R_T=O\Big(\sqrt{(\gamma_T+\ln(2/\delta))d\ln (BdT)} \Big(\sqrt{T\gamma_T}+B\sqrt{T\ln(2/\delta)}\Big)\Big)$. 
\end{mytheorem}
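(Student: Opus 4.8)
The plan is to run the standard Thompson-sampling regret argument (in the style of Agrawal--Goyal and Russo--Van Roy), now in the agnostic kernel setting, with Theorem~\ref{thm:true-function-bound} supplying the confidence bound that replaces the usual linear-bandit concentration. Write $\Delta_t(x):=f(x^\star)-f(x)$, so that $r_t=\Delta_t(x_t)$. I condition on two events. (i) By Theorem~\ref{thm:true-function-bound} at confidence level $\delta/2$ (which is why GP-TS carries $\ln(2/\delta)$ inside $v_t$), with probability $\ge 1-\delta/2$, for all $t\ge1$ and all $x\in D$, $|\mu_{t-1}(x)-f(x)|\le v_t\sigma_{t-1}(x)$. (ii) Conditioned on $\cH_{t-1}$, the sampled function satisfies $f_t(x)\sim\cN(\mu_{t-1}(x),v_t^2\sigma_{t-1}^2(x))$, so a Gaussian tail bound and a union bound over the \emph{finite} set $D_t$ give, with conditional probability $\ge 1-\delta/(6t^2)$, $|f_t(x)-\mu_{t-1}(x)|\le v_tc_t\sigma_{t-1}(x)$ for all $x\in D_t$, where $c_t=\sqrt{2\ln(3|D_t|t^2/\delta)}$; since $|D_t|=(BLrdt^2)^d$ we have $c_t=O(\sqrt{d\ln(BdT)})$. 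Summing failure probabilities over $t$ and adjusting $\delta$ by constants, all of the deterministic inequalities below hold on a single event of probability $\ge 1-\delta$.

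Set $w_t:=v_t(1+c_t)$ (which dominates both the true-function and the sampled-function deviations) and call $x\in D_t$ \emph{saturated} if $\Delta_t(x)>w_t\sigma_{t-1}(x)+2t^{-2}$. By the discretization guarantee $|f(x^\star)-f([x^\star]_t)|\le t^{-2}$ from \eqref{eqn:lipschitz}, the point $[x^\star]_t$ is unsaturated. The crux is an \emph{anti-concentration} step: because $v_t$ is exactly the confidence radius of Theorem~\ref{thm:true-function-bound}, on event (i) the truth lies within one ``sampling standard deviation'' of $\mu_{t-1}$, so a Gaussian lower-tail estimate yields a universal constant $p>0$ with $\prob{f_t([x^\star]_t)\ge f(x^\star)-t^{-2}\mid\cH_{t-1}}\ge p$ (the degenerate case $\sigma_{t-1}([x^\star]_t)=0$ being immediate). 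Combined with event (ii), this forces $\prob{x_t\text{ unsaturated}\mid\cH_{t-1}}\ge p/2$ for $t$ large. Letting $\bar x_t$ be the unsaturated point of smallest $\sigma_{t-1}$ and using $\sigma_{t-1}(x_t)\ge\sigma_{t-1}(\bar x_t)$ whenever $x_t$ is unsaturated, one gets the deterministic (given $\cH_{t-1}$) bound $\sigma_{t-1}(\bar x_t)\le\frac{2}{p}\expect{\sigma_{t-1}(x_t)\mid\cH_{t-1}}$.

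For the per-round regret, since $x_t$ maximizes $f_t$ over $D_t$ we have $f_t(x_t)\ge f_t(\bar x_t)$; chaining $f(x_t)\ge f_t(x_t)-w_t\sigma_{t-1}(x_t)$, $f_t(\bar x_t)\ge f(\bar x_t)-w_t\sigma_{t-1}(\bar x_t)$, and the unsaturation of $\bar x_t$ gives, on the good events, $r_t\le w_t\big(2\sigma_{t-1}(\bar x_t)+\sigma_{t-1}(x_t)\big)+2t^{-2}$, hence $\expect{r_t\mid\cH_{t-1}}\le Cw_t\expect{\sigma_{t-1}(x_t)\mid\cH_{t-1}}+2t^{-2}$ for a constant $C$ depending on $p$. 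Two applications of Azuma--Hoeffding --- to $r_t-\expect{r_t\mid\cH_{t-1}}$ (increments $\le 2B$, since $|f(x)|\le\norm{f}_k\sqrt{k(x,x)}\le B$) and to $w_t(\expect{\sigma_{t-1}(x_t)\mid\cH_{t-1}}-\sigma_{t-1}(x_t))$ (increments $\le w_T$) --- yield $R_T\le C\sum_{t\le T}w_t\sigma_{t-1}(x_t)+O\big((B+w_T)\sqrt{T\ln(1/\delta)}\big)$. Finally, $\sigma_{t-1}^2(x_t)\le\frac{\lambda}{\ln(1+\lambda^{-1})}\ln(1+\lambda^{-1}\sigma_{t-1}^2(x_t))$ (with $\lambda=1+2/T$ and $\sigma_{t-1}^2\le k(x,x)\le1$) together with $\sum_t\ln(1+\lambda^{-1}\sigma_{t-1}^2(x_t))=2I(y_{1:T};f_{1:T})\le 2\gamma_T$ gives $\sum_t\sigma_{t-1}^2(x_t)=O(\gamma_T)$, so $\sum_t\sigma_{t-1}(x_t)\le\sqrt{T\sum_t\sigma_{t-1}^2(x_t)}=O(\sqrt{T\gamma_T})$ by Cauchy--Schwarz; since $w_t\le w_T$, this yields $R_T=O\big(w_T\sqrt{T\gamma_T}+w_T\sqrt{T\ln(1/\delta)}\big)$. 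Substituting $w_T\le v_T(1+c_T)$ with $v_T=B+R\sqrt{2(\gamma_T+1+\ln(2/\delta))}$ and $c_T=O(\sqrt{d\ln(BdT)})$, and peeling off the $B$-term of $v_T$, produces the claimed bound.

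The main obstacle is the anti-concentration step together with the tension between the inflation factor and the target rate: $v_t$ must be large enough that Theorem~\ref{thm:true-function-bound} places $f$ inside the one-standard-deviation band of $f_t$'s law (so $f_t$ overshoots $f(x^\star)$ with constant probability), yet it is precisely $w_t=v_t(1+c_t)$ that multiplies $\sqrt{T\gamma_T}$, so it cannot be taken any larger --- the choice $v_t=B+R\sqrt{2(\gamma_{t-1}+1+\ln(2/\delta))}$ is exactly the confidence radius of Theorem~\ref{thm:true-function-bound}, which is what makes both requirements meet, and is the reason the new self-normalized inequality is needed. A secondary technical point is the discretization: the $O(t^{-2})$ rounding error must be dominated by the $\sigma_{t-1}$-scale of the confidence intervals, which is what forces $|D_t|$ to be polynomial in $t,B,L,r,d$ and injects the $\sqrt{d\ln(BdT)}$ factor through $c_t$; one must also verify that the anti-concentration constant $p$ is not spoiled by the rounding, which is where the degenerate-$\sigma$ case is disposed of, and carry out the routine conditioning/union-bound bookkeeping that lets the per-round bounds, proved on the good event, feed into the Azuma arguments.
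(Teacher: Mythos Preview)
Your proposal is correct and follows essentially the same Agrawal--Goyal argument as the paper: the same saturated/unsaturated split with threshold $v_t(1+c_t)\sigma_{t-1}$, the same Gaussian anti-concentration to lower-bound the probability of playing an unsaturated arm (the paper gets $p=1/(4e\sqrt{\pi})$), the same chaining through $\bar x_t$ to reduce $r_t$ to a multiple of $\sigma_{t-1}(x_t)$, and the same $\sum_t\sigma_{t-1}(x_t)=O(\sqrt{T\gamma_T})$ via information gain. The paper packages the martingale step as a single supermartingale $\bar r_t-\tfrac{11c_t}{p}\sigma_{t-1}(x_t)-\tfrac{2B+1}{t^2}$ and applies Azuma once, which is equivalent to your two Azuma applications combined.

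One bookkeeping point: you treat the sampled-function concentration (your event~(ii)) as a uniform-in-$t$ good event and then take $\expect{\cdot\mid\cH_{t-1}}$, but event~(ii) at round $t$ depends on $f_t$ and is not $\cH_{t-1}$-measurable, so mixing it with conditional expectations and Azuma is delicate. The paper avoids this by handling $E^{f_t}(t)$ \emph{in conditional expectation} (its failure contributes a $2B/t^2$ term) and truncating only on the $\cH_{t-1}$-measurable event $E^f(t)$ via $\bar r_t=r_t\,\mathbb{1}[E^f(t)]$; this is the standard fix and does not change your rate.
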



\textbf{Comparison with IGP-UCB.}
Observe that regret scaling of GP-TS is $\tilde{O}(\gamma_T\sqrt{dT})$ which is a multiplicative $\sqrt{d}$ factor away from the bound $\tilde{O}(\gamma_T\sqrt{T})$ obtained for IGP-UCB and similar behavior is reflected in our simulations on synthetic data. The additional multiplicative factor of $\sqrt{d\ln(BdT)}$ in the regret bound of GP-TS is essentially a consequence of discretization. How to remove this extra logarithmic dependency, and make the analysis discretization-independent, remains an open question.

\textit{\textbf{Remark.}} The regret bound for GP-TS is inferior compared to IGP-UCB in terms of the dependency on dimension $d$, but to the best of our knowledge, Theorem \ref{thm:regret-bound-TS} is the first (frequentist) regret guarantee of Thompson Sampling in the {agnostic, non-parametric setting of infinite action spaces}.


\textbf{Linear Models and a Matching Lower Bound.}
If the mean rewards are perfectly linear, i.e. if there exists a $\theta \in \Real^d$ such that $f(x)=\theta^Tx$ for all $x\in D$, then we are in the parametric setup, and one way of casting this in the kernelized framework is by using the {\em linear kernel} $k(x,x')=x^Tx'$.
For this kernel, $\gamma_T =O(d\ln T)$, and the regret scaling of IGP-UCB is $\tilde{O}(d\sqrt{T})$ and that of GP-TS is $\tilde{O}(d^{3/2}\sqrt{T})$, which recovers the regret bounds of their linear, parametric  analogues OFUL \citep{abbasi2011improved} and Linear Thompson sampling \citep{agrawal2013thompson}, respectively. Moreover, in this case $\tilde{d} = d$, thus the regret of IGP-UCB is $\sqrt{d}$ factor away from that of KernelUCB. But the regret bound of KernelUCB also depends on the number of arms $N$, and if $N$ is exponential in $d$, then it also suffers $\tilde{O}(d\sqrt{T})$ regret. 
We remark that a similar $O(\ln^{3/2}T)$ factor improvement, as obtained by IGP-UCB over GP-UCB, was achieved in the linear parametric setting by \cite{abbasi2011improved} in the OFUL algorithm, over its predecessor ConfidenceBall \citep{dani2008stochastic}. Finally we see that the for linear bandit problem with infinitely many actions, IGP-UCB attains the information theoretic lower bound of $\Omega(d\sqrt{T})$ (see \cite{dani2008stochastic}), but GP-TS is a factor of $\sqrt{d}$ away from it.




\section{Overview of Techniques}
\label{sec:Key-Techniques}
We briefly outline here the key arguments for all the theorems in Section \ref{sec:main-results}. Formal proofs and auxiliary lemmas required are given in the appendix. 

\textbf{Proof Sketch for Theorem \ref{thm:self-normalized-bound}.} It is convenient to assume that $K_t$, the induced kernel matrix at time $t$, is invertible, since this is where the crux of the argument lies. 
First we show that for any function $g:D \ra \Real$ and for all $t \ge 0$, thanks to the sub-Gaussian property (\ref{eqn:noise}), the process $\left\{M_t^g \bydef \exp(\epsilon_{1:t}^Tg_{1:t}-\frac{1}{2}\norm{g_{1:t}}^2)\right\}_t$ is a non-negative super-martingale with respect to the filtration $\cF_t$, where $g_{1:t} \bydef [g(x_1),\ldots,g(x_t)]^T$ and in fact satisfies $\expect{M_t^g}\le 1$. The chief difficulty is to handle the behavior of $M_t$ at a (random) stopping time, since the sizes of quantities such as $\epsilon_{1:t}$ at the stopping time will be random. 

We next construct a mixture martingale $M_t$ by mixing $M_t^g$ over $g$ drawn from an independent $GP_D(0, k)$ Gaussian process, which is a measure over a large space of functions, i.e., the space $\Real^D$. Then, by a change of measure argument, we show that this induces a mixture distribution which is essentially $\cN(0,K_t)$ over {\em any} desired finite dimension $t$, thus obtaining $M_t = \frac{1}{\sqrt{\det(I+K_t)}}\exp\Big(\frac{1}{2}\norm{\epsilon_{1:t}}_{(I+K_t^{-1})^{-1}}^2\Big)$. Next from the fact that $\expect{M_\tau}\le 1$ and from Markov's inequality, for any $\delta \in (0,1)$, we obtain
\beqn
\prob{\norm{\epsilon_{1:\tau}}_{(K_\tau^{-1} + I)^{-1}}^2 > 2\ln\Big(\sqrt{\det(I+K_\tau)}/\delta\Big)} \le \delta.
\eeqn
Finally, we lift this bound simultaneously for all $t$ through a standard stopping time construction as in \citet{abbasi2011improved}. 


\textbf{Proof Sketch for Theorem \ref{thm:true-function-bound}.} Here we sketch the special case of $\eta=0$, i.e. $\lambda =1$. Observe that $\abs{\mu_t(x)-f(x)}$ is upper bounded by sum of two terms, $P \bydef \abs{k_t(x)^T(K_t+ I)^{-1}\eps_{1:t}}$ and $Q \bydef \abs{k_t(x)^T(K_t+I)^{-1}f_{1:t} - f(x)}$. Now we observe that $\sigma_t^2(x)=\phi(x)^T(\Phi_t^T\Phi_t + I)^{-1}\phi(x)$ and use this observation to show that $P=\abs{\phi(x)^T(\Phi_t^T\Phi_t +  I)^{-1}\Phi_t^T\eps_{1:t}}$ and $Q = \abs{\phi(x)^T(\Phi_t^T\Phi_t+ I)^{-1}f}$, which are in turn upper bounded by the terms $\sigma_t(x)\norm{S_t}_{V_t^{-1}}$ and $\norm{f}_k\sigma_t(x)$ respectively. Then the result follows using Theorem \ref{thm:self-normalized-bound}, along with the assumption that $\norm{f}_k \le B$ and the fact that $\frac{1}{2}\ln(\det(I+K_t)) \le \gamma_t$ almost surely (see Lemma \ref{lem:info-theoretic-results}) when $K_t$ is invertible. 

\textbf{Proof Sketch for Theorem \ref{thm:regret-bound-UCB}.} First from Theorem \ref{thm:true-function-bound} and the choice of $x_t$ in Algorithm \ref{algo:ucb}, we show that the instantaneous regret $r_t$ at round $t$ is upper bounded by  $2\beta_t\sigma_{t-1}(x_t)$ with probability at least $1-\delta$. Then the result follows by essentially upper bounding the term $\sum_{t=1}^{T}\sigma_{t-1}(x_t)$ by $O(\sqrt{T\gamma_T})$ (Lemma \ref{lem:bound-sum-sd} in the appendix).

\textbf{Proof Sketch for Theorem \ref{thm:regret-bound-TS}.} 
We follow a similar approach given in \citet{agrawal2013thompson} to prove the regret bound of GP-TS. First observe that from our choice of discretization sets $D_t$, the instantaneous regret at round $t$ is given by $r_t = f(x^\star)-f([x^\star]_t)+f([x^\star]_t)-f(x_t)
\le \frac{1}{t^2} + \Delta_t(x_t)$, where $\Delta_t(x)\bydef f([x^\star]_t)-f(x)$ and $[x^\star]_t$ is the closest point to $x^\star$ in $D_t$. Now
at each round $t$, after an action is chosen, our algorithm improves the confidence about true reward function $f$, via an update of $\mu_t(\cdot)$ and $k_t(\cdot,\cdot)$. However, if we play a suboptimal arm, the regret suffered can be much higher than the improvement of our knowledge. To overcome this difficulty, at any round $t$, we divide the arms (in the present discretization $D_t$) into two groups: \textit{saturated arms}, $S_t$, defined as those with $\Delta_t(x) > c_t\sigma_{t-1}(x)$ and \textit{unsaturated} otherwise, where $c_t$ is an appropriate constant
(see Definition \ref{def:constants}, \ref{def:saturated-arms}). The idea is to show that the probability of playing a saturated arm is small and then bound the regret of playing an unsaturated arm in terms of standard deviation. This is useful because the inequality $\sum_{t=1}^{T}\sigma_{t-1}(x_t)\le O(\sqrt{T\gamma_T})$ (Lemma \ref{lem:bound-sum-sd}) allows us to bound the total regret due to unsaturated arms.

First we lower bound the probability of playing an unsaturated arm at round $t$. We define a filtration $\cF^{'}_{t-1}$ as the history $\cH_{t-1}$ up to round $t-1$ and prove that for ``most" (in a high probability sense) $\cF^{'}_{t-1}$, $\prob{x_t \in D_t\setminus S_t\given \cF^{'}_{t-1}} \ge p-1/t^2$, where $p=1/4e\sqrt{\pi}$ ( Lemma \ref{lem:prob-playing-saturated-arms}). This observation, along with concentration bounds for $f_t(x)$ and $f(x)$ (Lemma \ref{lem:event-concentration}) and ``smoothness" of $f$ (Equation \ref{eqn:lipschitz}), allow us to show that the expected regret at round $t$ is upper bounded in terms of $\sigma_{t-1}(x_t)$, i.e. in terms of regret due to playing an unsaturated arm. More precisely, we show that for ``most" $\cF^{'}_{t-1}$, $\expect{r_t\given \cF^{'}_{t-1}} \le \frac{11c_t}{p}\expect{\sigma_{t-1}(x_t)\given \cF^{'}_{t-1}} + \frac{2B+1}{t^2}$ (Lemma \ref{lem:bound-on-instantaneous-regret}), and use it to prove that $X_t \simeq r_t-\frac{11c_t}{p}\sigma_{t-1}(x_t) - \frac{2B+1}{t^2};t\ge 1$ is a super-martingale difference sequence adapted to filtration $\lbrace\cF^{'}_t\rbrace_{t\ge 1}$ (Lemma \ref{lem:supermartingale}). Now, using the Azuma-Hoeffding inequality (Lemma \ref{lem:total-regret}), along with the bound on $\sum_{t=1}^{T}\sigma_{t-1}(x_t)$, we obtain the desired high-probability regret bound.

\section{Experiments}
\label{sec:Experiments}
In this section we provide numerical results on both synthetically generated test functions and functions from real-world data. We compare GP-UCB, IGP-UCB and GP-TS with GP-EI and GP-PI\footnote{GP-EI and PI perform similarly and thus are not separately distinguishable in the plots.}. 

{\bf Synthetic Test Functions.} %
\label{subsec:synthetic}
We use the following procedure to generate test functions from the RKHS. First we sample $100$ points uniformly from the interval $[0,1]$ and use that as our decision set. Then we compute a kernel matrix $K$ on those points and draw reward vector $y \sim \cN(0,K)$. Finally, the mean of the resulting posterior distribution is used as the test function $f$. We set noise parameter $R^2$ to be $1\%$ of function range and use $\lambda = R^2$. We used Squared Exponential kernel with lengthscale parameter $l = 0.2$ and Mat$\acute{e}$rn kernel with parameters $\nu=2.5,l=0.2$. Parameters $\beta_t,\tilde{\beta}_t,v_t$ of IGP-UCB, GP-UCB and GP-TS are chosen as given in Section \ref{sec:Algorithms}, with $\delta=0.1, B^2= f^TKf$ and $\gamma_t$ set according to theoretical upper bounds for corresponding kernels. We run each algorithm for $T=30000$ iterations, over $25$ independent trials (samples from the RKHS) and plot the average cumulative regret along with standard deviations (Figure \ref{fig:synthetic_plot_rkhs}). We see a significant improvement in the performance of IGP-UCB over GP-UCB. In fact IGP-UCB performs the best in the pool of competitors, while GP-TS also fares reasonably well compared to GP-UCB and GP-EI/GP-PI.

We next sample $25$ random functions from the $GP(0,K)$ and perform the same experiment (Figure \ref{fig:synthetic_plot_gp}) for both kernels with exactly same set of parameters. The relative performance of all methods is similar to that in the previous experiment, which is the arguably harder ``agnostic'' setting of a fixed, unknown target function. 
\begin{figure}[t!]
\centering
\subfigure[]{\includegraphics[height=2.5in,width=3in]{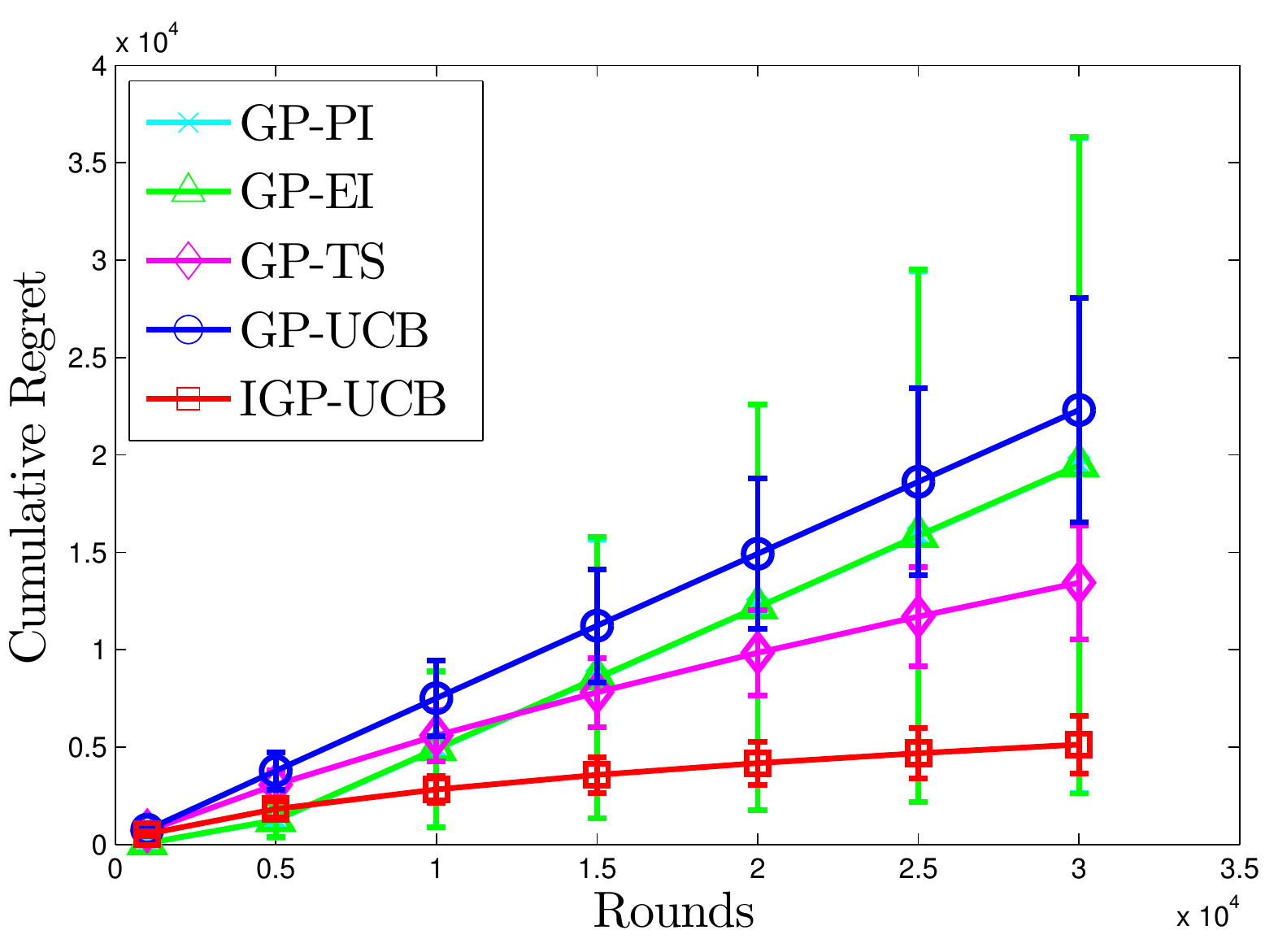}}\subfigure[]{\includegraphics[height=2.5in,width=3in]{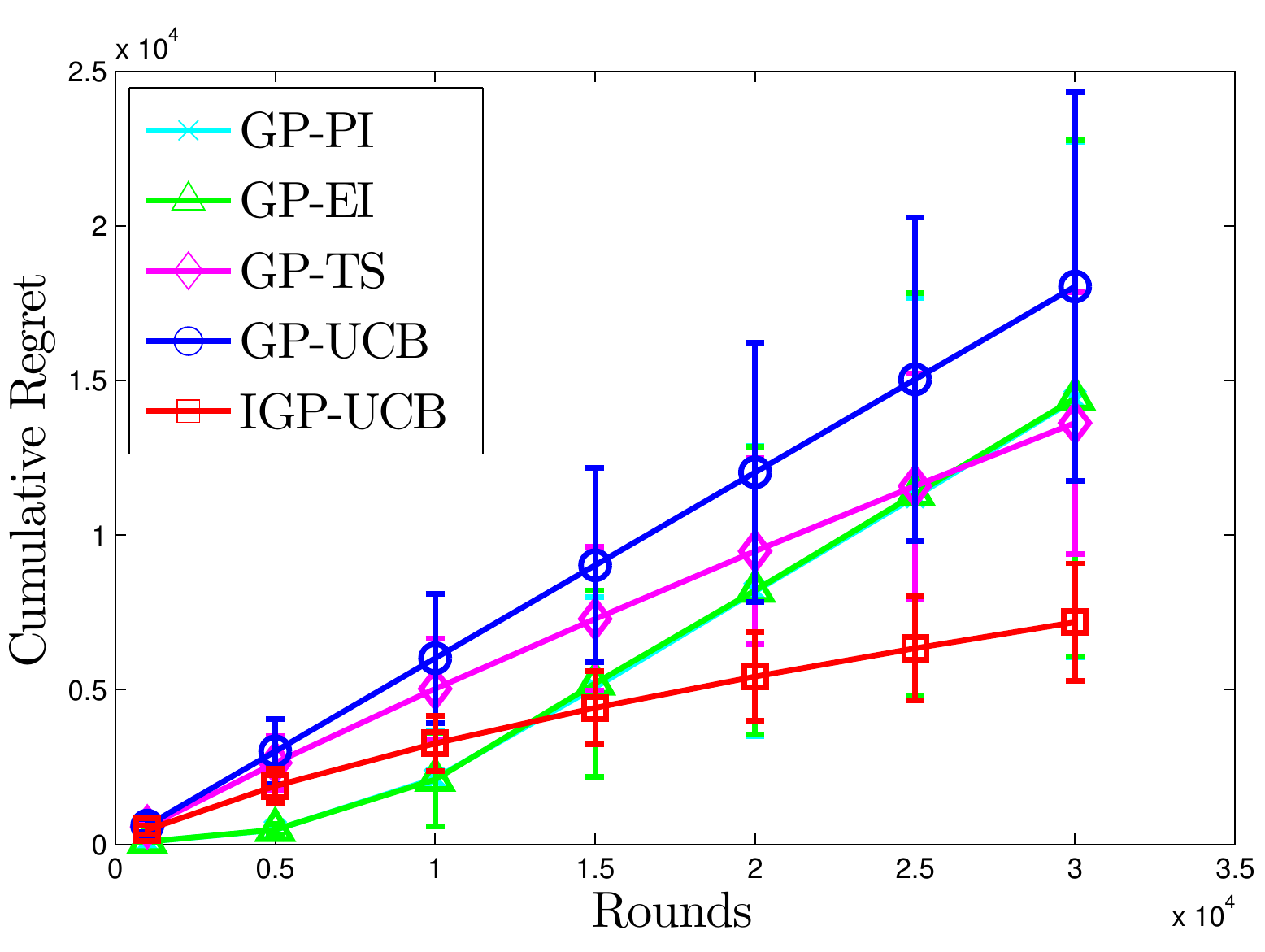}}
\caption{Cumulative regret for functions lying in the RKHS corresponding to (a) Squared Exponential kernel and (b) Mat$\acute{e}$rn kernel.
}
\label{fig:synthetic_plot_rkhs} 
\vskip -5mm
\end{figure}
\begin{figure}[t!]
\centering
\subfigure[]{\includegraphics[height=2.5in,width=3in]{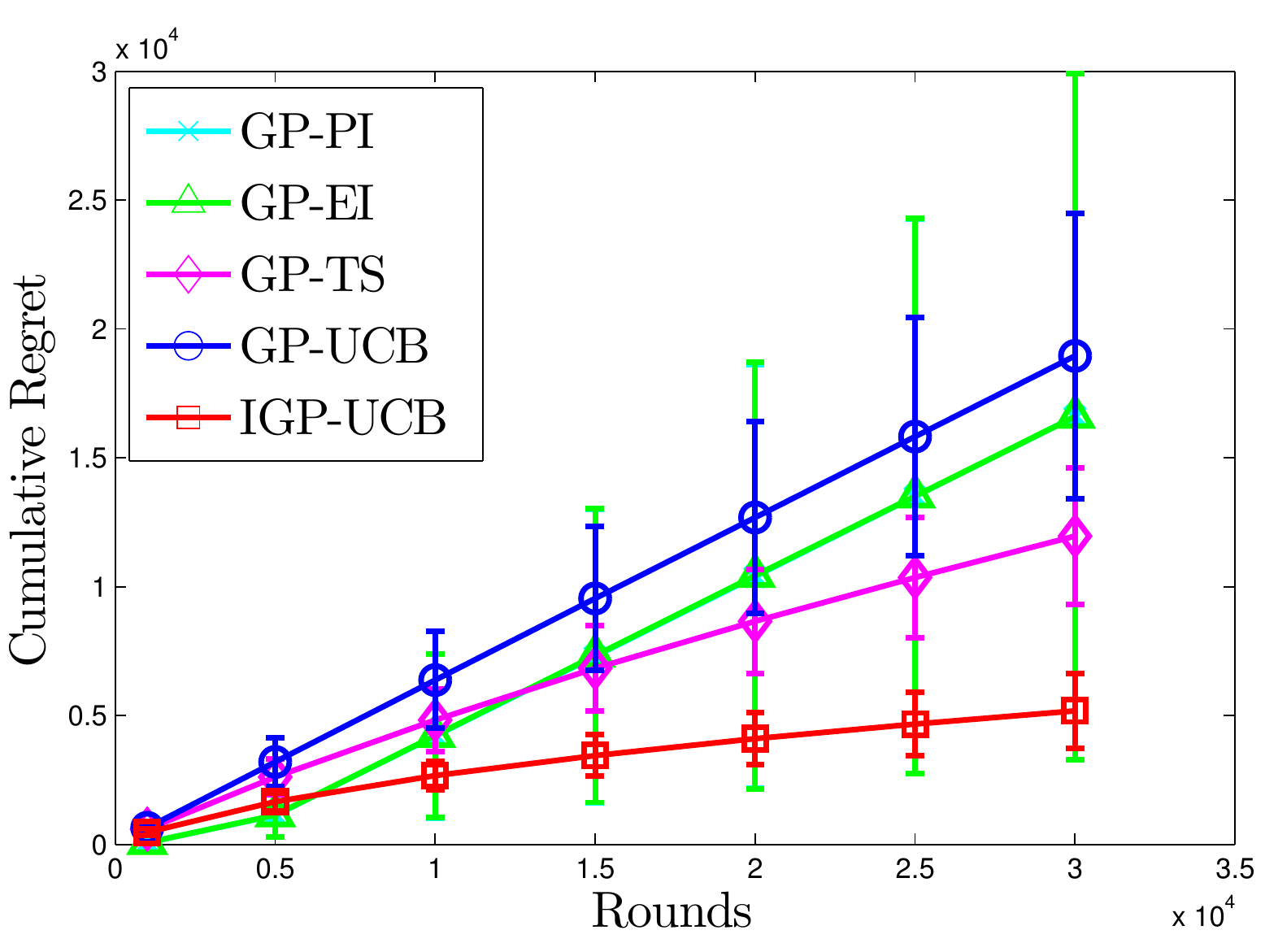}}\subfigure[]{\includegraphics[height=2.5in,width=3in]{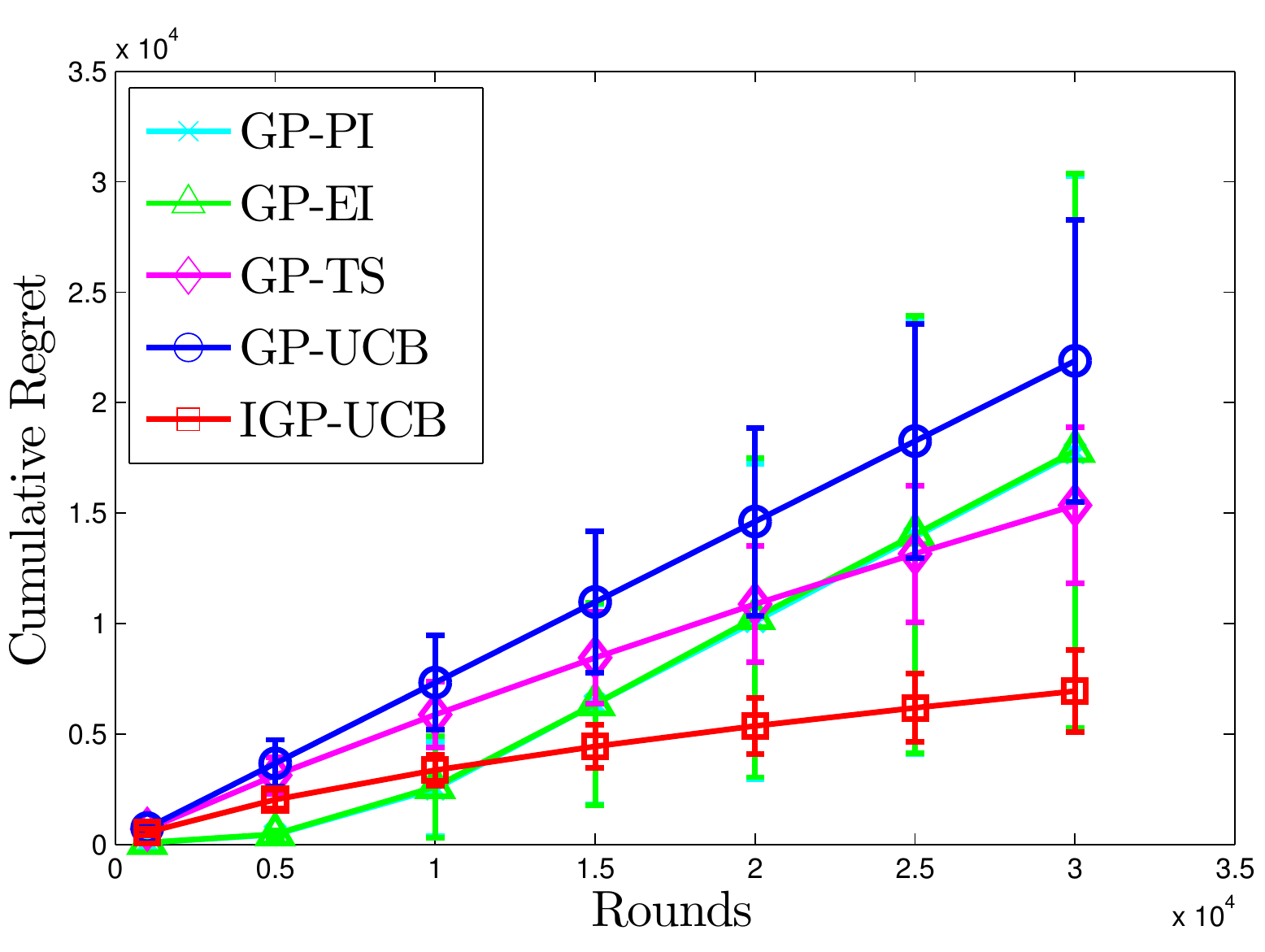}}
\caption{Cumulative regret for functions lying in the GP corresponding to (a) Squared Exponential kernel and (b) Mat$\acute{e}$rn kernel. 
}
\label{fig:synthetic_plot_gp} 
\vskip -5mm
\end{figure}

{\bf Standard Test Functions.} %
\label{subsec:standard}
We consider $2$ well-known synthetic benchmark functions for Bayesian Optimization: \textit{Rosenbrock} and \textit{Hartman}3 (see \citet{azimi2012hybrid} for exact analytical expressions). We sample $100\;d$ points uniformly from the domain of each benchmark function, $d$ being the dimension of respective domain, as the decision set. We consider the Squared Exponential kernel with $l=0.2$ and set all parameters exactly as in previous experiment. The cumulative regret for $25$ independent trials on \textit{Rosenbrock} and \textit{Hartman3} benchmarks is shown in Figure \ref{fig:standard_plot}. We see GP-EI/PI perform better than the rest, while IGP-UCB and GP-TS show competitive performance. Here no algorithm is aware of the underlying kernel function, hence we conjecture that the UCB- and TS- based algorithms are somewhat less robust on the choice of kernel than EI/PI. 

\begin{figure}[t!]
\centering
\subfigure[]{\includegraphics[height=2.5in,width=3in]{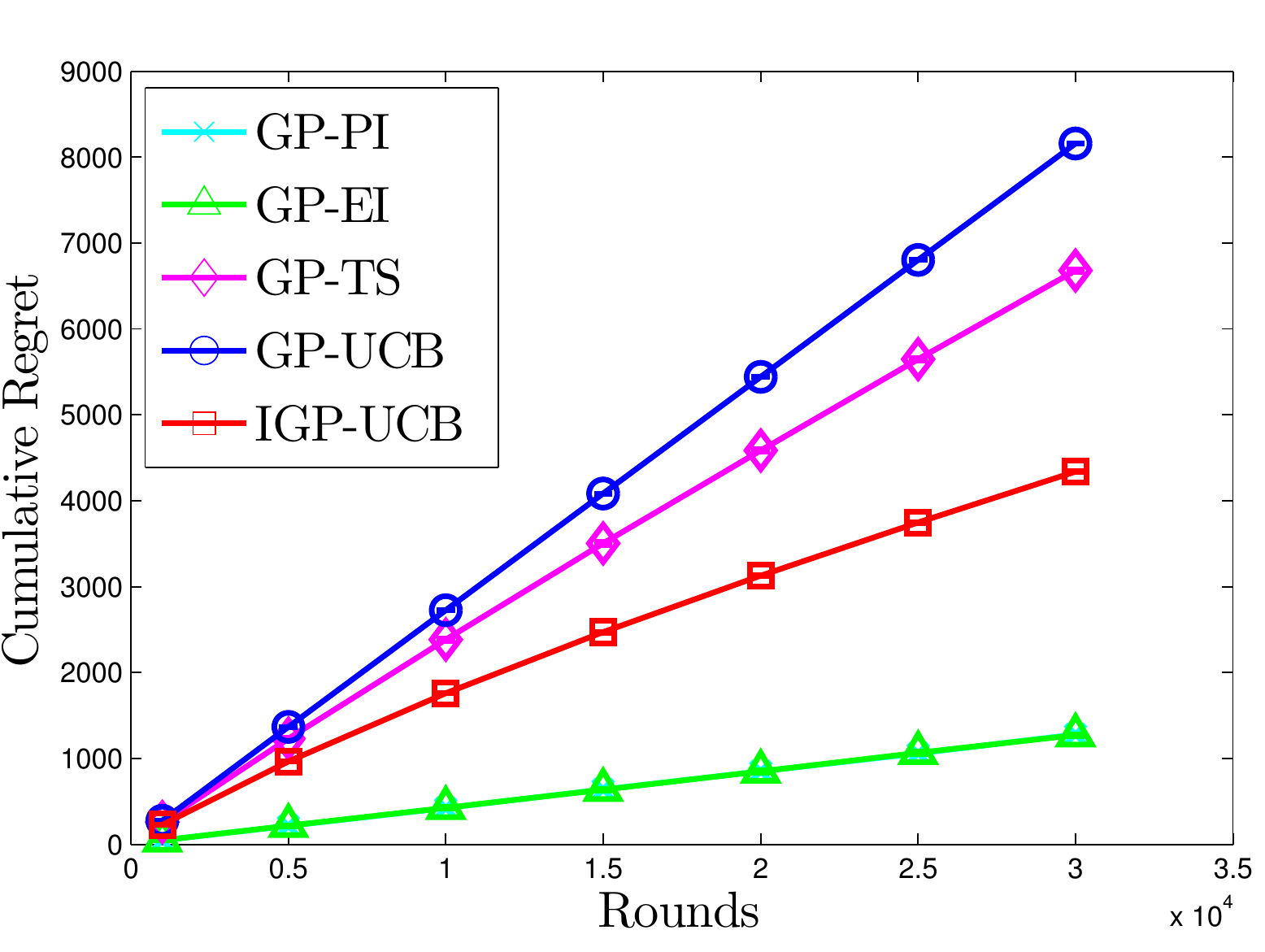}}\subfigure[]{\includegraphics[height=2.5in,width=3in]{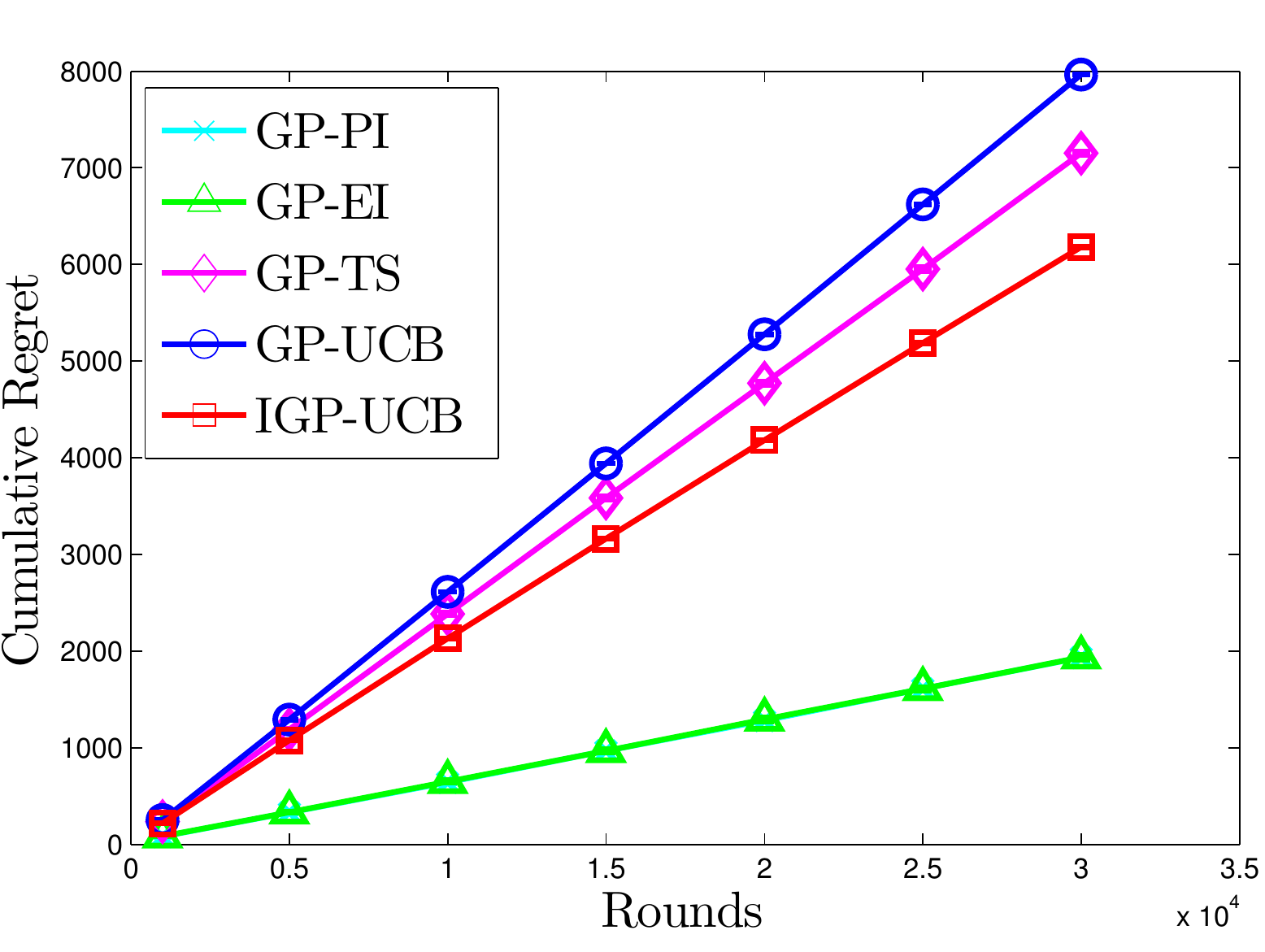}} 
\caption{Cumulative regret for (a) \textit{Rosenbrock} and (b) \textit{Hartman3} benchmark function. 
}
\label{fig:standard_plot} 
\vskip -5mm
\end{figure}

\textbf{Temperature Sensor Data.} %
\label{subsec:temp-sensor}
We use temperature data\footnote{\url{http://db.csail.mit.edu/labdata/labdata.html}} collected from 54 sensors deployed in the Intel Berkeley Research lab between February 28th and April 5th, 2004 with samples collected at 30 second intervals. We
tested all algorithms in the context of learning the maximum reading of the sensors collected between 8 am to 9 am. We take measurements of first 5 consecutive days
(starting Feb. 28th 2004) to learn algorithm parameters. Following \citet{srinivas2009gaussian}, we calculate the empirical covariance matrix of the sensor measurements and use it as the kernel matrix in the algorithms. Here $R^2$ is set to be $5\%$ of the average empirical variance of sensor readings and other algorithm parameters is set similarly as in the previous experiment with $\gamma_t=1$ (found via cross-validation). The functions for testing consist of one set of measurements from all sensors in the two following days and the cumulative regret is plotted over all such test functions. From Figure \ref{fig:sensor_plot}, we see that IGP-UCB and GP-UCB performs the same, while GP-TS outperforms all its competitors.
\begin{figure}[t!]
\centering
\subfigure[]{\includegraphics[height=2.5in,width=3in]{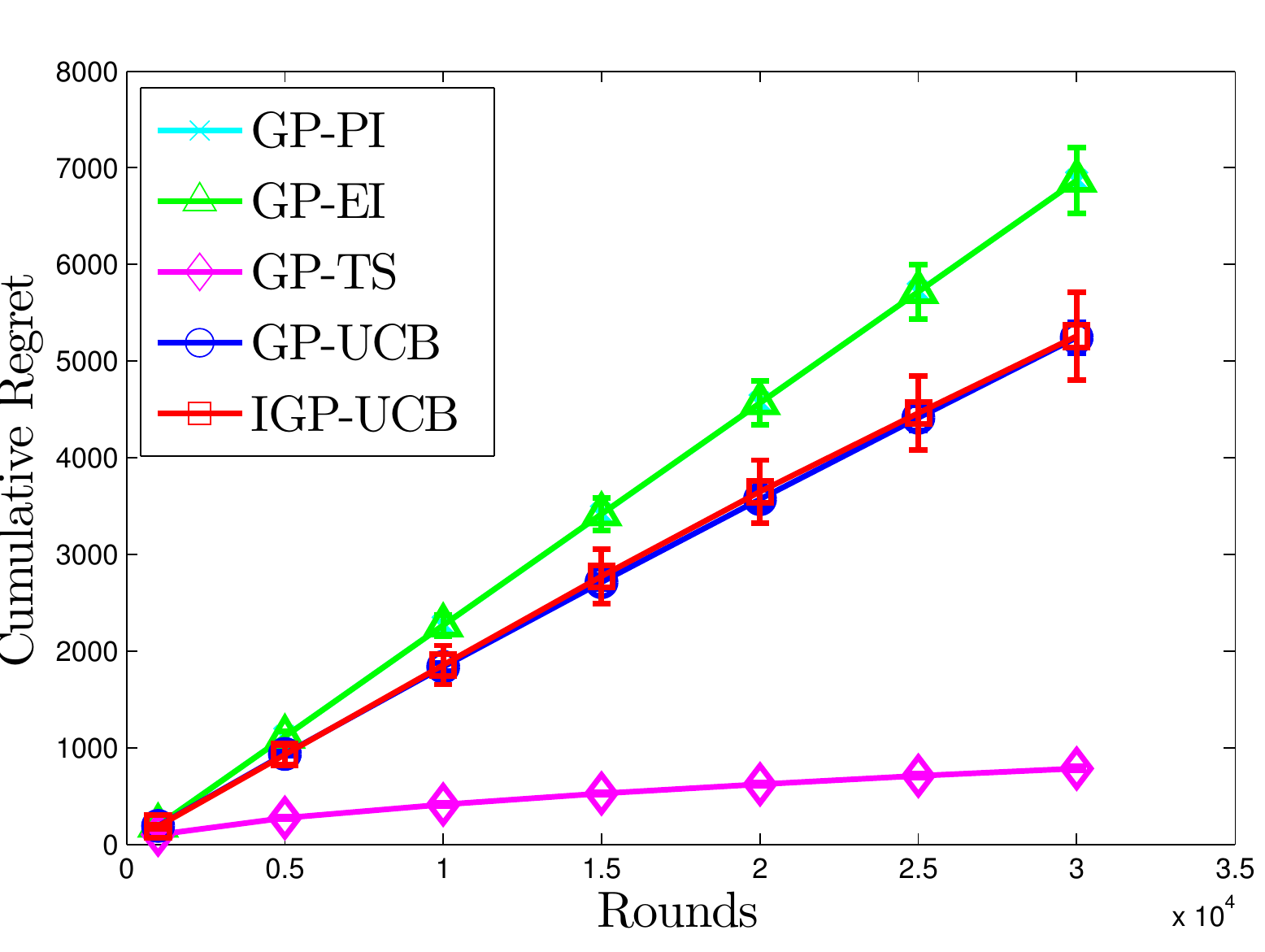}}\subfigure[]{\includegraphics[height=2.5in,width=3in]{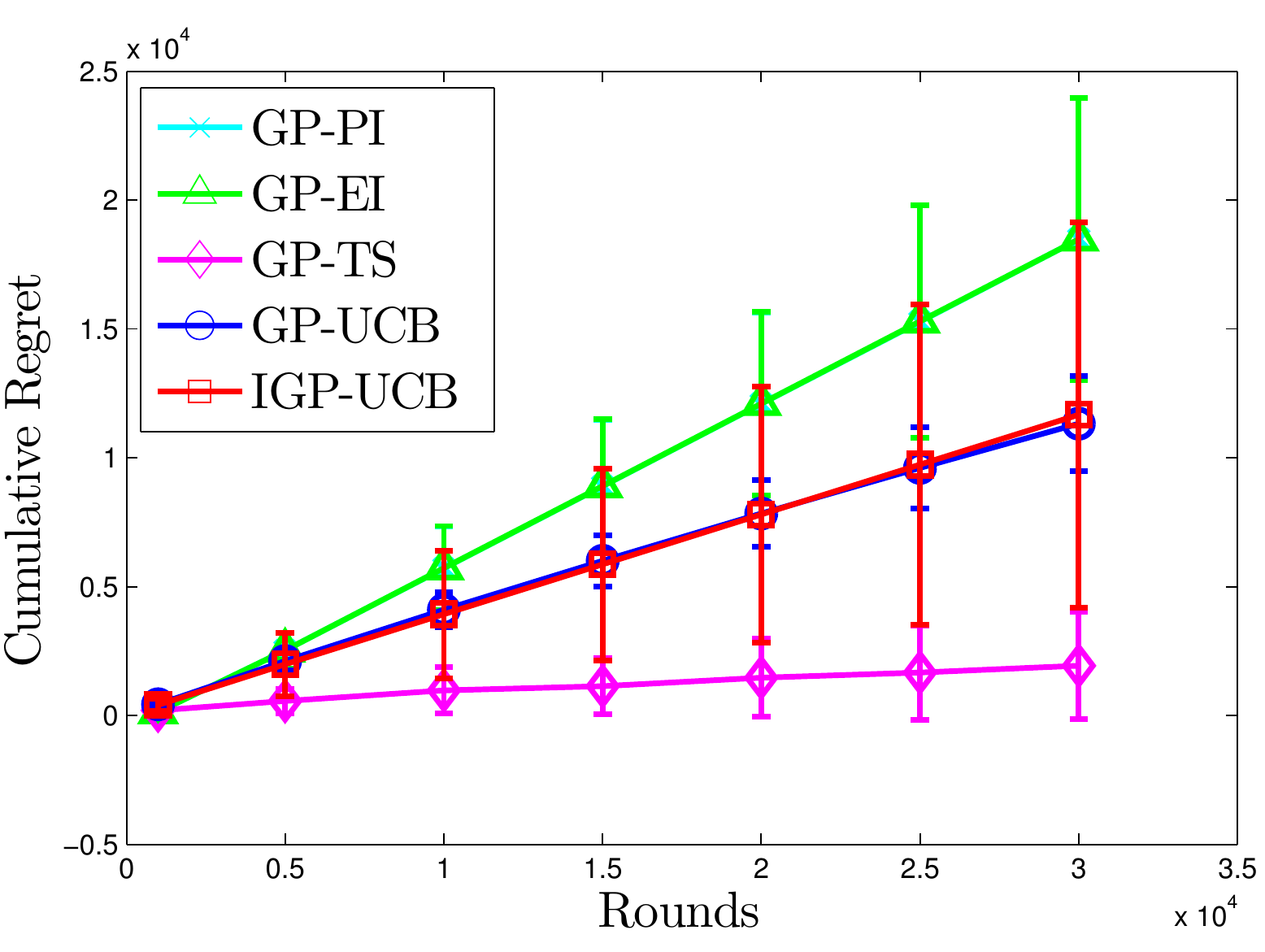}}
\caption{Cumulative regret plots for (a) temperature data and (b) light sensor data. 
}
\label{fig:sensor_plot} 
\vskip -5mm
\end{figure}

\textbf{Light Sensor Data.} %
\label{subsec:light-sensor}
We take light sensor data collected in the CMU Intelligent Workplace in Nov 2005, which is available online as Matlab
structure\footnote{\url{http://www.cs.cmu.edu/~guestrin/Class/10708-F08/projects/lightsensor.zip}} and contains locations of $41$ sensors, $601$ train samples and $192$ test samples. We compute the kernel matrix, estimate the noise and set other algorithm parameters exactly as in the previous experiment. Here also GP-TS is found to perform better than the others, with IGP-UCB performing better than GP-EI/PI (Figure \ref{fig:sensor_plot}).

{\bf Related work.} An alternative line of work pertaining to $\mathcal{X}$-armed bandits \cite{kleinberg2008multi,bubeck2011x,carpentier2015simple,azar2014online} studies continuum-armed bandits with smoothness structure. %
For instance, \cite{bubeck2011x} show that with a Lipschitzness assumption on the reward function, algorithms based on discretizing the domain yield nontrivial regret guarantees, of order $\Omega(T^{\frac{d+1}{d+2}})$ in $\Real^d$. %
Other Bayesian approaches to function optimization are GP-EI \cite{movckus1975bayesian}, GP-PI \cite{kushner1964new}, GP-EST \cite{wang2016optimization} and GP-UCB, including the contextual \cite{krause2011contextual}, high-dimensional \cite{djolonga2013high,wang2013bayesian}, time-varying \cite{bogunovic2016time} safety-aware \cite{gotovos2015safe}, budget-constraint \cite{hoffman2013exploiting} and noise-free \cite{de2012exponential} settings. Other relevant work focuses on best arm identification problem in the Bayesian setup considering pure exploration \cite{grunewalder2010regret}. %
For Thompson sampling (TS), \citet{russo2014learning} analyze the Bayesian regret of TS, which includes the case where the target function is sampled from a GP prior. Our work obtains the first frequentist regret of TS for unknown, fixed functions from an RKHS.
\vskip -5mm

\section{Conclusion}
\label{sec:Conclusion}

For bandit optimization, we have improved upon the existing GP-UCB algorithm, and introduced a new GP-TS algorithm. The proposed algorithms perform well in practice both on synthetic and real-world data. An interesting case is when the kernel function is also not known to the algorithms a priori and needs to be learnt adaptively. Moreover, one can consider classes of time varying functions from the RKHS, and general reinforcement learning with GP techniques. There are also important questions on computational aspects of optimizing functions drawn from GPs.

\bibliography{paper.bib}
\section*{Appendix}
\section*{A. Proof of Theorem \ref{thm:self-normalized-bound}}
For a function $ g:D \ra \Real$ and a sequence of reals $n \equiv (n_t)_{t=1}^\infty$, define for any $t \ge 0 $
\beqn
M_t^{g,n} = \exp\Big(\epsilon_{1:t}^Tg_{1:t,n}-\frac{R^2}{2}\norm{g_{1:t,n}}^2\Big),
\eeqn
where the vector $g_{1:t,n} := [g(x_1) + n_1,\ldots,g(x_t) + n_t]^T$. 
We first establish the following technical result, which resembles
\citet[Lemma 8]{abbasi2011improved}. 
\begin{mylemma}
For fixed $g$ and $n$, $\lbrace M_t^{g,n} \rbrace_{t=0}^{\infty}$ is a super-martingale with respect to the filtration $\lbrace \cF_t\rbrace_{t=0}^{\infty}$.
\end{mylemma}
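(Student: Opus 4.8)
The plan is to peel off the last factor and reduce to the one-step conditional sub-Gaussian bound, exactly as in \citet[Lemma 8]{abbasi2011improved}. Write $\lambda_t \bydef g(x_t) + n_t$, so that $g_{1:t,n} = [\lambda_1,\ldots,\lambda_t]^T$ and, using additivity of the sum and of the squared Euclidean norm in the exponent,
\[
M_t^{g,n} = M_{t-1}^{g,n}\cdot \exp\!\Big(\lambda_t\epsilon_t - \tfrac{R^2}{2}\lambda_t^2\Big),
\]
with the convention $M_0^{g,n} = 1$ (empty sum, empty norm). Since $\{x_t\}_t$ is predictable, $x_t$ is $\cF_{t-1}$-measurable, and because $g$ is a fixed (Borel-measurable) function and $n$ a fixed real sequence, $\lambda_t$ is $\cF_{t-1}$-measurable; likewise $M_{t-1}^{g,n}$ is $\cF_{t-1}$-measurable. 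Hence each $M_t^{g,n}$ is $\cF_t$-measurable, i.e.\ the process is adapted.

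Next I would check the super-martingale inequality. Conditioning on $\cF_{t-1}$ and treating the $\cF_{t-1}$-measurable quantities $M_{t-1}^{g,n}$ and $\lambda_t$ as constants,
\[
\expect{M_t^{g,n}\given \cF_{t-1}} = M_{t-1}^{g,n}\, e^{-R^2\lambda_t^2/2}\,\expect{e^{\lambda_t\epsilon_t}\given \cF_{t-1}} \le M_{t-1}^{g,n}\, e^{-R^2\lambda_t^2/2}\, e^{R^2\lambda_t^2/2} = M_{t-1}^{g,n},
\]
where the inequality is exactly the conditional $R$-sub-Gaussianity of $\epsilon_t$ (Equation~\ref{eqn:noise}) invoked with $\lambda = \lambda_t$. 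Integrability, needed for the conditional expectation to make sense, follows by induction: $\expect{M_0^{g,n}} = 1$, and taking (unconditional) expectations in the displayed chain gives $\expect{M_t^{g,n}} \le \expect{M_{t-1}^{g,n}} \le 1$ for all $t$, so each $M_t^{g,n} \ge 0$ is integrable. This establishes that $\{M_t^{g,n}\}_{t\ge 0}$ is a non-negative super-martingale, and along the way records the bound $\expect{M_t^{g,n}} \le 1$ that the mixture argument will rely on.

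I do not expect a genuine obstacle in this lemma; the only point that needs care is the measurability bookkeeping, namely that holding $g$ and $n$ fixed makes $\lambda_t = g(x_t)+n_t$ an $\cF_{t-1}$-measurable (hence conditionally deterministic) exponent, which is precisely what licenses the use of the sub-Gaussian hypothesis. This is also the property that will fail once $g$ is randomized by drawing it from an independent Gaussian process in the subsequent step — at that stage one must condition on $g$ first and integrate, or appeal to Tonelli/Fubini — which is why the ``double mixture'' construction is treated as a separate, more delicate argument. For the present statement, with $g$ and $n$ frozen, the computation above is the whole proof.
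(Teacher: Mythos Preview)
Your proof is correct and follows essentially the same approach as the paper: factor $M_t^{g,n} = M_{t-1}^{g,n}\cdot \Delta_t^{g,n}$ with $\Delta_t^{g,n} = \exp(\lambda_t\epsilon_t - \tfrac{R^2}{2}\lambda_t^2)$, use predictability of $x_t$ (hence of $\lambda_t$) together with the conditional $R$-sub-Gaussianity of $\epsilon_t$ to get $\expect{\Delta_t^{g,n}\given\cF_{t-1}}\le 1$, and conclude. Your explicit remark on integrability and the measurability bookkeeping are slight elaborations, but the argument is the same as the paper's.
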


\begin{proof}
First, define
\beqn
\Delta_t^{g,n} := \exp\Big(\epsilon_t(g(x_t) + n_t)-\frac{R^2}{2}(g(x_t) + n_t)^2\Big).
\eeqn
Since $x_t$ is $\cF_{t-1}$-measurable and $\epsilon_t$ is $\cF_t$-measurable, $M_t^{g,n}$ as well as $\Delta_t^{g,n}$ are $\cF_t$ measurable. Also, by the conditional $R$-sub-Gaussianity of $\epsilon_t$, we have
\beqn
\forall \lambda \in \Real, \;\; \expect{e^{\lambda \epsilon_t} \given \cF_{t-1}} \le \exp\left(\frac{\lambda^2R^2}{2}\right),
\eeqn
which in turn implies $\expect{\Delta_t^{g,n}\given \cF_{t-1}} \le 1$. We also have
\beqan
&&\expect{M_t^{g,n}\given \cF_{t-1}}\\
&=&  \expect{M_{t-1}^{g,n} \Delta_t^{g,n}\given \cF_{t-1}} = M_{t-1}^{g,n} \expect{\Delta_t^{g,n} \given \cF_{t-1}} \le M_{t-1}^{g,n},
\eeqan
showing that $\lbrace M_t^{g,n} \rbrace_{t=0}^{\infty}$ is a non-negative super-martingale and proving the lemma.  
%
%
\end{proof}
Also observe that $\expect{M_t^{g,n}} \le 1$ for all $t$, as
\beqn
\expect{M_t^{g,n}} \le \expect{M_{t-1}^{g,n}} \le \cdots \le \expect{M_0^{g,n}} = \expect{1} = 1.
\eeqn
Now, let $\tau$ be a stopping time with respect to the filtration $\lbrace \cF_t\rbrace_{t=0}^{\infty}$. By the convergence theorem for nonnegative super-martingales \citep{dur05:probtebook}, $M_{\infty}^{g,n} = \lim\limits_{t\ra \infty}M_t^{g,n}$ exists almost surely, and thus $M_\tau^{g,n}$ is well-defined. Now let $Q_t^{g,n} = M_{\min\lbrace \tau,t \rbrace}^{g,n}$, $t \ge 0$, be a stopped version of $\lbrace M_t^{g,n}\rbrace_t$. By Fatou's lemma \citep{dur05:probtebook},
\beqa
\expect{M_\tau^{g,n}} &=& \expect{\lim_{t\ra \infty}Q_t^{g,n}} = \expect{\liminf_{t\ra \infty}Q_t^{g,n}}\nonumber\\
&\le & \liminf_{t \ra \infty} \expect{Q_t^{g,n}}\nonumber\\ &=& \liminf_{t \ra \infty} \expect{M_{\min\lbrace \tau,t \rbrace}^{g,n}} \le 1,
\label{eqn:Fatou}
\eeqa
since the stopped super-martingale $\left(M_{\min\lbrace \tau,t \rbrace}^{g,n}\right)_t$ is also a super-martingale \citep{dur05:probtebook}. 

Now, let $\cF_{\infty}$ be the $\sigma$-algebra generated by $\lbrace\cF_t\rbrace_{t=0}^{\infty}$, and let
$N \equiv (N_t)_{t =1}^\infty$ be a sequence of independent and identically distributed Gaussian random variables with mean $0$ and variance $\eta$, independent of $\cF_\infty$. Let $h: D \to \Real$ be a random function distributed according to the Gaussian process measure $GP_D(0,k)$, and independent of both $\cF_{\infty}$ and $(N_t)_{t =1}^\infty$.

 
For each $t \ge 0$, define $M_t = \expect{M_t^{h,N}\given \cF_{\infty}}$. In words, $(M_t)_t$ is a mixture of super-martingales of the form $M_t^{g,n}$, and it is not hard to see that $(M_t)_t$ is also a (non-negative) super-martingale w.r.t. the filtration $\{\cF_t\}_t$, hence $M_{\infty}=\lim\limits_{t\ra \infty}M_t$ is well-defined almost surely. We can write
\beqn
\expect{M_t} = \expect{M_t^{h,N}} = \expect{\expect{M_t^{h,N}\given h, N}} \le \expect{1} = 1 \;\; \forall t.
\eeqn
An argument similar to (\ref{eqn:Fatou}) also shows that $\expect{M_\tau} \le 1$ for any stopping time $\tau$. Now, without loss of generality, we assume $R=1$ (this can always be achieved through appropriate scaling), and compute
\beqan
M_t &=& \expect{\exp\Big(\epsilon_{1:t}^Th_{1:t,N}-\frac{1}{2}\norm{h_{1:t,N}}^2\Big)\given \cF_{\infty}}\\
&=& \int_{\Real^D} \int_{\Real^t}  \exp\Big(\epsilon_{1:t}^T ([h(x_1) \ldots h(x_t)]^T + z) -\frac{1}{2}\norm{[h(x_1) \ldots h(x_t)]^T + z}^2\Big) d\mu_1(h) d\mu_2(z)\\
&=& \int_{\Real^t}\exp\Big(\epsilon_{1:t}^T \lambda-\frac{1}{2}\norm{\lambda}^2\Big)f(\lambda) d\lambda, 
\eeqan
where $\mu_1$ is the Gaussian process measure $GP_D(0,k)$ over the function space $\Real^D \equiv \{g: D \to \Real \}$, $\mu_2$ is the multivariate Gaussian distribution on $\Real^t$ with mean $0$ and covariance $\eta I$ where $I$ is the identify, $du$ is standard Lebesgue measure on $\Real^t$, and $f$ is the density of the random vector $[h(x_1) \ldots h(x_t)]^T + z$, which is distributed as the multivariate Gaussian $\cN(0,K_t + \eta I)$ given the sampled points $x_1,\ldots, x_t$ up to round $t$, where $K_t$ is the induced kernel matrix at time $t$ given by $K_t(i,j) = k(x_i, x_j)$, $1 \le i, j \le t$. (Note: $K_t$ is not positive definite and invertible when there are repetitions among $(x_1, \ldots, x_t)$, but $K_t + \eta I$ is).

%
Thus, we have
\beqan
& M_t&=\frac{1}{\sqrt{(2\pi)^t \det(K_t + \eta I)}}\int_{\Real^t}\exp\Bigg(\epsilon_{1:t}^T \lambda-\frac{\norm{\lambda}^2}{2}-\frac{\norm{\lambda}^2_{(K_t + \eta I)^{-1}}}{2}\Bigg)d\lambda\\
&=& \frac{\exp\Big(\frac{\norm{\epsilon_{1:t}}^2}{2}\Big)}{\sqrt{(2\pi)^t \det(K_t + \eta I)}}\int_{\Real^t}\exp\Bigg(-\frac{\norm{\lambda -\epsilon_{1:t}}^2}{2}-\frac{\norm{\lambda}^2_{(K_t+ \eta I)^{-1}}}{2}\Bigg)d\lambda.
\eeqan
Now for positive-definite matrices $P$ and $Q$
\beqn
\norm{x-a}_P^2 + \norm{x}_Q^2 = \norm{x-(P+Q)^{-1}Pa}_{P+Q}^2 + \norm{a}_P^2 -\norm{Pa}_{(P+Q)^{-1}}^2.
\eeqn
Therefore,
\beqan
&&\norm{\lambda -\epsilon_{1:t}}_I^2 + \norm{\lambda}^2_{(K_t+ \eta I)^{-1}}\\ &=&\norm{\lambda - (I+(K_t+ \eta I)^{-1})^{-1}I\epsilon_{1:t}}_{I+(K_t+ \eta I)^{-1}}^2 + \norm{\epsilon_{1:t}}_I^2 - \norm{I\epsilon_{1:t}}_{(I+(K_t+ \eta I)^{-1})^{-1}}^2,
\eeqan
which yields 
\beqan
M_t &=& \frac{1}{\sqrt{(2\pi)^t \det(K_t+ \eta I)}}\exp\Big(\frac{1}{2}\norm{\epsilon_{1:t}}_{(I+(K_t+ \eta I)^{-1})^{-1}}^2\Big)\\
&&\times\int_{\Real^t}\exp\Big(-\frac{1}{2}\norm{\lambda-(I+(K_t+ \eta I)^{-1})^{-1}\epsilon_{1:t}}^2_{I+(K_t+ \eta I)^{-1}}\Big)d\lambda\\
&=&\frac{1}{\sqrt{\det(K_t+ \eta I)\det((K_t+ \eta I)^{-1}+I)}} \exp\Big(\frac{1}{2}\norm{\epsilon_{1:t}}_{(I+(K_t+ \eta I)^{-1})^{-1}}^2\Big)\\
&=& \frac{1}{\sqrt{\det(I+K_t+ \eta I)}}\exp\Big(\frac{1}{2}\norm{\epsilon_{1:t}}_{(I+(K_t+ \eta I)^{-1})^{-1}}^2\Big),
\eeqan
since for any positive definite matrix $A \in \Real^t$,
\beqan 
&&\int_{\Real^t}\exp\Big(-\frac{1}{2}(x-a)^TA(x-a)\Big)dx = \int_{\Real^t}\exp\Big(-\frac{1}{2}\norm{x-a}_A^2\Big)dx =\sqrt{(2\pi)^t/\det(A)}.
\eeqan
Now as $\expect{M_\tau} \le 1$, using Markov's inequality gives,
for any $\delta \in (0,1)$,
\beqa
&&\prob{\norm{\epsilon_{1:\tau}}_{((K_\tau + \eta I)^{-1} + I)^{-1}}^2 > 2\ln\Big(\sqrt{\det((1+\eta)I+K_\tau)}/\delta\Big)}\nonumber \\ &=&
\prob{M_\tau > 1/\delta} < \delta \expect{M_\tau} \le \delta.
\label{eqn:stopping-time-bound}
\eeqa
To complete the proof, we now employ a stopping time construction as in \citet{abbasi2011improved}. For each $t \ge 0$, define the `bad' event
\beqn
B_t(\delta) = \Big\lbrace \omega \in \Omega : \norm{\epsilon_{1:t}}_{((K_t + \eta I)^{-1} + I)^{-1}}^2 > 2\ln\Big(\sqrt{\det((1+\eta)I+K_t)}/\delta\Big) \Big\rbrace,
\eeqn
so that
\beqan 
&&\prob{\bigcup\limits_{t\ge0}B_t(\delta)}\\ &=& \prob{\exists t \ge 0:\norm{\epsilon_{1:t}}_{((K_t+ \eta I)^{-1} + I)^{-1}}^2 \le  2\ln\Big(\sqrt{\det((1+\eta)I+K_t)}/\delta\Big)},
\eeqan
which is the probability required to be bounded by $\delta$ in the statement of the theorem. 

Let $\tau'$ be the first time when the bad event $B_t(\delta)$ happens, i.e., $\tau'(\omega) := \min\lbrace t \ge 0:\omega \in B_t(\delta)\rbrace$, with $\min\lbrace\emptyset\rbrace := \infty$ by convention. Clearly, $\tau'$ is a stopping time, and
\beqn
\bigcup\limits_{t\ge0}B_t(\delta) = \lbrace \omega \in \Omega : \tau'(\omega)< \infty\rbrace.
\eeqn
Therefore, we can write
\beqan
&&\prob{\bigcup\limits_{t\ge0}B_t(\delta)}\\ &=& \prob{\tau' < \infty}\\
&=& \prob{\norm{\epsilon_{1:\tau'}}_{((K_{\tau'}+ \eta I)^{-1} + I)^{-1}}^2 > 2\ln\Big(\sqrt{\det((1+\eta)I+K_{\tau'})}/\delta\Big),\tau' < \infty}\\
&\le & \prob{\norm{\epsilon_{1:\tau'}}_{((K_{\tau'}+ \eta I)^{-1} + I)^{-1}}^2 > 2\ln\Big(\sqrt{\det((1+\eta)I+K_{\tau'})}/\delta\Big)} \le \delta,
\eeqan
by the inequality (\ref{eqn:stopping-time-bound}). 

When $K_t$ is positive definite (and hence invertible) for each $t \ge 1$, one can use a similar construction as in Part 1, with $\eta = 0$ (i.e., $N$ is the all-zeros sequence with probability 1), to recover the corresponding conclusion (\ref{eqn:thmpart1}) with $\eta = 0$. 
\qed

\section*{Proof of Lemma \ref{lem:selfnorm}}
Define, for each time $t$, the $t \times \infty$ matrix $\Phi_t := [\phi(x_1) \cdots \phi(x_t)]^T$, and observe that $V_t = I + \Phi_t^T \Phi_t$ and $K_t = \Phi_t \Phi_t^T$. With this, we can compute
\begin{align*} 
   \norm{S_t}^2_{V_t^{-1}} &= S_t^T V_t^{-1} S_t = \sum_{s=1}^{t}\epsilon_s\phi(x_s)^T  \left(I + \Phi_t^T \Phi_t \right)^{-1} \sum_{s=1}^{t}\epsilon_s\phi(x_s) \\
   &= \epsilon_{1:t}^T \Phi_t \left(I + \Phi_t^T \Phi_t \right)^{-1} \Phi_t^T \epsilon_{1:t} \\
   &= \epsilon_{1:t}^T \Phi_t  \Phi_t^T \left( \Phi_t \Phi_t^T + I \right)^{-1} \epsilon_{1:t} \\
   &= \epsilon_{1:t}^T K_t \left( K_t + I \right)^{-1} \epsilon_{1:t} \\
   &= \epsilon_{1:t}^T \left( K_t^{-1} + I \right)^{-1} \epsilon_{1:t} = \norm{\epsilon_{1:t}}^2_{\left( K_t^{-1} + I \right)^{-1}},
\end{align*}
completing the proof. 

\section*{B. Information Theoretic Results}
\begin{mylemma}
For every $t \ge 0$, the maximum information gain $\gamma_t$, for the points chosen by Algorithm \ref{algo:ucb} and \ref{algo:ts} satisfy, almost surely, the following :
\beqan
\gamma_t &\ge& \frac{1}{2}\ln(\det(I+\lambda^{-1}K_t)),\\
\gamma_t &\ge& \frac{1}{2}\sum_{s=1}^{t}\ln(1 +\lambda^{-1}\sigma_{s-1}^2(x_s)).
\eeqan
\label{lem:info-theoretic-results}
\end{mylemma}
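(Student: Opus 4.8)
The plan is to recognize both right-hand sides as the Gaussian-model mutual information $I(y_{1:t};f_{1:t})$ between the function values and the noisy observations at the points $x_1,\ldots,x_t$ actually queried, and then to bound this mutual information by $\gamma_t$ using the definition of the maximum information gain. Since the mutual information of a Gaussian channel is unchanged when $f$ and the noise are rescaled by a common positive factor, I may take the scale parameter $v=1$ in the GP model, so that $\sigma_{s-1}^2(x)$ (Equations \ref{eqn:cov update}, \ref{eqn:sd update}) is exactly the posterior variance with regularization $\lambda$ and the noise appearing in the definition of $\gamma_t$ is $\mathcal{N}(0,\lambda I)$.

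Next I would record two elementary computations, carried out for a fixed tuple of query points. First, with $f_{1:t}\sim\mathcal{N}(0,K_t)$ and $y_{1:t}=f_{1:t}+\epsilon_{1:t}$, $\epsilon_{1:t}\sim\mathcal{N}(0,\lambda I)$ independent, the vector $y_{1:t}$ is Gaussian with covariance $K_t+\lambda I$ and, conditioned on $f_{1:t}$, reduces to pure noise, so
\[
I(y_{1:t};f_{1:t}) = h(y_{1:t}) - h(y_{1:t}\mid f_{1:t}) = \tfrac12\ln\det(I+\lambda^{-1}K_t).
\]
Second, by the chain rule $I(y_{1:t};f)=\sum_{s=1}^t I(y_s;f\mid y_{1:s-1})$, and since the GP posterior for $f(x_s)$ given $y_{1:s-1}$ has variance $\sigma_{s-1}^2(x_s)$ (Equations \ref{eqn:mean update}, \ref{eqn:sd update}), we get $y_s\mid y_{1:s-1}\sim\mathcal{N}(\mu_{s-1}(x_s),\sigma_{s-1}^2(x_s)+\lambda)$ whereas $y_s$ given $f$ has variance $\lambda$, hence $I(y_s;f\mid y_{1:s-1})=\tfrac12\ln(1+\lambda^{-1}\sigma_{s-1}^2(x_s))$. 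Because $y_{1:t}$ depends on $f$ only through $f_{1:t}$ we also have $I(y_{1:t};f)=I(y_{1:t};f_{1:t})$, so comparing the two computations yields the determinant identity $\det(I+\lambda^{-1}K_t)=\prod_{s=1}^t(1+\lambda^{-1}\sigma_{s-1}^2(x_s))$; alternatively this identity drops out by induction on $t$, since the Schur-complement formula gives $\det(K_t+\lambda I)=\det(K_{t-1}+\lambda I)\cdot(\lambda+\sigma_{t-1}^2(x_t))$ by Equations \ref{eqn:cov update} and \ref{eqn:sd update}.

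Finally I would invoke $\gamma_t = \max_{A\subset D,\,|A|=t} I(y_A;f_A)$: for every sample path, the points $x_1,\ldots,x_t$ played by Algorithm \ref{algo:ucb} or \ref{algo:ts} form an admissible collection of query locations, so $I(y_{1:t};f_{1:t})\le\gamma_t$ pointwise and hence almost surely; combined with the two identities this gives both displayed inequalities. The one step needing care — and the main, though minor, obstacle — is the possibility of repeated query points, where $\{x_1,\ldots,x_t\}$ has fewer than $t$ distinct elements and so is not literally a set of size $t$; there I would appeal to the standard fact (as in \citet{srinivas2009gaussian}) that the information gain $\sum_{s=1}^t\tfrac12\ln(1+\lambda^{-1}\sigma_{s-1}^2(x_s))$ of any adaptively chosen length-$t$ query sequence is still dominated by $\gamma_t$, so the conclusion persists. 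The remaining content is routine Gaussian algebra.
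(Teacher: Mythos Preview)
Your proposal is correct and follows essentially the same route as the paper: compute $I(y_{1:t};f_{1:t})$ in two ways --- once via $H(y_{1:t})-H(y_{1:t}\mid f_{1:t})$ to get the determinant form, once via the chain rule to get the telescoping-variance form --- and then bound by $\gamma_t$ using its definition as a maximum over query sets. Your treatment is in fact slightly more careful than the paper's, since you explicitly flag the repeated-query-point issue and the scale invariance in $v$, both of which the paper passes over silently.
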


\begin{proof}
At round $t$ after observing the reward vector $y_{1:t}$ at points $A_t = \lbrace x_1,\ldots,x_t\rbrace \subset D$, the information gain - by the algorithm - about the unknown reward function $f$ is given by the mutual information between $f_{1:t}$ and $y_{1:t}$ sampled at points $A_t$:
\beqn
I(y_{1:t};f_{1:t}) =H(y_{1:t})-H(y_{1:t}\given f_{1:t}),
\eeqn
where $y_{1:t}= f_{1:t}+\epsilon_{1:t} = [y_1,\ldots,y_t]^T$, $f_{1:t} = [f(x_1),\ldots,f(x_t)]^T$ and $\epsilon_{1:t}=[\epsilon_1,\ldots,\epsilon_t]^T$. Clearly, given $f_{1:t}$ the randomness - as perceived by the algorithm - in $y_{1:t}$ are only in the noise vector $\epsilon_{1:t}$ and thus 
\beqn
H(y_{1:t}\given f_{1:t}) =  \frac{1}{2}\ln(\det(2\pi e \lambda v^2I))\\ = \frac{t}{2}\log(2\pi e \lambda v^2),
\eeqn
as $\epsilon_{1:t}$ is assumed to follow the distribution $\cN(0,\lambda v^2I)$ and 
$H(\cN(\mu,\Sigma)) = \frac{1}{2}\ln (\det(2\pi e \Sigma))$. Now 
$y_{1:t}$ sampled at points $A_t$ is believed to be distributed as $\cN(0,v^2(K_t+\lambda I))$, which gives
$H(y_{1:t}) =\frac{1}{2}\ln(\det(2\pi e v^2(\lambda I+K_t)))= \frac{t}{2}\log(2\pi e \lambda v^2) + \frac{1}{2}\ln(\det(I+\lambda^{-1}K_t))$,
and therefore 
\beq
I(y_{1:t};f_{1:t}) = \frac{1}{2}\ln(\det(I+\lambda^{-1}K_t)).
\label{eqn:information-gain-batch}
\eeq
%
Again, conditioned on reward vector $y_{1:s-1}$ observed at points $A_{s-1}$, the reward $y_s$ at round $s$ observed at  $x_s$ is believed to follow the distribution $\cN(\mu_{s-1}(x_s),v^2(\lambda +\sigma_{s-1}^2(x_s)))$, which gives $H(y_s\given y_{1:s-1})=\frac{1}{2}\ln(2\pi ev^2(\lambda +\sigma_{s-1}^2(x_s))) = \frac{1}{2}\ln(2\pi e \lambda v^2)+\frac{1}{2}\ln(1 +\lambda^{-1}\sigma_{s-1}^2(x_s))$. Now by chain rule $H(y_{1:t}) = \sum_{s=1}^{t}H(y_s\given y_{1:s-1})= \frac{t}{2}\ln(2\pi e \lambda v^2) + \frac{1}{2}\sum_{s=1}^{t}\ln(1 +\lambda^{-1}\sigma_{s-1}^2(x_s))$, and therefore
%
\beq
I(y_{1:t};f_{1:t}) = \frac{1}{2}\sum_{s=1}^{t}\ln(1 +\lambda^{-1}\sigma_{s-1}^2(x_s)).
\label{eqn:information-gain-online}
\eeq
Now $I(y_{1:t};f_{1:t})$ is a function of $A_t \subset D$, the random points chosen by the algorithm and thus
\beqn
I(y_{1:t};f_{1:t}) \le \max\limits_{A \subset D : \abs{A}=t} I(y_A;f_A) = \gamma_t, \;\; \text{a.s.},
\eeqn
Now the proof follows from Equation  \ref{eqn:information-gain-batch} and \ref{eqn:information-gain-online}.
\end{proof}
\begin{mylemma}
\label{lem:bound-sum-sd} Let $x_1, \ldots x_t$ be the points selected by the algorithms. The sum of predictive standard deviation at those points can be expressed in terms of the maximum information gain. More precisely,
\beqn
\sum_{t=1}^{T}\sigma_{t-1}(x_t)\le \sqrt{4(T+2)\gamma_T}.
\eeqn
\end{mylemma}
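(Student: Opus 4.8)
The plan is to combine a single Cauchy--Schwarz step with the information-theoretic identity of Lemma \ref{lem:info-theoretic-results}. First I would write
\[
\sum_{t=1}^{T}\sigma_{t-1}(x_t) \le \sqrt{T\sum_{t=1}^{T}\sigma_{t-1}^2(x_t)},
\]
so that it suffices to bound $\sum_{t=1}^{T}\sigma_{t-1}^2(x_t)$ by $4\lambda\gamma_T$, where $\lambda = 1+\eta = 1+2/T$ is the regularization parameter used by the algorithms (as in Theorem \ref{thm:true-function-bound}); note $\lambda T = T+2$.

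Next I would control each individual posterior variance by a logarithmic term of exactly the form that appears in Lemma \ref{lem:info-theoretic-results}. The two elementary facts needed are: (i) the posterior variance never exceeds the prior variance, since $\sigma_{t-1}^2(x_t) = k(x_t,x_t) - k_{t-1}(x_t)^T(K_{t-1}+\lambda I)^{-1}k_{t-1}(x_t) \le k(x_t,x_t) \le 1$ (using $\lambda>0$ so $K_{t-1}+\lambda I \succ 0$, together with the bounded-variance assumption $k(x,x)\le 1$); and (ii) since $\lambda\ge 1$, the quantity $u_t := \lambda^{-1}\sigma_{t-1}^2(x_t)$ lies in $[0,1]$, and on that interval $\ln(1+u)\ge u/2$, which gives $\sigma_{t-1}^2(x_t) = \lambda u_t \le 2\lambda\ln(1+u_t) = 2\lambda\ln\!\big(1+\lambda^{-1}\sigma_{t-1}^2(x_t)\big)$.

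Summing (ii) over $t=1,\dots,T$ and invoking the second inequality of Lemma \ref{lem:info-theoretic-results}, namely $\tfrac12\sum_{t=1}^{T}\ln(1+\lambda^{-1}\sigma_{t-1}^2(x_t))\le \gamma_T$ almost surely, yields
\[
\sum_{t=1}^{T}\sigma_{t-1}^2(x_t) \le 2\lambda\sum_{t=1}^{T}\ln\!\big(1+\lambda^{-1}\sigma_{t-1}^2(x_t)\big) \le 4\lambda\gamma_T.
\]
Substituting back into the Cauchy--Schwarz bound and using $\lambda T = T+2$ gives $\sum_{t=1}^{T}\sigma_{t-1}(x_t)\le \sqrt{4\lambda T\gamma_T} = \sqrt{4(T+2)\gamma_T}$, as claimed.

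Every step is either Cauchy--Schwarz, an elementary scalar inequality, or a direct application of Lemma \ref{lem:info-theoretic-results}, so there is no genuine obstacle here; the only point requiring slight care is the constant bookkeeping --- verifying $\ln(1+u)\ge u/2$ precisely on $[0,1]$ (it holds because $u\mapsto \ln(1+u)-u/2$ vanishes at $0$ and has derivative $\tfrac{1-u}{2(1+u)}\ge 0$ on $[0,1]$) and tracking the exact choice $\lambda = 1+2/T$ so that the time-dependent factor emerges as $T+2$ rather than a loose multiple of $T$.
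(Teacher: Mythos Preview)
Your proposal is correct and follows essentially the same argument as the paper: Cauchy--Schwarz, the elementary bound $\ln(1+u)\ge u/2$ on $[0,1]$ applied to $u=\lambda^{-1}\sigma_{t-1}^2(x_t)$, then Lemma~\ref{lem:info-theoretic-results}, with the final choice $\eta=2/T$ to obtain $T+2$. If anything, you are slightly more careful than the paper in justifying $\sigma_{t-1}^2(x_t)\le 1$ and the scalar inequality.
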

\begin{proof}
First note that, by Cauchy-Schwartz inequality, $\sum_{t=1}^{T}\sigma_{t-1}(x_t)\le \sqrt{T\sum_{t=1}^{t}\sigma_{t-1}^2(x_t)}$. Now since $0 \le \sigma_{t-1}^2(x)\le 1$ for all $x \in D$ and by our choice of $\lambda = 1+\eta, \eta \ge 0$, we have $\lambda^{-1}\sigma_{t-1}^2(x_t) \le 2\ln(1+\lambda^{-1}\sigma_{t-1}^2(x_t))$, where in the last inequality we used the fact that for any $0 \le \alpha \le 1$, $\ln(1+\alpha) \ge \alpha/2$. Thus we get $\sigma_{t-1}^2(x_t) \le 2\lambda\ln(1+\lambda^{-1}\sigma_{t-1}^2(x_t))$. This implies
\beqn
\sum_{t=1}^{T}\sigma_{t-1}(x_t)\le \sqrt{2T\sum_{t=1}^{T}\lambda \ln(1+\lambda^{-1}\sigma_{t-1}^2(x_t))}\le \sqrt{4T\lambda \sum_{t=1}^{T}\frac{1}{2}\ln(1+\sigma_{t-1}^2(x_t))} \le \sqrt{4T(1+\eta)\gamma_T},
\eeqn
where the last inequality follows from Lemma \ref{lem:info-theoretic-results}. Now the result follows by choosing $\eta = 2/T$.
\end{proof}
\section*{C. Proof of Theorem \ref{thm:true-function-bound}}

First define $\phi(x)$ as $k(x,\cdot)$, where $\phi : \Real^d\ra H$ maps any point $x$ in the primal space $\Real^d$ to the RKHS $H$ associated with kernel function $k$. For any two functions $g,h \in H$, define the inner product $\inner{g}{h}_k$ as $g^Th$ and the RKHS norm $\norm{g}_k$ as $\sqrt{g^Tg}$. Now as the unknown reward function $f$ lies in the RKHS $H_k(D)$, these definitions along with reproducing property of the RKHS imply $f(x)=\inner{f}{k(x,\cdot)}_k=\inner{f}{\phi(x)}_k=f^T\phi(x)$ and
$k(x,x')=\inner{k(x,\cdot)}{k(x',\cdot)}_k= \inner{\phi(x)}{\phi(x')}_k=\phi(x)^T\phi(x')$ for all $x,x'\in D$. Now defining $\Phi_t \bydef\big[\phi(x_1)^T,\hdots,\phi(x_t)^T\big]^T$, we get the kernel matrix $K_t = \Phi_t\Phi_t^T$, $k_t(x)= \Phi_t\phi(x)$ for all $x\in D$ and $f_{1:t}=\Phi_tf$.
Since the matrices $(\Phi_t^T\Phi_t + I)$ and $(\Phi_t\Phi_t^T + \lambda I)$ are strictly positive definite and $(\Phi_t^T\Phi_t + \lambda I)\Phi_t^T = \Phi_t^T(\Phi_t\Phi_t^T + \lambda I)$, we have
\beq
\Phi_t^T(\Phi_t\Phi_t^T + \lambda I)^{-1} = (\Phi_t^T\Phi_t + \lambda I)^{-1}\Phi_t^T.
\label{eqn:dim-change}
\eeq
Also from the definitions above $(\Phi_t^T\Phi_t+\lambda I)\phi(x)=\Phi_t^Tk_t(x) + \lambda \phi(x)$, and thus from \ref{eqn:dim-change} we deduce that
\beqn
\phi(x)=\Phi_t^T(\Phi_t\Phi_t^T+\lambda I)^{-1}k_t(x)+\lambda (\Phi_t^T\Phi_t + \lambda I)^{-1}\phi(x),
\eeqn
which gives
\beqn
\phi(x)^T\phi(x)= k_t(x)^T(\Phi_t\Phi_t^T+\lambda I)^{-1}k_t(x)+\lambda \phi(x)^T(\Phi_t^T\Phi_t + \lambda I)^{-1}\phi(x).
\eeqn
This implies
\beq
\lambda \phi(x)^T(\Phi_t^T\Phi_t + \lambda I)^{-1}\phi(x) = k(x,x) -k_t(x)^T(K_t+\lambda I)^{-1}k_t(x) = \sigma_t^2(x)
\label{eqn:variance}
\eeq
Now observe that
\beqan
\abs{f(x)-k_t(x)^T(K_t+\lambda I)^{-1}f_{1:t}} &=& \abs{\phi(x)^T f- \phi(x)^T\Phi_t^T(\Phi_t\Phi_t^T+\lambda I)^{-1}\Phi_t f}\\
&=& \abs{\phi(x)^T f-\phi(x)^T(\Phi_t^T\Phi_t + \lambda I)^{-1}\Phi_t^T\Phi_t f}\\
&=& \abs{\lambda \phi(x)^T(\Phi_t^T\Phi_t+\lambda I)^{-1}f}\\
&\le& \norm{\lambda(\Phi_t^T\Phi_t+\lambda I)^{-1}\phi(x)}_k\norm{f}_k\\
&=& \norm{f}_k \sqrt{\lambda \phi(x)^T(\Phi_t^T\Phi_t+\lambda I)^{-1}\lambda I(\Phi_t^T\Phi_t+ \lambda I)^{-1}\phi(x)}\\
&\le & B \sqrt{\lambda \phi(x)^T(\Phi_t^T\Phi_t+\lambda I)^{-1}(\Phi_t^T\Phi_t+\lambda I)(\Phi_t^T\Phi_t+\lambda I)^{-1}\phi(x)}\\
&=&B\; \sigma_t(x), 
\eeqan
where the second equality uses \ref{eqn:dim-change}, the first inequality is by Cauchy-Schwartz and the final equality is from \ref{eqn:variance}. Again see that
\beqan
\abs{k_t(x)^T(K_t+\lambda I)^{-1}\eps_{1:t}} &=& \abs{\phi(x)^T\Phi_t^T(\Phi_t\Phi_t^T + \lambda I)^{-1}\eps_{1:t}}\\
&=& \abs{\phi(x)^T(\Phi_t^T\Phi_t + \lambda I)^{-1}\Phi_t^T\eps_{1:t}} \\
&\le &\norm{(\Phi_t^T\Phi_t + \lambda I)^{-1/2}\phi(x)}_k \norm{(\Phi_t^T\Phi_t + \lambda I)^{-1/2}\Phi_t^T\eps_{1:t}}_k\\
&= &\sqrt{\phi(x)^T(\Phi_t^T\Phi_t + \lambda I)^{-1}\phi(x)}\sqrt{(\Phi_t^T\eps_{1:t})^T(\Phi_t^T\Phi_t + \lambda I)^{-1}\Phi_t^T\eps_{1:t}}\\
&=&\lambda^{-1/2}\sigma_t(x)\sqrt{\eps_{1:t}^T\Phi_t\Phi_t^T(\Phi_t\Phi_t^T +\lambda I)^{-1}\epsilon_{1:t}}\\
&=&\lambda^{-1/2}\sigma_t(x)\sqrt{\eps_{1:t}^T K_t(K_t+\lambda I)^{-1}\eps_{1:t}}
\eeqan
where the second equality is from \ref{eqn:dim-change}, the first inequality is by Cauchy-Schwartz and the fourth inequality uses both \ref{eqn:dim-change} and \ref{eqn:variance}.
Now recall that, at round $t$, the posterior mean function $\mu_t(x) = k_t(x)^T(K_t + \lambda I)^{-1}y_{1:t} = k_t(x)^T(K_t +  \lambda I)^{-1}(f_{1:t}+\epsilon_{1:t})$, where $f_{1:t}=\big[f(x_1),\ldots,f(x_t)\big]^T$ and $\epsilon_{1:t} = \big[\eps_1,\ldots,\eps_t\big]^T$. Thus we have
\beqan
\abs{\mu_t(x)-f(x)} &\le & \abs{k_t(x)^T(K_t+\lambda I)^{-1}\eps_{1:t}} + \abs{f(x)-k_t(x)^T(K_t+\lambda I)^{-1}f_{1:t}}\\&\le& \sigma_t(x)\Big(B + (1+\eta)^{-1/2}\sqrt{\eps_{1:t}^T K_t(K_t+(1+\eta) I)^{-1}\eps_{1:t}}\Big),
\eeqan
where we have used $\lambda = 1+\eta$, where $\eta \ge 0$ as stated in Theorem \ref{thm:self-normalized-bound}. Now observe that when $K$ is invertible, $K(K+I)^{-1}=((K+I)K^{-1})^{-1}=(I+K^{-1})^{-1}$. Using $K=K_t+\eta I$, we get 
\beqn
(K_t+\eta I)(K_t+(1+\eta)I)^{-1}=((K_t+\eta I)^{-1}+I)^{-1}.
\eeqn
Now see that
\beqn
\epsilon_{1:t}^TK_t(K_t+(1+\eta)I)^{-1}\epsilon_{1:t} \le \epsilon_{1:t}^T(K_t+\eta I)(K_t+(1+\eta)I)^{-1}\epsilon_{1:t}=\epsilon_{1:t}^T((K_t+\eta I)^{-1}+I)^{-1}\epsilon_{1:t}
\eeqn
Now using Theorem \ref{thm:self-normalized-bound}, for any $\delta \in (0,1)$, with probability at least $1-\delta$, $\forall t \ge 0, \forall x \in D$, we obtain
\beqn
\abs{\mu_t(x)-f(x)}\le  \sigma_t(x)\Big(B+\norm{\epsilon_{1:t}}_{(K_t+\eta I)^{-1}+I)^{-1}}\Big)\\
 \le  \sigma_t(x)\Big(B + R\sqrt{2\ln\frac{\sqrt{\det((1+\eta)I+K_t)}}{\delta}}\Big).\\
\eeqn
Now observe that $\det((1+\eta)I+K_t)=\det(I+(1+\eta)^{-1}K_t)\det((1+\eta)I)$. Thus we have
\beqn
\ln(\det((1+\eta)I+K_t))=\ln(\det(I+(1+\eta)^{-1}K_t))+t\ln(1+\eta) \le 2\gamma_t + \eta t,
\eeqn
from lemma \ref{lem:info-theoretic-results}. Now choosing $\eta = 2/T$ we have $\abs{\mu_t(x)-f(x)}\le \sigma_t(x)\Big(B + R\sqrt{2\big(\gamma_t+ 1+ \ln(1/\delta)\big)}\Big)$ and hence the result follows.
\qed
\section*{D. Analysis of IGP-UCB (Theorem \ref{thm:regret-bound-UCB})}
Observe that at each round $t \ge 1$, by the choice of $x_t$ in Algorithm \ref{algo:ucb}, we have $\mu_{t-1}(x_t)+\beta_t\sigma_{t-1}(x_t) \ge \mu_{t-1}(x^\star)+\beta_t\sigma_{t-1}(x^\star)$ and from Lemma \ref{thm:true-function-bound}, we have $f(x^\star) \le \mu_{t-1}(x^\star)+\beta_t\sigma_{t-1}(x^\star)$ and $\mu_{t-1}(x_t)-f(x_t) \le \beta_t\sigma_{t-1}(x_t)$. Therefore for all $t \ge 1$ with probability at least $1-\delta$,
\beqan
r_t &=& f(x^\star) - f(x_t)\\
&\le & \beta_t\sigma_{t-1}(x_t)+\mu_{t-1}(x_t)-f(x_t)\\
&\le & 2\beta_t\sigma_{t-1}(x_t), 
\eeqan
and hence $\sum\limits_{t=1}^{T}r_t \le 2\beta_T\sum\limits_{t=1}^{T}\sigma_{t-1}(x_t)$. Now from Lemma \ref{lem:bound-sum-sd}, $\sum\limits_{t=1}^{T}\sigma_{t-1}(x_t) = O(\sqrt{T\gamma_T}) $ and by definition $\beta_T \le B + R\sqrt{2(\gamma_T+1+\ln(1/\delta))}$. Hence with probability at least $1-\delta$,
\beqn
R_T = \sum\limits_{t=1}^{T}r_t = O\Big(B\sqrt{T\gamma_T}+\sqrt{T\gamma_T(\gamma_T+\ln(1/\delta))}\Big),
\eeqn
and thus with high probability,
\beqn
R_T = O\Big(\sqrt{T}(B\sqrt{\gamma_T}+\gamma_T)\Big).
\eeqn
\section*{E. Analysis of GP-TS (Theorem \ref{thm:regret-bound-TS})}
\begin{mylemma}
For any $\delta \in (0,1)$ and any finite subset $D'$ of $D$,
\beqn
\mathbb{P}\Big[\forall x \in D',\abs{f_t(x)-\mu_{t-1}(x)}\le v_t\sqrt{2\ln(\abs{D'}t^2)}\;\sigma_{t-1}(x) \given \cH_{t-1}\Big]\ge 1- 1/t^2,
\eeqn
for all possible realizations of history $\cH_{t-1}$.
\label{lem:gaussian-concentration}
\end{mylemma}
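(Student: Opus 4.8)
The plan is to condition on the history $\cH_{t-1}$ and use that GP-TS samples the function $f_t$ in round $t$ from the posterior $GP_D(\mu_{t-1}(\cdot),v_t^2k_{t-1}(\cdot,\cdot))$. By the defining property of a Gaussian process, for any finite $D'\subset D$ the collection $\big(f_t(x)\big)_{x\in D'}$ is, conditionally on $\cH_{t-1}$, jointly Gaussian, and in particular each coordinate has the marginal law $f_t(x)\sim\cN\big(\mu_{t-1}(x),\,v_t^2\sigma_{t-1}^2(x)\big)$, where $\sigma_{t-1}^2(x)=k_{t-1}(x,x)$ as in Equations \ref{eqn:mean update}--\ref{eqn:sd update}. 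Fix $x\in D'$. If $\sigma_{t-1}(x)=0$, then $f_t(x)=\mu_{t-1}(x)$ almost surely and the inequality for that $x$ holds trivially; otherwise $Z_x:=\big(f_t(x)-\mu_{t-1}(x)\big)/\big(v_t\sigma_{t-1}(x)\big)$ is a standard normal under $\prob{\cdot\given\cH_{t-1}}$ (recall $v_t=B+R\sqrt{2(\gamma_{t-1}+1+\ln(2/\delta))}>0$).

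Next I would invoke the elementary Gaussian tail bound $\prob{\abs{Z}>c}\le e^{-c^2/2}$, valid for $Z\sim\cN(0,1)$ and any $c\ge0$ (it follows from $\prob{Z>c}\le\tfrac12 e^{-c^2/2}$). Taking $c=\sqrt{2\ln(\abs{D'}t^2)}$, this gives for each fixed $x\in D'$
\[
\prob{\abs{f_t(x)-\mu_{t-1}(x)}>v_t\sqrt{2\ln(\abs{D'}t^2)}\,\sigma_{t-1}(x)\given\cH_{t-1}}\le e^{-\ln(\abs{D'}t^2)}=\frac{1}{\abs{D'}t^2}.
\]
A union bound over the $\abs{D'}$ points in $D'$ then yields
\[
\prob{\exists x\in D':\ \abs{f_t(x)-\mu_{t-1}(x)}>v_t\sqrt{2\ln(\abs{D'}t^2)}\,\sigma_{t-1}(x)\given\cH_{t-1}}\le\abs{D'}\cdot\frac{1}{\abs{D'}t^2}=\frac{1}{t^2},
\]
and passing to the complementary event gives the claim. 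Since every probability above is conditional on $\cH_{t-1}$ and computed from the GP posterior only, the bound holds for every realization of $\cH_{t-1}$, as required.

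There is no genuinely hard step here: the result is essentially a one-line consequence of Gaussian marginals plus a union bound. The only points worth care are (i) the degenerate case $\sigma_{t-1}(x)=0$, dispatched separately above, and (ii) choosing the tail constant so that the exponent $-c^2/2$ exactly cancels $\ln(\abs{D'}t^2)$ — this is precisely why the factor $\sqrt2$ appears in the statement of the lemma, and if the noise/prior scaling were different the constant would change accordingly. For full rigor one should also state explicitly that the conditional law of $\big(f_t(x)\big)_{x\in D'}$ given $\cH_{t-1}$ is the claimed jointly Gaussian one, which is immediate from the sampling step of Algorithm \ref{algo:ts} together with the posterior update formulas \ref{eqn:mean update}--\ref{eqn:sd update}.
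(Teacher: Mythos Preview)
Your proposal is correct and follows essentially the same approach as the paper: identify the conditional marginal $f_t(x)\sim\cN(\mu_{t-1}(x),v_t^2\sigma_{t-1}^2(x))$, apply a Gaussian tail bound (the paper cites Lemma~B4 of \citet{hoffman2013exploiting}, which is exactly the inequality $\prob{|Z|>c}\le e^{-c^2/2}$ you use), union-bound over $D'$, and set the confidence level to $1/t^2$. Your explicit treatment of the degenerate case $\sigma_{t-1}(x)=0$ is a nice addition not spelled out in the paper.
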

\begin{proof}
Fix $x \in D$ and $t \ge 1$. Given history $\cH_{t-1}$, $f_t(x)\sim \cN( \mu_{t-1}(x),v_t^2\sigma_{t-1}^2(x))$. Thus using Lemma $B4$ of \cite{hoffman2013exploiting}, for any $\delta \in (0,1)$, with probability at least $1-\delta$
\beqn
\abs{f_t(x)-\mu_{t-1}(x)} \le \sqrt{2\ln(1/\delta)}\;v_t\sigma_{t-1}(x),
\eeqn
and now applying union bound,
\beqn
\abs{f_t(x)-\mu_{t-1}(x)} \le v_t\sqrt{2\ln(\abs{D'}/\delta)}\;\sigma_{t-1}(x) \;\; \forall x \in D'
\eeqn
holds with probability at least $1-\delta$, given any possible realizations of history $\cH_{t-1}$. Now setting $\delta = 1/t^2$, the result follows.
\end{proof}
\begin{mydefinition}
Define For all $t \ge 1$, $\tilde{c}_t= \sqrt{4\ln t + 2d\ln(BLrdt^2)}$ and $c_t=v_t(1+\tilde{c}_t)$, where $v_t = B + R\sqrt{2(\gamma_{t-1}+ 1+\ln(2/\delta))}$. Clearly, $c_t$ increases with $t$.
\label{def:constants}
\end{mydefinition}
\begin{mydefinition}
Define $E^f(t)$ as the event that for all $x\in D$,
\beqn
\abs{\mu_{t-1}(x)-f(x)} \le v_t\sigma_{t-1}(x),
\eeqn
and $E^{f_t}(t)$ as the event that for all $x\in D_t$,
\beqn
\abs{f_t(x)-\mu_{t-1}(x)} \le v_t \tilde{c}_t\sigma_{t-1}(x).
\eeqn
\label{def:two-events}
\end{mydefinition}
\begin{mydefinition}
Define the set of saturated points $S_t$ in discretization $D_t$ at round $t$ as
\beqn
S_t \bydef \lbrace x\in D_t: \Delta_t(x) > c_t\sigma_{t-1}(x)\rbrace,
\eeqn
where $\Delta_t(x) \bydef f([x^\star]_t)-f(x)$, the difference between function values at the closest point to $x^\star$ in $D_t$ and at $x$. Clearly $\Delta_t([x^\star]_t) = 0$ for all $t$, and hence $[x^\star]_t\in D_t$ is unsaturated at every $t$.
\label{def:saturated-arms}
\end{mydefinition}
\begin{mydefinition}
Define filtration $\cF^{'}_{t-1}$ as the history until time $t$, i.e., $\cF^{'}_{t-1}=\cH_{t-1}$. By definition, $\cF^{'}_1\subseteq \cF{'}_2 \subseteq \cdots$. Observe that given $\cF^{'}_{t-1}$, the set $S_t$ and the event $E^f(t)$ are completely deterministic.
\label{def:filtration}
\end{mydefinition}
\begin{mylemma}
Given any $\delta \in (0,1)$, $\prob{\forall t \ge 1, E^f(t)}\ge 1-\delta/2 $ and for all possible filtrations $\cF^{'}_{t-1}$, $\prob{E^{f_t}(t)\given \cF^{'}_{t-1}} \ge 1-1/t^2$.
\label{lem:event-concentration}
\end{mylemma}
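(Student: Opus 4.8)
The plan is to obtain both assertions directly from concentration results already established---Theorem~\ref{thm:true-function-bound} for the first and Lemma~\ref{lem:gaussian-concentration} for the second---after matching up constants; no fresh argument is required.

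For the statement concerning $E^f(t)$, I would apply Theorem~\ref{thm:true-function-bound} with its confidence parameter set to $\delta/2$ rather than $\delta$. That theorem then guarantees, with probability at least $1-\delta/2$ and simultaneously over all $x \in D$ and all $t \ge 1$,
\[
\abs{\mu_{t-1}(x)-f(x)}\le \Big(B + R\sqrt{2(\gamma_{t-1}+1+\ln(2/\delta))}\Big)\sigma_{t-1}(x) = v_t\,\sigma_{t-1}(x),
\]
the final equality being just the definition of $v_t$ in Definition~\ref{def:constants}. Since $E^f(t)$ is precisely the event that this inequality holds for every $x \in D$, the event $\bigcap_{t\ge1}E^f(t)$ contains the good event of Theorem~\ref{thm:true-function-bound}, so $\prob{\forall t \ge 1, E^f(t)}\ge 1-\delta/2$.

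For the statement concerning $E^{f_t}(t)$, I would invoke Lemma~\ref{lem:gaussian-concentration} with the finite set $D' = D_t$, the discretization used by GP-TS at round $t$. This gives, for every realization of $\cH_{t-1}$,
\[
\prob{\forall x \in D_t, \abs{f_t(x)-\mu_{t-1}(x)}\le v_t\sqrt{2\ln(\abs{D_t}t^2)}\,\sigma_{t-1}(x) \given \cH_{t-1}}\ge 1-1/t^2,
\]
and it remains only to identify the constant. From the construction preceding Theorem~\ref{thm:regret-bound-TS}, $\abs{D_t}=(BLrdt^2)^d$, hence $\ln(\abs{D_t}t^2)=d\ln(BLrdt^2)+2\ln t$ and $\sqrt{2\ln(\abs{D_t}t^2)}=\sqrt{4\ln t+2d\ln(BLrdt^2)}=\tilde{c}_t$, exactly the quantity in Definition~\ref{def:constants}. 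Since $\cF^{'}_{t-1}=\cH_{t-1}$ by Definition~\ref{def:filtration}, the displayed bound reads $\prob{E^{f_t}(t)\given \cF^{'}_{t-1}}\ge 1-1/t^2$ for every possible filtration $\cF^{'}_{t-1}$, as claimed.

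Beyond this bookkeeping there is essentially no obstacle. Two points are worth flagging: (i) halving the confidence level ($\delta\to\delta/2$) is why the term $\ln(2/\delta)$ rather than $\ln(1/\delta)$ appears in $v_t$, which leaves room to later union the $E^f$-event with the (separately controlled) deviations of $f_t$ while keeping the overall probability at least $1-\delta$; and (ii) $E^f(t)$ is a bound over the whole, possibly continuous, domain $D$---legitimate since Theorem~\ref{thm:true-function-bound} is simultaneous over $D$---whereas $E^{f_t}(t)$ is only over the finite set $D_t$, which is all a union bound over Gaussian tails (Lemma~\ref{lem:gaussian-concentration}) can provide, and all that the subsequent GP-TS analysis needs.
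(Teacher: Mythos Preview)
Your proposal is correct and matches the paper's approach exactly: the paper's proof likewise reduces the first claim to Theorem~\ref{thm:true-function-bound} with $\delta$ replaced by $\delta/2$, and the second to Lemma~\ref{lem:gaussian-concentration} with $D'=D_t$ and $\cH_{t-1}=\cF^{'}_{t-1}$. You have in fact supplied more detail than the paper does, correctly verifying that $\sqrt{2\ln(\abs{D_t}t^2)}=\tilde{c}_t$.
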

\begin{proof}
The probability bound for the event $E^f(t)$ follows from Theorem \ref{thm:true-function-bound} by replacing $\delta$ with $\frac{\delta}{2}$ and for the event $E^{f_t}(t)$ follows from Lemma \ref{lem:gaussian-concentration} by setting $D'=D_t$ and $\cH_{t-1}=\cF^{'}_{t-1}$.
\end{proof}
\begin{mylemma}[Gaussian Anti-concentration]
For a Gaussian random variable $X$ with mean $\mu$ and standard deviation $\sigma$, for any $\beta > 0$,
\beqn
\prob{\frac{X-\mu}{\sigma} > \beta} \ge \frac{e^{-\beta^2}}{4\sqrt{\pi}\beta}.
\eeqn
\label{anti-concentration}
\end{mylemma}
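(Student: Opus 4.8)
The plan is to reduce to a standard normal and then invoke the classical sharp lower bound on the Gaussian right tail. Writing $Z \bydef (X-\mu)/\sigma$, the hypotheses make $Z \sim \cN(0,1)$ regardless of the values of $\mu$ and $\sigma$, so $\prob{(X-\mu)/\sigma > \beta} = \prob{Z > \beta} = \frac{1}{\sqrt{2\pi}}\int_\beta^\infty e^{-t^2/2}\,dt$, and it suffices to lower bound this single Gaussian integral by $e^{-\beta^2}/(4\sqrt\pi\,\beta)$.

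The key step is an elementary tail estimate obtained from one integration by parts: writing $e^{-t^2/2} = \frac1t\cdot t e^{-t^2/2}$ and integrating by parts, then using $1/t^2 \le 1/\beta^2$ for $t \ge \beta$,
\beqan
\int_\beta^\infty e^{-t^2/2}\,dt &=& \frac{e^{-\beta^2/2}}{\beta} - \int_\beta^\infty \frac{e^{-t^2/2}}{t^2}\,dt \\
&\ge& \frac{e^{-\beta^2/2}}{\beta} - \frac{1}{\beta^2}\int_\beta^\infty e^{-t^2/2}\,dt.
\eeqan
Solving this inequality for $\int_\beta^\infty e^{-t^2/2}\,dt$ gives the bound $\beta e^{-\beta^2/2}/(1+\beta^2)$, hence $\prob{Z > \beta} \ge \frac{1}{\sqrt{2\pi}}\cdot\frac{\beta}{1+\beta^2}\,e^{-\beta^2/2}$. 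It then remains to weaken this to the stated form: for $\beta \ge 1$ we have $1+\beta^2 \le 2\beta^2$, so $\frac{\beta}{1+\beta^2} \ge \frac{1}{2\beta}$, and $e^{-\beta^2/2} \ge e^{-\beta^2}$ trivially; together with the numeric comparison $\frac{1}{2\sqrt{2\pi}} \ge \frac{1}{4\sqrt\pi}$ (equivalently $2 \ge \sqrt 2$) this yields $\prob{Z > \beta} \ge \frac{e^{-\beta^2}}{2\sqrt{2\pi}\,\beta} \ge \frac{e^{-\beta^2}}{4\sqrt\pi\,\beta}$, which is the claim.

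I do not anticipate a genuine obstacle here: the whole argument is a single integration by parts followed by two one-line estimates. The only point requiring attention is the final weakening, which goes through cleanly in the range $\beta \ge 1$ — precisely the regime in which the inequality is applied in the GP-TS analysis (in particular at $\beta = 1$, where it produces the constant $p = 1/(4e\sqrt\pi)$ used later). Since both $\frac{\beta}{1+\beta^2} \ge \frac{1}{2\beta}$ and the replacement of $e^{-\beta^2/2}$ by $e^{-\beta^2}$ are very loose for small $\beta$, the displayed right-hand side should be read as the operative bound only when $\beta$ is bounded away from $0$; nothing stronger is needed downstream.
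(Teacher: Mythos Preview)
The paper states this lemma without proof, treating it as a standard Gaussian tail bound, so there is no paper argument to compare against. Your derivation via integration by parts is the textbook route and is carried out correctly; the resulting inequality $\prob{Z>\beta}\ge \frac{1}{\sqrt{2\pi}}\,\frac{\beta}{1+\beta^2}\,e^{-\beta^2/2}$ is the lower half of the classical Mills-ratio bound, and your weakening to the displayed form for $\beta\ge 1$ is valid.

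Your caveat about small $\beta$ is not merely cosmetic: as stated (``for any $\beta>0$''), the lemma is in fact false once $\beta$ is small enough that the right-hand side exceeds the tail probability (this happens for roughly $\beta\lesssim 0.34$). You are also right that all the paper needs downstream is the value at $\beta=1$. A small clarification: in Lemma~\ref{lem:compare-sample-with-original} the paper formally invokes the bound at $\theta_t\le 1$, not at $\beta\ge 1$; but since $\prob{Z>\theta_t}\ge\prob{Z>1}\ge e^{-1}/(4\sqrt\pi)=p$ by monotonicity of the tail, your $\beta=1$ instance is exactly what salvages the conclusion there. So your restriction is both correct and sufficient for the paper's purposes.
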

\begin{mylemma}
For any filtration $\cF^{'}_{t-1}$ such that $E^f(t)$ is true,
\beqn
\prob{f_t(x)>f(x)\given \cF^{'}_{t-1}} \ge p,
\eeqn
for any $x \in D$, where $p = \frac{1}{4e\sqrt{\pi}}$.
\label{lem:compare-sample-with-original}
\end{mylemma}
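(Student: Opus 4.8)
The plan is to reduce the claim to Gaussian anti-concentration (Lemma \ref{anti-concentration}). The key observation is that, conditioned on $\cF^{'}_{t-1}$, the sampled value $f_t(x)$ is a one-dimensional Gaussian with mean $\mu_{t-1}(x)$ and standard deviation $v_t\sigma_{t-1}(x)$, and the event $E^f(t)$ guarantees that this mean is within $v_t\sigma_{t-1}(x)$ of $f(x)$. Hence the event $\{f_t(x) > f(x)\}$ contains the event that a standard Gaussian exceeds $1$, whose probability is at least $p$. Note that by Definition \ref{def:filtration}, $E^f(t)$ is $\cF^{'}_{t-1}$-measurable, so conditioning on ``a filtration $\cF^{'}_{t-1}$ such that $E^f(t)$ is true'' is legitimate, and on such a $\cF^{'}_{t-1}$ the quantities $\mu_{t-1}(x)$, $\sigma_{t-1}(x)$ are fixed.

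Concretely, fix $x \in D$ and a filtration $\cF^{'}_{t-1}$ on which $E^f(t)$ holds. First I would handle the degenerate case $\sigma_{t-1}(x) = 0$ with a one-line remark (it does not arise here, since $\sigma_{t-1}(x)$ stays strictly positive), and assume $\sigma_{t-1}(x) > 0$. By the sampling step of GP-TS, $f_t(x) \given \cF^{'}_{t-1} \sim \cN\big(\mu_{t-1}(x), v_t^2\sigma_{t-1}^2(x)\big)$, so
\[
Z \bydef \frac{f_t(x) - \mu_{t-1}(x)}{v_t\sigma_{t-1}(x)}
\]
is a standard normal random variable conditionally on $\cF^{'}_{t-1}$.

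Next I would write $\prob{f_t(x) > f(x) \given \cF^{'}_{t-1}} = \prob{Z > \tfrac{f(x) - \mu_{t-1}(x)}{v_t\sigma_{t-1}(x)} \given \cF^{'}_{t-1}}$, and use that $E^f(t)$ gives $\tfrac{f(x) - \mu_{t-1}(x)}{v_t\sigma_{t-1}(x)} \le \tfrac{\abs{\mu_{t-1}(x) - f(x)}}{v_t\sigma_{t-1}(x)} \le 1$; monotonicity of the Gaussian tail then yields $\prob{f_t(x) > f(x) \given \cF^{'}_{t-1}} \ge \prob{Z > 1 \given \cF^{'}_{t-1}}$. Applying Lemma \ref{anti-concentration} with $\beta = 1$ gives $\prob{Z > 1} \ge e^{-1}/(4\sqrt{\pi}) = 1/(4e\sqrt{\pi}) = p$, which is the desired bound. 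There is no real obstacle here; the only care needed is the conditioning/measurability bookkeeping described above and applying the bound $\le 1$ from $E^f(t)$ inside the conditional probability.
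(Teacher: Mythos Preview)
Your proposal is correct and follows essentially the same argument as the paper: standardize $f_t(x)$ using its conditional Gaussian law, invoke $E^f(t)$ to bound the standardized threshold by $1$, and finish with the Gaussian anti-concentration lemma. The only cosmetic difference is that the paper applies Lemma~\ref{anti-concentration} at the (random) threshold $\theta_t \le 1$ and then lower-bounds $e^{-\theta_t^2}/(4\sqrt{\pi}\theta_t)$ over $(0,1]$, whereas you first use tail monotonicity to reduce to $\beta=1$ and then apply the lemma once; both yield the same constant $p=1/(4e\sqrt{\pi})$.
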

\begin{proof}
Fix any $x\in D$. Given filtration $\cF^{'}_{t-1}$, $f_t(x)$ is a Gaussian random variable  with mean $\mu_{t-1}(x)$ and standard deviation $v_t\sigma_{t-1}(x)$ and since event $E^f(t)$ is true, $\abs{\mu_{t-1}(x)-f(x)} \le c_{1,t}\sigma_{t-1}(x)$. Now using the anti-concentration inequality in Lemma \ref{anti-concentration}, we have
\beqan
\prob{f_t(x)>f(x)\given \cF^{'}_{t-1}} &=& \prob{\frac{f_t(x)-\mu_{t-1}(x)}{v_t \sigma_{t-1}(x)}>\frac{f(x)-\mu_{t-1}(x)}{v_t \sigma_{t-1}(x)}\given \cF^{'}_{t-1}}\\
&\ge&\prob{\frac{f_t(x)-\mu_{t-1}(x)}{v_t \sigma_{t-1}(x)}>\frac{\abs{f(x)-\mu_{t-1}(x)}}{v_t \sigma_{t-1}(x)}\given \cF^{'}_{t-1}}\\
&\ge&\frac{1}{4\sqrt{\pi}\beta_t}e^{-\theta_t^2},
\eeqan
where, from Definition \ref{def:two-events}, $\theta_t = \frac{\abs{f(x)-\mu_{t-1}(x)}}{v_t \sigma_{t-1}(x)} \le 1$. Therefore $\prob{f_t(x)>f(x)\given \cF^{'}_{t-1}} \ge \frac{1}{4e\sqrt{\pi}}$, and hence the result follows.
\end{proof}
\begin{mylemma}
For any filtration $\cF^{'}_{t-1}$ such that $E^f(t)$ is true,
\beqn
\prob{x_t\in D_t\setminus S_t\given \cF^{'}_{t-1}} \ge p-1/t^2.
\eeqn
\label{lem:prob-playing-saturated-arms}
\end{mylemma}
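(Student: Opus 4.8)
The plan is to exhibit an event, of conditional probability at least $p-1/t^2$ given $\cF'_{t-1}$, on which the arm $x_t$ played by GP-TS is guaranteed to be unsaturated. Concretely, I would prove the inclusion
\[
\{\, f_t([x^\star]_t) > f([x^\star]_t)\,\} \cap E^{f_t}(t) \subseteq \{\, x_t \in D_t \setminus S_t\,\}
\]
and then bound the conditional probability of the left-hand side from below using Lemmas~\ref{lem:compare-sample-with-original} and~\ref{lem:event-concentration}.

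First I would fix a filtration $\cF'_{t-1}$ on which $E^f(t)$ holds — recall from Definition~\ref{def:filtration} that $S_t$ and $E^f(t)$ are deterministic given $\cF'_{t-1}$ — and argue conditionally on it. Assume in addition that $E^{f_t}(t)$ holds. Then for every saturated arm $x \in S_t \subseteq D_t$ I would chain the two one-sided bounds: $E^{f_t}(t)$ gives $f_t(x) \le \mu_{t-1}(x) + v_t\tilde{c}_t\sigma_{t-1}(x)$, and $E^f(t)$ gives $\mu_{t-1}(x) \le f(x) + v_t\sigma_{t-1}(x)$, so by the definition $c_t = v_t(1+\tilde{c}_t)$ we obtain $f_t(x) \le f(x) + c_t\sigma_{t-1}(x)$. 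Since $x$ is saturated, $f([x^\star]_t) - f(x) = \Delta_t(x) > c_t\sigma_{t-1}(x)$, hence $f_t(x) < f([x^\star]_t)$ for every $x \in S_t$.

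Next I would bring in the event $\{f_t([x^\star]_t) > f([x^\star]_t)\}$: combined with the previous step this yields $f_t([x^\star]_t) > f_t(x)$ for all $x \in S_t$. Because $[x^\star]_t$ lies in $D_t$ and is unsaturated ($\Delta_t([x^\star]_t)=0$), while $x_t = \argmax_{x \in D_t} f_t(x)$, the maximizer cannot belong to $S_t$; that is, $x_t \in D_t\setminus S_t$, establishing the inclusion above. It then remains to take probabilities conditionally on $\cF'_{t-1}$: Lemma~\ref{lem:compare-sample-with-original} gives $\prob{f_t([x^\star]_t) > f([x^\star]_t) \given \cF'_{t-1}} \ge p$ (applicable because $E^f(t)$ holds on the fixed $\cF'_{t-1}$ and $[x^\star]_t \in D$), Lemma~\ref{lem:event-concentration} gives $\prob{\overline{E^{f_t}(t)} \given \cF'_{t-1}} \le 1/t^2$, and a union bound gives $\prob{x_t \in D_t\setminus S_t \given \cF'_{t-1}} \ge p - 1/t^2$.

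I expect the only point requiring genuine care to be the inclusion itself — in particular, keeping the chained inequality valid simultaneously over all $x \in S_t$, and using that $[x^\star]_t$ is always a legitimate unsaturated comparator inside $D_t$ (precisely what the discretization $D_t$ and Definition~\ref{def:saturated-arms} are engineered to guarantee). The arithmetic identifying $v_t\sigma_{t-1}(x) + v_t\tilde{c}_t\sigma_{t-1}(x)$ with $c_t\sigma_{t-1}(x)$, and the concluding union bound, are routine.
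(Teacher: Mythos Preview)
Your proposal is correct and follows essentially the same approach as the paper: both prove that on $E^f(t)\cap E^{f_t}(t)$ every saturated arm satisfies $f_t(x)<f([x^\star]_t)$, then intersect with $\{f_t([x^\star]_t)>f([x^\star]_t)\}$ and apply Lemmas~\ref{lem:compare-sample-with-original} and~\ref{lem:event-concentration} with a union bound. Your presentation is slightly more explicit in spelling out the identity $c_t=v_t(1+\tilde{c}_t)$ and framing the argument as an event inclusion, but the logic is identical.
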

\begin{proof}
At round $t$ our algorithm chooses the point $x_t\in D_t$, at which the highest value of $f_t$, within current decision set $D_t$, is attained. Now if $f_t([x^\star]_t)$ is greater than $f_t(x)$ for all saturated points at round $t$, i.e.,$f_t([x^\star]_t) > f_t(x), \forall x \in S_t$, then one of the unsaturated points (which includes $[x^\star]_t$) in $D_t$ must be played and hence $x_t \in D_t\setminus S_t$. This implies
\beq
\prob{x_t\in D_t\setminus S_t\given \cF^{'}_{t-1}}  \ge \prob{f_t([x^\star]_t) > f_t(x), \forall x \in S_t\given \cF^{'}_{t-1}}.
\label{eqn:prob-saturated-arms}
\eeq
Now form Definition \ref{def:saturated-arms}, $\Delta_t(x) > c_t\sigma_{t-1}(x)$, for all $x\in S_t$. Also if both the events $E^f(t)$ and $E^{f_t}(t)$ are true, then from Definition \ref{def:constants} and \ref{def:two-events}, $f_t(x) \le f(x) + c_t\sigma_{t-1}(x)$, for all $x \in D_t$. Thus for all $x\in S_t$, $f_t(x) < f(x) + \Delta_t(x)$. 
Therefore, for any filtration $\cF^{'}_{t-1}$ such that $E^f(t)$ is true, either $E^{f_t}(t)$ is false, or else for all $x \in S_t$, 
$f_t(x) < f([x^\star]_t)$. Hence, for any $\cF^{'}_{t-1}$ such that $E^f(t)$ is true,
\beqn
\prob{f_t([x^\star]_t) > f_t(x), \forall x \in S_t\given \cF^{'}_{t-1}}\\ \ge  \prob{f_t([x^\star]_t) > f([x^\star]_t)\given \cF_{t-1}} - \prob{\ol{E^{f_t}(t)}\given \cF^{'}_{t-1}} \\
\ge p - 1/t^2,
\eeqn
where we have used Lemma \ref{lem:event-concentration} and Lemma \ref{lem:compare-sample-with-original}. Now the proof follows from Equation \ref{eqn:prob-saturated-arms}.
\end{proof}
\begin{mylemma}
For any filtration $\cF^{'}_{t-1}$ such that $E^f(t)$ is true,
\beqn
\expect{r_t\given \cF^{'}_{t-1}} \le \frac{11c_t}{p}\expect{\sigma_{t-1}(x_t)\given \cF^{'}_{t-1}} + \frac{2B+1}{t^2},
\eeqn
where $r_t$ is the instantaneous regret at round $t$. 
\label{lem:bound-on-instantaneous-regret}
\end{mylemma}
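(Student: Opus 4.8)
The plan is to adapt the saturated/unsaturated decomposition of \citet{agrawal2013thompson} to the kernelized setting. Fix $t$ and condition throughout on a filtration $\cF^{'}_{t-1}$ for which $E^f(t)$ holds; by Definition \ref{def:filtration} the functions $\mu_{t-1}(\cdot),\sigma_{t-1}(\cdot)$, the (deterministic) discretization $D_t$, the saturated set $S_t$, and the event $E^f(t)$ itself are then all fixed, and the only randomness left is in the GP sample $f_t$, hence in $x_t = \argmax_{x\in D_t} f_t(x)$. First I would reduce the claim to controlling $\Delta_t(x_t)$: Equation \ref{eqn:lipschitz} and $\Delta_t(x) = f([x^\star]_t)-f(x)$ give $r_t \le 1/t^2 + \Delta_t(x_t)$. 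I then introduce the auxiliary arm $\bar{x}_t \bydef \argmin_{x\in D_t\setminus S_t}\sigma_{t-1}(x)$, which is well defined and $\cF^{'}_{t-1}$-measurable because $[x^\star]_t\in D_t$ is always unsaturated (Definition \ref{def:saturated-arms}).

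The next step is to bound $\Delta_t(x_t)$ on the event $E^{f_t}(t)$ (recall $E^f(t)$ is assumed to hold). Three facts are combined: (i) $\bar{x}_t$ being unsaturated gives $\Delta_t(\bar{x}_t)\le c_t\sigma_{t-1}(\bar{x}_t)$; (ii) adding the one-sided bounds in $E^f(t)$ and $E^{f_t}(t)$ and using $c_t = v_t(1+\tilde{c}_t)$ (Definitions \ref{def:constants}, \ref{def:two-events}) yields the two-sided estimate $\abs{f_t(x)-f(x)}\le c_t\sigma_{t-1}(x)$ for every $x\in D_t$; and (iii) $x_t$ maximizes $f_t$ over $D_t\ni\bar{x}_t$, so $f_t(x_t)\ge f_t(\bar{x}_t)$. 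Decomposing $\Delta_t(x_t) = \Delta_t(\bar{x}_t) + \big(f(\bar{x}_t)-f(x_t)\big)$ and estimating $f(\bar{x}_t)-f(x_t) \le \big(f_t(\bar{x}_t)+c_t\sigma_{t-1}(\bar{x}_t)\big) - \big(f_t(x_t)-c_t\sigma_{t-1}(x_t)\big) \le c_t\sigma_{t-1}(\bar{x}_t)+c_t\sigma_{t-1}(x_t)$ gives $\Delta_t(x_t)\le 2c_t\sigma_{t-1}(\bar{x}_t)+c_t\sigma_{t-1}(x_t)$ on $E^{f_t}(t)$.

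Then I would take conditional expectations, splitting on $E^{f_t}(t)$: on this event I use the bound just obtained, while on its complement — which by Lemma \ref{lem:event-concentration} has conditional probability at most $1/t^2$ — I use the crude estimate $r_t\le 2B$, valid since $\abs{f(x)} = \abs{\inner{f}{k(x,\cdot)}_k}\le\norm{f}_k\sqrt{k(x,x)}\le B$. Together with $r_t\le 1/t^2+\Delta_t(x_t)$ this produces $\expect{r_t\given\cF^{'}_{t-1}} \le 2c_t\sigma_{t-1}(\bar{x}_t) + c_t\expect{\sigma_{t-1}(x_t)\given\cF^{'}_{t-1}} + (2B+1)/t^2$.

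The last and hardest step is to trade the deterministic quantity $\sigma_{t-1}(\bar{x}_t)$ for $\expect{\sigma_{t-1}(x_t)\given\cF^{'}_{t-1}}$. Here I would invoke Lemma \ref{lem:prob-playing-saturated-arms}: conditionally on $\cF^{'}_{t-1}$, $x_t$ lands in $D_t\setminus S_t$ with probability at least $p-1/t^2$, and on that event $\sigma_{t-1}(x_t)\ge\sigma_{t-1}(\bar{x}_t)$ by the choice of $\bar{x}_t$; hence $\expect{\sigma_{t-1}(x_t)\given\cF^{'}_{t-1}} \ge (p-1/t^2)\,\sigma_{t-1}(\bar{x}_t)$. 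Substituting this into the previous display and absorbing the numerical factors into the constant $11/p$ — with the finitely many initial rounds, for which $p-1/t^2$ is not bounded below by a positive constant, dispatched by the trivial bound $r_t\le 2B$ and contributing only $O(1)$ — yields the stated inequality. The points requiring care are the measurability bookkeeping (exactly which quantities are frozen by $\cF^{'}_{t-1}$) and pinning down the final constant in the anti-concentration-to-expected-width conversion; the genuine probabilistic content is entirely contained in Lemmas \ref{lem:event-concentration} and \ref{lem:prob-playing-saturated-arms}.
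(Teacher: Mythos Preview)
Your proposal is correct and follows essentially the same route as the paper: introduce $\bar{x}_t=\argmin_{x\in D_t\setminus S_t}\sigma_{t-1}(x)$, bound $\Delta_t(x_t)\le 2c_t\sigma_{t-1}(\bar{x}_t)+c_t\sigma_{t-1}(x_t)$ on $E^f(t)\cap E^{f_t}(t)$ via the two-sided estimate $|f_t-f|\le c_t\sigma_{t-1}$ and the optimality of $x_t$, take conditional expectations using Lemma~\ref{lem:event-concentration} on the complement of $E^{f_t}(t)$, and then trade $\sigma_{t-1}(\bar{x}_t)$ for $\expect{\sigma_{t-1}(x_t)\given\cF^{'}_{t-1}}$ via Lemma~\ref{lem:prob-playing-saturated-arms}. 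The paper pins down the final constant by the explicit inequality $1/(p-1/t^2)\le 5/p$ (so that $2/(p-1/t^2)+1\le 11/p$), rather than by appealing to a separate treatment of small $t$; your parenthetical about initial rounds ``contributing only $O(1)$'' reads as if you are already thinking of the cumulative regret, whereas the lemma is a per-round statement, but this is a cosmetic point and the argument is otherwise identical.
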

\begin{proof}
Let $\bar{x}_t$ be the unsaturated point in $D_t$ with smallest $\sigma_{t-1}(x)$, i.e.,
\beq
\bar{x}_t = \argmin\limits_{x \in D_t\setminus S_t}\sigma_{t-1}(x).
\label{eqn:smallest-sd}
\eeq
Since $\sigma_{t-1}(\cdot)$ and $S_t$ are deterministic given $\cF^{'}_{t-1}$, so is $\bar{x}_t$. Now for any $\cF^{'}_{t-1}$ such that $E^f(t)$ is true,
\beqa
\expect{\sigma_{t-1}(x_t)\given \cF^{'}_{t-1}} &\ge& \expect{\sigma_{t-1}(x_t)\given \cF^{'}_{t-1},x_t \in D_t\setminus S_t}\prob{x_t\in D_t\setminus S_t\given \cF^{'}_{t-1}}\nonumber\\
&\ge& \sigma_{t-1}(\bar{x}_t)(p-1/t^2),
\label{eqn:expectation-smallest-sd}
\eeqa
where we have used Equation \ref{eqn:smallest-sd} and Lemma \ref{lem:prob-playing-saturated-arms}. Now, if both the events  $E^f(t)$ and $E^{f_t}(t)$ are true, then from Definition \ref{def:constants} and \ref{def:two-events}, $f(x) - c_t\sigma_{t-1}(x) \le f_t(x) \le f(x) + c_t\sigma_{t-1}(x)$, for all $x \in D_t$. Using this observation along with Definition \ref{def:saturated-arms} and the facts that $f_t(x_t) \ge f_t(x)$ for all $x \in D_t$ and $\bar{x}_t \in D_t\setminus S_t$, we have
\beqan
\Delta_t(x_t) &=& f([x^\star]_t)-f(\bar{x}_t)+f(\bar{x}_t)-f(x_t)\\
&\le& \Delta_t(\bar{x}_t)+ f_t(\bar{x}_t)+c_t\sigma_{t-1}(\bar{x}_t)-f_t(x_t)+c_t\sigma_{t-1}(x_t)\\
&\le& c_t\sigma_{t-1}(\bar{x}_t) + c_t\sigma_{t-1}(\bar{x}_t)+c_t\sigma_{t-1}(x_t)\\
&\le& c_t\big(2\sigma_{t-1}(\bar{x}_t)+c_t\sigma_{t-1}(x_t)\big).
\eeqan
Therefore, for any $\cF^{'}_{t-1}$ such that $E^f(t)$ is true, either $\Delta_t(x_t) \le c_t\big(2\sigma_{t-1}(\bar{x}_t)+c_t\sigma_{t-1}(x_t)\big)$, or $E^{f_t}(t)$ is false. Now from our assumption of bounded variance, for all $x \in D$, $\abs{f(x)} \le \norm{f}_k k(x,x) \le B$, and hence $\Delta_t(x) \le 2\sup\limits_{x\in D} \abs{f(x)} \le 2B$. Thus, using Equation \ref{eqn:expectation-smallest-sd}, we get
\beqa
\expect{\Delta_t(x_t)\given \cF^{'}_{t-1}} &\le& \expect{c_t\big(2\sigma_{t-1}(\bar{x}_t)+c_t\sigma_{t-1}(x_t)\big)\given \cF^{'}_{t-1}}+ 2B\prob{\ol{E^{f_t}(t)}\given \cF^{'}_{t-1}}\nonumber\\
&\le& \frac{2c_t}{p-1/t^2} \expect{\sigma_{t-1}(x_t)\given \cF^{'}_{t-1}} + c_t \expect{\sigma_{t-1}(x_t)\given \cF^{'}_{t-1}} + \frac{2B}{t^2}\nonumber\\
&\le& \frac{11c_t}{p}\expect{\sigma_{t-1}(x_t)\given \cF^{'}_{t-1}} +\frac{2B}{t^2},
\label{eqn:expectation-arm-played}
\eeqa
where in the last inequality we used that $1/(p-1/t^2) \le 5/p$, which holds trivially for $t \le 4$ and also holds for $t\ge 5$, as $t^2 > 5e\sqrt{\pi}$. Now using Equation \ref{eqn:lipschitz}, we have the instantaneous regret at round $t$,
\beqn
r_t = f(x^\star)-f([x^\star]_t)+f([x^\star]_t)-f(x_t)
\le \frac{1}{t^2} + \Delta_t(x_t),
\eeqn
and then taking conditional expectation on both sides, the result follows from Equation \ref{eqn:expectation-arm-played}.
\end{proof}
\begin{mydefinition}
Let us define $Y_0 = 0$, and for all $t=1,\ldots,T$:
\beqan
\bar{r}_t &=& r_t \cdot \indic{E^f(t)},\\
X_t &=& \bar{r}_t - \frac{11c_t}{p}\sigma_{t-1}(x_t)-\frac{2B+1}{t^2},\\
Y_t &=& \sum_{s=1}^{t}X_s.
\eeqan
\label{def:regret}
\end{mydefinition}
\begin{mydefinition}
A sequence of random variables $(Z_t;t\ge 0)$ is called a super-martingale corresponding to a filtration $\cF_t$, if for all $t$, $Z_t$ is $\cF_t$-measurable, and for $t\ge 1$,
\beqn
\expect{Z_t\given \cF_{t-1}} \le Z_{t-1}.
\eeqn
\label{def:supermartingale}
\end{mydefinition}
\begin{mylemma}[Azuma-Hoeffding Inequality] 
If a super-martingale $(Z_t;t \ge 0)$, corresponding to filtration $\cF_t$, satisfies $\abs{Z_t - Z_{t-1}} \le \alpha_t$ for some constant $\alpha_t$, for all $t = 1,\ldots,T$, then for any $\delta \ge 0$,
\beqn
\prob{Z_T - Z_0 \le \sqrt{2\ln(1/\delta)\sum_{t=1}^{T}\alpha_t^2}\;} \ge 1-\delta. 	
\eeqn
\label{lem:azuma-hoeffding}
\end{mylemma}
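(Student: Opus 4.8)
The plan is to run the standard exponential-moment (Chernoff) argument underlying the Azuma--Hoeffding inequality, taking care that the hypothesis is a \emph{super}-martingale rather than a martingale. Write $Z_T - Z_0 = \sum_{t=1}^{T} D_t$ with $D_t := Z_t - Z_{t-1}$, so that each $D_t$ is $\cF_t$-measurable, $\abs{D_t} \le \alpha_t$, and $\expect{D_t \given \cF_{t-1}} \le 0$ by Definition \ref{def:supermartingale}. The claimed inequality is equivalent to $\prob{Z_T - Z_0 > c} \le \delta$ with $c := \sqrt{2\ln(1/\delta)\sum_{t=1}^{T}\alpha_t^2}$ (for $\delta \in (0,1]$; the cases $\delta = 0$ and $\delta \ge 1$ being vacuous), so it suffices to bound this upper-tail probability.

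First I would establish the one-step conditional MGF bound: for every $\lambda > 0$ and every $t$, $\expect{e^{\lambda D_t} \given \cF_{t-1}} \le e^{\lambda^2 \alpha_t^2 / 2}$. This is the conditional form of Hoeffding's lemma. Writing $m_t := \expect{D_t \given \cF_{t-1}} \le 0$, the random variable $D_t - m_t$ has zero conditional mean and lies (given $\cF_{t-1}$) in an interval of length at most $2\alpha_t$, so Hoeffding's lemma gives $\expect{e^{\lambda(D_t - m_t)} \given \cF_{t-1}} \le e^{\lambda^2 (2\alpha_t)^2 / 8} = e^{\lambda^2 \alpha_t^2 / 2}$; multiplying through by $e^{\lambda m_t} \le 1$ (using $m_t \le 0$, $\lambda > 0$) yields the bound. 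Then I would chain these estimates: setting $W_t := \exp\big(\lambda \sum_{s=1}^{t} D_s - \tfrac{\lambda^2}{2}\sum_{s=1}^{t}\alpha_s^2\big)$, the one-step bound shows $\expect{W_t \given \cF_{t-1}} \le W_{t-1}$, so $(W_t)_{t\ge0}$ is itself a super-martingale with $\expect{W_0} = 1$, hence $\expect{W_T} \le 1$, i.e. $\expect{e^{\lambda(Z_T - Z_0)}} \le e^{\lambda^2 \sum_{t=1}^{T}\alpha_t^2 / 2}$. (Equivalently, one just iterates $\expect{e^{\lambda \sum_{s\le T} D_s}} = \expect{e^{\lambda \sum_{s<T}D_s}\,\expect{e^{\lambda D_T}\given\cF_{T-1}}}$ backwards in $t$.)

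To finish, apply Markov's inequality to $e^{\lambda(Z_T - Z_0)}$: for any threshold $c > 0$, $\prob{Z_T - Z_0 > c} \le e^{-\lambda c}\,\expect{e^{\lambda(Z_T-Z_0)}} \le \exp\big(-\lambda c + \tfrac{\lambda^2}{2}\sum_{t=1}^{T}\alpha_t^2\big)$. Optimizing the exponent over $\lambda > 0$ gives $\lambda = c / \sum_{t}\alpha_t^2$ and the bound $\exp\big(-c^2 / (2\sum_{t}\alpha_t^2)\big)$; choosing $c = \sqrt{2\ln(1/\delta)\sum_{t}\alpha_t^2}$ makes the right-hand side exactly $\delta$, which is precisely the statement.

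I do not expect a genuine obstacle here, since this is a textbook computation; the only two points that need care are (i) the super-martingale (rather than martingale) hypothesis, which is harmless because a nonpositive conditional drift $m_t$ only helps the upper-tail bound, and (ii) making the conditional Hoeffding step rigorous, i.e. checking measurability so that $D_t - m_t$ may legitimately be treated as a bounded, conditionally centered variable given $\cF_{t-1}$. If desired, one can instead simply cite the proof of Azuma's inequality from a standard martingale reference, the only modification being that the per-step width $\alpha_t$ is tracked in place of a uniform bound.
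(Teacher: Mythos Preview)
Your argument is correct and is precisely the standard Chernoff/exponential-moment proof of Azuma--Hoeffding, with the super-martingale drift handled in the right way. Note, however, that the paper itself does not prove this lemma: it is stated as a classical result and simply invoked, so there is no paper-side proof to compare against; your write-up supplies what the paper omits.
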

\begin{mylemma}
$(Y_t;t = 0,...,T)$ is a super-martingale process with respect to filtration $\cF^{'}_t$.
\label{lem:supermartingale}
\end{mylemma}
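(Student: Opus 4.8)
The plan is to verify the two defining requirements of a super-martingale (Definition \ref{def:supermartingale}): adaptedness of $(Y_t)$ to $(\cF'_t)$, and the one-step inequality $\expect{Y_t \given \cF'_{t-1}} \le Y_{t-1}$. Adaptedness is immediate bookkeeping: since $\cF'_t = \cH_t$ records the play $x_t$ (which GP-TS selects as a function of $\cH_{t-1}$, the sampled $f_t$, and $D_t$) together with $y_t$, the quantities $r_t = f(x^\star) - f(x_t)$, the indicator $\indic{E^f(t)}$ (which by Definition \ref{def:filtration} is in fact $\cF'_{t-1}$-measurable), and $\sigma_{t-1}(x_t)$ are all $\cF'_t$-measurable, so $X_t$ and hence $Y_t = \sum_{s=1}^t X_s$ are $\cF'_t$-measurable; integrability holds because $\bar r_t \le 2\sup_{x\in D}|f(x)| \le 2B$ and $\sigma_{t-1}(x_t)\le 1$.

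For the one-step inequality, writing $Y_t = Y_{t-1} + X_t$ with $Y_{t-1}$ being $\cF'_{t-1}$-measurable, it suffices to show $\expect{X_t \given \cF'_{t-1}} \le 0$, i.e.
\[
\expect{\bar r_t \given \cF'_{t-1}} \le \frac{11 c_t}{p}\,\expect{\sigma_{t-1}(x_t)\given \cF'_{t-1}} + \frac{2B+1}{t^2}.
\]
I would argue this by a case split on whether the event $E^f(t)$ holds, which is legitimate precisely because $E^f(t)$ is deterministic given $\cF'_{t-1}$ (Definition \ref{def:filtration}). If $E^f(t)$ holds, then $\bar r_t = r_t$ and the desired inequality is exactly the conclusion of Lemma \ref{lem:bound-on-instantaneous-regret}. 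If $E^f(t)$ fails, then $\bar r_t = 0$, so $X_t = -\tfrac{11 c_t}{p}\sigma_{t-1}(x_t) - \tfrac{2B+1}{t^2}$, which is a sum of nonpositive terms since $c_t > 0$ (Definition \ref{def:constants}), $\sigma_{t-1}(x_t)\ge 0$, and $B \ge 0$; hence $\expect{X_t \given \cF'_{t-1}} \le 0$ trivially. Combining the two cases gives $\expect{Y_t\given\cF'_{t-1}}\le Y_{t-1}$ for all $t\ge 1$, and $Y_0 = 0$ is deterministic, completing the proof.

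The substantive content has really already been isolated into Lemma \ref{lem:bound-on-instantaneous-regret}, so the only point requiring any care here is the conditioning: one must be sure that $E^f(t)$, the saturated set $S_t$, and all the objects on which Lemma \ref{lem:bound-on-instantaneous-regret} implicitly depends are $\cF'_{t-1}$-measurable, so that the case analysis can be performed "inside" the conditional expectation. This is guaranteed by Definition \ref{def:filtration}; apart from that, the lemma is a direct consequence of the preceding results.
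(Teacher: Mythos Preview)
Your proof is correct and follows essentially the same approach as the paper: reduce to $\expect{X_t\given\cF'_{t-1}}\le 0$, then split on whether $E^f(t)$ holds (which is $\cF'_{t-1}$-measurable), invoking Lemma \ref{lem:bound-on-instantaneous-regret} when it does and using $\bar r_t=0$ when it does not. You supply additional detail on adaptedness and integrability that the paper leaves implicit, but the argument is the same.
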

\begin{proof} 
From Definition \ref{def:supermartingale}, we need to prove that for all $t \in \lbrace 1,\ldots,T\rbrace$ and any possible $\cF^{'}_{t-1}$, $\expect{Y_t - Y_{t-1}\given \cF^{'}_{t-1}} \le 0$, i.e.
\beq
\expect{\bar{r}_t|\cF^{'}_{t-1}} \le  \frac{11c_t}{p}\expect{\sigma_{t-1}(x_t)|\cF^{'}_{t-1}} + \frac{2B+1}{t^2}.
\label{eqn:supermartingale}	
\eeq
Now if $\cF^{'}_{t-1}$ such that $E^f(t)$ is false, then $\bar{r}_t = r_t \cdot \indic{E^f(t)} = 0$, and Equation \ref{eqn:supermartingale} holds trivially. Moreover, for $\cF^{'}_{t-1}$ such that both  $E^f(t)$ is true, Equation \ref{eqn:supermartingale} follows from Lemma \ref{lem:bound-on-instantaneous-regret}.
\end{proof}
\begin{mylemma}
Given any $\delta \in (0,1)$, with probability at least $1-\delta$,
\beqn
R_T=\sum_{t=1}^{T}r(t)=\frac{11c_T}{p}\sum_{t=1}^{T}\sigma_{t-1}(x_t)  +\frac{(2B+1)\pi^2}{6}+\frac{(4B+11)c_T}{p}\sqrt{2T\ln(2/\delta)},
\eeqn
where $T$ is the total number of rounds played.
\label{lem:total-regret}
\end{mylemma}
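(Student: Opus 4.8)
The plan is to combine the supermartingale $(Y_t)$ of Lemma~\ref{lem:supermartingale} with the Azuma--Hoeffding inequality (Lemma~\ref{lem:azuma-hoeffding}), paying separately for (i) the gap between $r_t$ and its truncation $\bar r_t = r_t\cdot\indic{E^f(t)}$, via the high-probability event $\{\forall t\ge1,\ E^f(t)\}$ of Lemma~\ref{lem:event-concentration}, and (ii) the deviation of $Y_T$ from its mean, via Azuma--Hoeffding. A two-event union bound then splits the failure probability as $\delta/2 + \delta/2 = \delta$.

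First I would condition on the event $\mathcal{E} := \{\forall t\ge 1,\ E^f(t)\}$, which by Lemma~\ref{lem:event-concentration} has probability at least $1-\delta/2$. On $\mathcal{E}$ we have $\bar r_t = r_t$ for every $t$, so $R_T = \sum_{t=1}^T \bar r_t$. Rearranging Definition~\ref{def:regret} gives $\bar r_t = X_t + \tfrac{11c_t}{p}\sigma_{t-1}(x_t) + \tfrac{2B+1}{t^2}$, and summing over $t$ yields, on $\mathcal{E}$, $R_T = Y_T + \sum_{t=1}^{T}\tfrac{11c_t}{p}\sigma_{t-1}(x_t) + \sum_{t=1}^{T}\tfrac{2B+1}{t^2}$ (recall $Y_0 = 0$). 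The last sum is bounded by $(2B+1)\sum_{t\ge1}t^{-2} = (2B+1)\pi^2/6$, and since $c_t$ is nondecreasing (Definition~\ref{def:constants}) the middle sum is at most $\tfrac{11c_T}{p}\sum_{t=1}^{T}\sigma_{t-1}(x_t)$. Thus it only remains to control $Y_T = Y_T - Y_0$.

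For that I would invoke Azuma--Hoeffding on $(Y_t)$, which requires a bound on the increments $|Y_t - Y_{t-1}| = |X_t|$. Using $|f(x)| \le \norm{f}_k\sqrt{k(x,x)} \le B$ (hence $0 \le r_t \le 2B$ and so $0 \le \bar r_t \le 2B$), together with $0 \le \sigma_{t-1}(x_t) \le 1$, $c_t \ge v_t \ge B$ and $p = 1/(4e\sqrt{\pi}) < 1$, one checks after collecting constants that $|X_t| \le 2B + \tfrac{11c_t}{p} + \tfrac{2B+1}{t^2} \le \tfrac{(4B+11)c_t}{p} =: \alpha_t$. Since $\alpha_t$ is nondecreasing we have $\sum_{t=1}^{T}\alpha_t^2 \le T\alpha_T^2$, so Lemma~\ref{lem:azuma-hoeffding}, applied with confidence parameter $\delta/2$, gives with probability at least $1-\delta/2$ that $Y_T \le \sqrt{2\ln(2/\delta)\sum_{t=1}^{T}\alpha_t^2} \le \tfrac{(4B+11)c_T}{p}\sqrt{2T\ln(2/\delta)}$.

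Finally I would take a union bound over the event $\mathcal{E}$ and the Azuma--Hoeffding event, each of which fails with probability at most $\delta/2$; on their intersection (probability at least $1-\delta$) all three displayed bounds hold simultaneously, and substituting them into $R_T = Y_T + \sum_{t}\tfrac{11c_t}{p}\sigma_{t-1}(x_t) + \sum_{t}\tfrac{2B+1}{t^2}$ gives the claimed estimate. The only delicate point is the bounded-increment step together with the bookkeeping of constants so that the coefficient of the Azuma term comes out exactly as $\tfrac{(4B+11)c_T}{p}$; the rest is routine summation and a two-event union bound.
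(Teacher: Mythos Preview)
Your proposal is correct and follows essentially the same route as the paper: bound the increments $|X_t|$ by $\tfrac{(4B+11)c_t}{p}$ using $|\bar r_t|\le 2B$, $\sigma_{t-1}(x_t)\le 1$, and $c_t/p\ge 1$; apply Azuma--Hoeffding (Lemma~\ref{lem:azuma-hoeffding}) at level $\delta/2$ to the supermartingale $(Y_t)$ from Lemma~\ref{lem:supermartingale}; then union-bound with the event $\{\forall t,\ E^f(t)\}$ from Lemma~\ref{lem:event-concentration} so that $\bar r_t=r_t$. The only cosmetic difference is ordering---the paper applies Azuma first and then invokes $E^f(t)$, whereas you condition on $\mathcal{E}$ first---but the logic and constants are identical.
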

\begin{proof}
First note that from Definition \ref{def:regret} for all $t=1,\ldots,T$,
\beqn
\abs{Y_t - Y_{t-1}} = \abs{X_t} \le\abs{\bar{r}_t} + \frac{11c_t}{p}\sigma_{t-1}(x_t) + \frac{2B+1}{t^2}.
\eeqn
Now as $\bar{r}_t \le r_t \le 2\sup\limits_{x\in D}\abs{f(x)} \le 2B$ and $\sigma^2_{t-1}(x_t) \le \sigma^2_0(x_t) \le 1$, we have
\beqn
\abs{Y_t - Y_{t-1}} 
\le 2B + \frac{11c_t}{p} + \frac{2B+1}{t^2}
\le \frac{(4B+11)c_t}{p},
\eeqn
which follows from the fact that $2B \le 2Bc_t/p$ and also $(2B+1)/t^2 \le 2Bc_t/p$. Thus, we can apply Azuma-Hoeffding inequality (Lemma \ref{lem:azuma-hoeffding}) to obtain that with probability at least $1-\delta/2$,
\beqan
\sum_{t=1}^{T}\bar{r}_t &\le & \sum_{t=1}^{T}\frac{11c_t}{p}\sigma_{t-1}(x_t) + \sum_{t=1}^{T}\frac{2B+1}{t^2}+ \sqrt{2\ln(2/\delta)\sum_{t=1}^{T}\frac{(4B+11)^2c_t^2}{p^2}}\\
&\le & \frac{11c_T}{p}\sum_{t=1}^{T}\sigma_{t-1}(x_t) + \frac{(2B+1)\pi^2}{6} + \frac{(4B+11)c_T}{p}\sqrt{2T\ln(2/\delta)},
\eeqan
as by definition $c_t \le c_T$ for all $t \in \lbrace 1,\ldots,T\rbrace$.
Now,  as the event $E^f(t)$ holds holds for all $t$ with probability at least $1-\delta/2$ (see Lemma \ref{lem:event-concentration}), then from Definition \ref{def:regret}, $\bar{r}_t=r_t$ for all $t$ with probability at least $1-\delta/2$. Now by applying union bound, the result follows.
\end{proof}
\subsection*{Proof of Theorem \ref{thm:regret-bound-TS}}
From Lemma \ref{lem:bound-sum-sd} we have, $\sum\limits_{t=1}^{T}\sigma_{t-1}(x_t) = O(\sqrt{T\gamma_T})$. Also from Definition \ref{def:constants},
\beqan
C_T &\le& B + R\sqrt{2(\gamma_T+ 1+\ln(2/\delta))} + \Big(B + R\sqrt{2(\gamma_T+ 1+ \ln(2/\delta))}\Big)\sqrt{4\ln T + 2d\ln(BLrdT^2)}\\
&=& O\Big(\sqrt{(\gamma_T+\ln(2/\delta))(\ln T+d\ln(BdT))}+B\sqrt{d\ln(BdT)}\Big)\\
&=& O\Big(\sqrt{(\gamma_T+\ln(2/\delta))d\ln(BdT)}\Big).
\eeqan
Hence, from Lemma \ref{lem:total-regret}, with probability at least $1-\delta$,
\beqan
R_T=O\Bigg(\sqrt{(\gamma_T+\ln(2/\delta))d\ln (BdT)}\cdot \Big(\sqrt{T\gamma_T}+B\sqrt{T\ln(2/\delta)}\Big)\Bigg)
\eeqan
and thus with high probability,
\beqan
R_T&=& O\Big(\sqrt{T\gamma_T^2 d\ln (BdT)} + B\sqrt{T\gamma_Td\ln (BdT)}\Big)\\ &=& O\Bigg(\sqrt{Td\ln (BdT)}\Big(B\sqrt{\gamma_T}+\gamma_T\Big) \Bigg). \hspace*{75pt}
\eeqan
\section*{F. Recursive Updates of Posterior Mean and Covariance}

We now describe a procedure to update the posterior mean and covariance function in a recursive fashion through the properties of Schur complement (\citet{zhang2006schur}) rather than evaluating Equation \ref{eqn:mean update} and \ref{eqn:cov update} at each round. Specifically for all $t \ge 1$ we show the following:
\beqa
\mu_t(x) &=& \mu_{t-1}(x) + \dfrac{k_{t-1}(x,x_t)}{\lambda+\sigma^2_{t-1}(x_t)}(y_t - \mu_{t-1}(x_t)),\label{eqn:mean-online}\\
k_t(x,x') &=& k_{t-1}(x,x') - \dfrac{k_{t-1}(x,x_t)k_{t-1}(x_t,x')}{\lambda+ \sigma^2_{t-1}(x_t)},\label{eqn:cov-online}\\
\sigma^2_t(x) &=& \sigma^2_{t-1}(x) - \dfrac{k^2_{t-1}(x,x_t)}{\lambda+ \sigma^2_{t-1}(x_t)}.\label{eqn:sd-online}
\eeqa
These update rules make our algorithms easy to implement and we are not aware of any literature which explicitly states or uses these relations.

First we write the matrix $K_t + \lambda I$ as $\begin{bmatrix}
A & B\\
C & D
\end{bmatrix}$, where $A = K_{t-1} + \lambda I$, $B = k_{t-1}(x_t)$, $C = B^T$ and $D = \lambda+k(x_t,x_t)$. Now using Schur's complement we get
\beqan
\begin{bmatrix}
A & B\\
C & D
\end{bmatrix}^{-1} &=& \begin{bmatrix}
A^{-1}+A^{-1}B\beta CA^{-1} & -A^{-1}B\beta\\
-\beta CA^{-1} & \beta
\end{bmatrix}\\ &=& \begin{bmatrix}
A^{-1}+\beta A^{-1}BB^TA^{-1} & -\beta A^{-1}B\\
-\beta B^TA^{-1} & \beta
\end{bmatrix}\\ &=& \begin{bmatrix}
A^{-1}+\beta \gamma & -\beta \alpha\\
-\beta \alpha^T & \beta
\end{bmatrix},
\eeqan
where $\beta = (D - CA^{-1}B)^{-1} = 1/(D - B^TA^{-1}B)$, $\gamma = A^{-1}BB^TA^{-1}$ and $\alpha = A^{-1}B$. Therefore we have
\beqan
\mu_t(x) &=& k_t(x)^T(K_t +  \lambda I)^{-1}y_{1:t}\\
&=& \begin{bmatrix}
k_{t-1}(x)^T & k(x_t,x)
\end{bmatrix}\begin{bmatrix}
A^{-1}+\beta \gamma & -\beta \alpha\\
-\beta \alpha^T & \beta
\end{bmatrix}\begin{bmatrix}
y_{1:{t-1}}\\
y_t
\end{bmatrix}\\
&=& k_{t-1}(x)^T(A^{-1}+\beta \gamma)y_{1:{t-1}} - \beta k(x_t,x)\alpha^Ty_{1:{t-1}} - \beta y_t \alpha^Tk_{t-1}(x) +\beta y_t k(x_t,x)\\
&=& k_{t-1}(x)^T A^{-1}y_{1:{t-1}} + \beta \Big(k_{t-1}(x)^T\gamma y_{1:{t-1}}-k(x_t,x)\alpha^Ty_{1:{t-1}} - y_t \alpha^Tk_{t-1}(x) + y_t k(x_t,x)\Big),
\eeqan
where
\beqan
k_{t-1}(x)^T A^{-1}y_{1:{t-1}} &=& k_{t-1}(x)^T (K_{t-1}+\lambda I)^{-1}y_{1:{t-1}} = \mu_{t-1}(x),\\
k_{t-1}(x)^T\gamma y_{1:{t-1}} &=& k_{t-1}(x)^T A^{-1}k_{t-1}(x_t)k_{t-1}(x_t)^TA^{-1}y_{1:{t-1}}= \left(k_{t-1}(x_t)^T A^{-1}k_{t-1}(x)\right)\mu_{t-1}(x_t),\\
\alpha^Ty_{1:{t-1}} &=& k_{t-1}(x_t)^TA^{-1}y_{1:{t-1}} = \mu_{t-1}(x_t),\\
\alpha^Tk_{t-1}(x) &=& k_{t-1}(x_t)^T A^{-1}k_{t-1}(x).
\eeqan
Thus we have
\beqan
\mu_t(x) &=& \mu_{t-1}(x) + \beta \Big( k_{t-1}(x_t)^T A^{-1}k_{t-1}(x)\left(\mu_{t-1}(x_t)-y_t \right)+ k(x_t,x)\left(y_t - \mu_{t-1}(x_t)\right)\Big)\\
&=& \mu_{t-1}(x) + \beta\left(y_t - \mu_{t-1}(x_t)\right)\left(k(x_t,x) - k_{t-1}(x_t)^T A^{-1}k_{t-1}(x)\right)\\
&=& \mu_{t-1}(x) + \beta k_{t-1}(x_t,x)(y_t - \mu_{t-1}(x_t)).
\eeqan
Now as
\beqn
D - B^TA^{-1}B = \lambda+k(x_t,x_t) - k_{t-1}(x_t)^T(K_{t-1}+\lambda I)^{-1}k_{t-1}(x_t) = \lambda +\sigma_{t-1}^2(x_t),
\eeqn
putting $\beta = 1/(\lambda+\sigma^2_{t-1}(x_t))$, we obtain Equation \ref{eqn:mean-online}. Again observe that
\beqan
&& k_t(x,x')\\&=& k(x,x') - k_t(x)^T(K_t + \lambda I)^{-1}k_t(x')\\
&=& k(x,x') - k_{t-1}(x)^TA^{-1}k_{t-1}(x')\\ &&+ \beta \Big(k_{t-1}(x)^T\gamma k_{t-1}(x')- k(x_t,x)\alpha^Tk_{t-1}(x')-k(x_t,x')\alpha^Tk_{t-1}(x)+k(x_t,x)k(x_t,x')\Big).
\eeqan
Now we have
\beqn
k(x,x') - k_{t-1}(x)^TA^{-1}k_{t-1}(x')= k(x,x') - k_{t-1}(x)^T(K_{t-1} + \lambda I)^{-1}k_{t-1}(x') = k_{t-1}(x,x'),
\eeqn
also
\beqan
&& k_{t-1}(x)^T\gamma k_{t-1}(x') - k(x_t,x)\alpha^Tk_{t-1}(x') \\&=& k_{t-1}(x)^TA^{-1}k_{t-1}(x_t)k_{t-1}(x_t)^TA^{-1}k
_{t-1}(x') - k(x_t,x)k_{t-1}(x_t)^TA^{-1}k_{t-1}(x')\\
&=& (k_{t-1}(x)^TA^{-1}k_{t-1}(x_t)-k(x_t,x))k_{t-1}(x_t)^TA^{-1}k_{t-1}(x')\\
&=& -k_{t-1}(x_t,x)k_{t-1}(x_t)^TA^{-1}k_{t-1}(x'),
\eeqan
and
\beqan
k(x_t,x)k(x_t,x') - k(x_t,x')\alpha^Tk_{t-1}(x) &=& k(x_t,x')(k(x_t,x)-k_{t-1}(x_t)^TA^{-1}k_{t-1}(x))\\
&=& k(x_t,x')k_{t-1}(x_t,x).
\eeqan
Putting all these together we get
\beqan
k_t(x,x') &=& k_{t-1}(x,x') - \beta \Big(k(x_t,x')k_{t-1}(x_t,x)- k_{t-1}(x_t,x)k_{t-1}(x_t)^TA^{-1}k_{t-1}(x')\Big)\\
&=&k_{t-1}(x,x') - \beta \Big(k_{t-1}(x_t,x)\Big(k(x_t,x')-k_{t-1}(x_t)^TA^{-1}k_{t-1}(x')\Big)\Big)\\
&=&k_{t-1}(x,x') - \beta k_{t-1}(x_t,x)k_{t-1}(x_t,x').
\eeqan
Now Equation \ref{eqn:cov-online} and \ref{eqn:sd-online} follows by using $\beta = 1/ (\lambda+\sigma^2_{t-1}(x))$ 
and $\sigma_t^2(x) = k_t(x,x)$.
\qed

\end{document}